\DeclareMathAlphabet{\mathdutchcal}{U}{dutchcal}{m}{n}
\theoremstyle{plain}
\newtheorem{theorem}{Theorem}[section]
\newtheorem{proposition}[theorem]{Proposition}
\newtheorem{lemma}[theorem]{Lemma}
\theoremstyle{definition}
\newtheorem{definition}[theorem]{Definition}
\newtheorem{recall}[theorem]{Recall}
\theoremstyle{remark}
\newtheorem{remark}[theorem]{Remark}
\title{Exact Tensor Completion Powered by \\
            Slim Transforms}
\author{
  $\text{Li Ge}^{1}, \text{Lin Chen}^{1}, \text{Yudong Chen}^{2}, \text{Xue Jiang}^{1}$ \\
${}^1 \text{Shanghai Jiao Tong Unversity}$ \\
  $\centering {}^2 \text{University of Wisconsin-Madison}$\\
  \texttt{geli2000an@sjtu.edu.cn}
}
\begin{document}

\maketitle

\begin{abstract}
  In this work, a tensor completion problem is studied, which aims to perfectly recover the tensor from partial observations. The existing theoretical guarantee requires the involved transform to be orthogonal, which hinders its applications. In this paper, jumping out of the constraints of isotropy and self-adjointness, the theoretical guarantee of exact tensor completion with arbitrary linear transforms is established by directly operating the tensors in the transform domain. With the enriched choices of transforms, a new analysis obtained by the proof discloses why slim transforms outperform their square counterparts from a theoretical level. Our model and proof greatly enhance the flexibility of tensor completion and extensive experiments validate the superiority of the proposed method.
\end{abstract}

\section{Introduction}
    Tensors, or multidimensional arrays, are data structures extensively applied in computer vision \cite{Tensor_in_CV_and_DL}, signal processing \cite{Survey_TSP} and machine learning \cite{rabanser2017introduction}. With the ever-growing volume of data, a substantial proportion inherently possesses a multiway nature, including RGB/multispectral/hyperspectral data, medical images \cite{T_M_E}, network traffic flow \cite{LightNestle}, etc., which leads us to organize them into tensors and enables efficient compression, transmission and recovery\cite{Tensor_Decompositions_and_Applications}.
    
    Commonly, it is not easy to satisfactorily compress or recover a tensor without any prior information. However, a specific property or prior of tensors that sheds light on tensor processing is the low-rank property. Enjoying the analogy with matrices, it is conceivable that low-rank tensors will have great advantages being represented, compressed, or recovered \cite{from_matrix_to_tensor}.

    Unfortunately, unlike the matrix case, in which the rank-related theory has been well established, low-rank tensors can be unexpectedly difficult to identify since the concept ``rank'' is not well-defined for tensors. Over the past decades, several definitions of tensor ranks have been proposed, each with its limitations. For instance, the well-known CP rank \cite{Tensor_Decompositions_and_Applications} encounters numerical problems and has no tractable convex relaxation, i.e., nuclear norm, which is a useful tool powered by convex optimization and has proven its success in matrix recovery\cite{matrix_recovery}, matrix completion \cite{the_power_of_CVP_matrix_completion} and robust principle component analysis (RPCA) \cite{Candes_RPCA}. Another category of definitions stems from the tensor multilinear rank, in which the rank of tensor $\mathcal{X}$ is a vector with each entry being the rank of a specified unfolding of $\mathcal{X}$. Such form of tensor rank includes Tucker \cite{Tucker}, tensor train \cite{oseledets2011tensortrain}, and tensor ring \cite{zhao2016tensorring} while their theoretical guarantees can be inferior to the matrix case.

    In recent years, leveraging the tensor-tensor product and tensor singular value decomposition (t-SVD), a new tensor rank, i.e., tensor tubal rank is developed for 3-way tensors \cite{original_TNN} and can be extended to high order cases \cite{high_order_t-SVD}. By viewing tensor fibers of the third dimension as tubes, a crucial concept in the methods using t-SVD is the applied transform. A good transform should strengthen the low-rankness of the tensors, i.e., making the tensor more favorable for recovering in the transform domain than in the original domain. First, in \cite{original_TNN}, the tensor nuclear norm (TNN) is induced by discrete Fourier transform (DFT). Equipped with TNN, tensor completion and tensor RPCA \cite{Lu_TRPCA} enjoy the exact recovery guarantee. Further, by extending DFT to any invertible orthogonal transform $\mathbf{L}$ and encapsulating $\mathbf{L}(\cdot)$ into the induced tensor-tensor product $\star_{\mathbf{{L}}}$, the exact recovery can still be guaranteed with enriched choices of the transforms \cite{Lu_CVPR,Lu_ICCV}.

    However, increasing experimental phenomena from recent works \cite{Framelet_TNN,S2NTNN,dctionary_learning_TNN} indicate that compared to the square orthogonal transforms, ``slim'' and non-invertible transforms that map the original tensor into bigger transform domain can achieve better recovery accuracy and flexibility (some of the transforms are self-adaptive) in tensor completion. However, the existing theoretical result requires isotropy and self-adjointness (both are properties of orthogonal transforms) and thus fails to guarantee the recovery powered by these non-invertible transforms. Moreover, the determination of the transforms is empirical and lacks theoretical justification as to why slim transforms are better.

    In this paper, jumping out of the constraints of isotropy and self-adjointness, we prove that exact tensor completion with arbitrary linear transforms can be achieved with high probability. To that end, tensors need to be operated directly in the transform domain different from existing proofs \cite{original_TNN,Lu_CVPR}. Specifically, among the enriched choices of the transforms that enjoy the exact completion guarantee, a new analysis gained through the proof is presented for the first time to show why slim transforms outperform square transforms from a theoretical level. Our proof of the main theorem is completely new and challenging as the operator is mingled with the transform $\mathbf{T}$ and the crucial norm-controlling lemmas need to be re-established. Since we require neither isotropy nor self-adjointness upon the linear transform, our model and proof are applicable to much more general transforms. Extensive experiments on random data and visual data validate the superiority of the proposed method.

    The remainder of this paper is organized as follows. Section 2 introduces the notations and tensor operators defined directly on the transform domain. In Section 3, we introduce the new tensor completion program powered by arbitrary linear transforms and provide the exact completion theoretical guarantee. Section 4 provides numerical experiments and a new analysis. The conclusion is drawn in Section 5. The detailed proof/analysis/algorithms are organized in the appendices.
    \vspace{-0.25cm}

\section{Defining Tensor Operators Directly on the Transform Domain}
    In this section, we introduce notations used throughout the paper and give definitions of tensor-tensor product, t-SVD, and norms that are directly defined on the transform domain.
    \vspace{-0.3cm}
    \paragraph*{Notations}
        Lowercase letters (e.g.\ $a$) denote scalars; their boldface versions (e.g. $\mathbf{a}$) denote vectors. Matrices and tensors are denoted by capital boldface letters (e.g.\ $\mathbf{A}$) and calligraphic letters (e.g.\ $\mathcal{A}$), respectively. $(\cdot)^{\text{T}}, (\cdot)^{\text{H}}$ and $(\cdot)^{\dag}$ denote transpose, adjoint/Hermitian transpose and pseudo-inverse, respectively. For indexing vectors, matrices, and tensors, we use $\mathcal{A}_{ijk}$ or $\mathcal{A}(i, j, k)$ to denote the $(i, j, k)$th entry of $\mathcal{A}$ and use $:$ to represent a full selection in the corresponding mode. 
        
        $\langle\cdot, \cdot\rangle$ denotes inner product. $\Vert\cdot\Vert_{\text{F}}$ and $\Vert\cdot\Vert_{\infty}$ stand for the Frobenius norm and infinity norm of vectors/matrices/tensors, respectively. $\Vert\cdot\Vert$ or $\Vert\cdot\Vert_{\text{op}}$ stands for spectral/operator norm (induced by $\Vert\cdot\Vert_{\text{F}}$ in both domains unless otherwise specified). $\Vert\cdot\Vert_*$ is the nuclear norm. $\Vert\cdot\Vert_{1\mapsto2}$ and $\Vert\cdot\Vert_{2\mapsto\infty}$ are the largest $\ell_2$ norm of the matrix/tensor columns and rows, respectively, and $\Vert\cdot\Vert_{\infty, 2} = \max(\Vert\cdot\Vert_{1\mapsto2}, \Vert\cdot\Vert_{2\mapsto\infty})$. $\text{Ber}(p)$ denotes Bernoulli sampling with rate $p$. By \textit{with high probability} (w.h.p.), we mean with probability $\ge 1 - c_1(n_1 + n_2)^{-c_2}$, where $c_1, c_2 > 0$ are universal constants. 

        For operators, $\times_3$ denotes the mode-$3$ product. $\texttt{vec}(\cdot), \texttt{bdiag}(\cdot)$ and $\texttt{fold}(\cdot)$ denote vectorization, organizing the frontal slices of a tensor into a block diagonal matrix and its inverse operation, respectively. $\sigma_{\max}(\cdot)$/$\sigma_{\min}(\cdot)$ stands for taking the maximum/minimum singular value of a matrix. $(\mathcal{A})_{+}$ denotes taking the maximum of $0$ and each entry of $\mathcal{A}$. $x \vee y$ means taking the larger of $x, y$.

    \subsection{Difficulties Encountered with Existing Definitions}
        Assume an arbitrary linear transform $\mathit{T}(\cdot)$ is given by $\mathbf{T} \in \mathbb{C}^{N_3 \times n_3}$ and $\mathit{T}(\mathcal{A}) = \mathcal{A} \times_3 \mathbf{T}$ for $\mathcal{A} \in \mathbb{C}^{n_1 \times n_2 \times n_3}$, let us first attempt to use the transform-induced tensor-tensor product $\star_{\mathbf{T}}$ to define tensor nuclear norm following the methodology in \cite{Lu_CVPR}. With transform-induced $\star_{\mathbf{T}}$ and subsequent t-SVD. One can define the tensor spectral norm as $\Vert \mathcal{A} \Vert := \Vert \texttt{bdiag}(\mathit{T}(\mathcal{A})) \Vert$ for a tensor $\mathcal{A}$.
        It is tempting to define the tensor nuclear norm as the dual of the tensor spectral norm, still following \cite{Lu_CVPR}:
        \begin{align}
            \Vert \mathcal{A} \Vert_* := \sup_{\Vert \mathcal{B} \Vert \leq 1} \langle \mathcal{A}, \mathcal{B} \rangle \nonumber = \sup_{\Vert \texttt{bdiag}(\mathit{T}(\mathcal{B})) \Vert \leq 1} \langle \mathcal{A}, \mathcal{B} \rangle. \nonumber
        \end{align}
        Now the problem of losing the norm-preserving property occurs since $\forall \mathcal{A}, \mathcal{B}, \langle \mathcal{A}, \mathcal{B} \rangle = \langle \mathit{T}(\mathcal{A}), \mathit{T}(\mathcal{B}) \rangle$ holds only when $\mathbf{T}$ is orthogonal/unitary (or with some constant $l$). Therefore, although such a manner of definition that wraps the transform into the element-element operation does bring convenience and elegance in form, the ability for generalization is hindered. In this paper, we show that the operations to push tensors from the orginal domain to the transform domain and the pull-back can be redundant, i.e., tensors can be directly operated in the transform domain. To this aim, tensor operators without the transform involved are given in the remainder of this section.

    \subsection{Tensor Operators Defined Directly on the Transform Domain}
        First, we introduce the tensor-tensor product defined directly on the transform domain.
        \begin{definition} [\textbf{Tensor-tensor product}]\footnote{Such slice-wise operation has also appeared in \cite{KERNFELD2015545}. However, the standing points are different: \cite{KERNFELD2015545} defines "face-wise product" for the convenience of tensor product in the transform domain whereas it still operates the tensors in the original domain. While in this work, tensors are directly operated in the transform domain.}
            \label{def:natural_tensor-tensor_product}
            For two tensors $\mathcal{A} \in \mathbb{C}^{n_1 \times n_2 \times n_3}, \mathcal{B} \in \mathbb{C}^{n_2 \times l \times n_3}$, tensor-tensor product between $\mathcal{A}$ and $\mathcal{B}$ is denoted as $\mathcal{C} = \mathcal{A} \star \mathcal{B} \in \mathbb{C}^{n_1 \times l \times n_3}$, such that $\mathcal{C}(:, :, k) = \mathcal{A}(:, :, k) \mathcal{B}(:, :, k), k = 1, \hdots, n_3$. This process is given in Algorithm \ref{alg:natural_tensor-tensor_product}.
        \end{definition}
        It is evident that such tensor-tensor product is simply the matrix-matrix product on each frontal slice thus a few relative definitions can be given as expected.
        
        \begin{minipage}{0.36\textwidth}
            \vskip -0.15 in
            \begin{algorithm}[H]
                \caption{\scriptsize Tensor-tensor product}
                \label{alg:natural_tensor-tensor_product}
                \begin{algorithmic}
                \STATE {\bfseries Input:} $\mathcal{A} \in \mathbb{C}^{n_1 \times n_2 \times n_3}, \mathcal{B} \in \mathbb{C}^{n_2 \times l \times n_3}$
                \STATE {\bfseries Output:} $\mathcal{C} \in \mathbb{C}^{n_1 \times l \times n_3}$
                \FOR{$k = 1$ {\bfseries to} $n_3$}
                    \STATE $\mathcal{C}(:, :, k) = \mathcal{A}(:, :, k) \mathcal{B}(:, :, k)$;
                \ENDFOR
                \end{algorithmic}
            \end{algorithm}
        \end{minipage}
        \hfill
        \begin{minipage}{0.64\textwidth}
        \vskip -0.15 in
            \begin{algorithm}[H]
                \caption{\scriptsize T-SVD}
                \label{alg:natural_t-SVD}
                \begin{algorithmic}
                \STATE {\bfseries Input:} $\mathcal{A} \in \mathbb{C}^{n_1 \times n_2 \times n_3}$
                \STATE {\bfseries Output:} $\mathcal{U} \in \mathbb{C}^{n_1 \times n_1 \times n_3}, \mathcal{S} \in \mathbb{C}^{n_1 \times n_2 \times n_3}, \mathcal{V} \in \mathbb{C}^{n_2 \times n_2 \times n_3}$
                \FOR{$k = 1$ {\bfseries to} $n_3$}
                    \STATE $[\mathbf{U}, \mathbf{S}, \mathbf{V}] = \text{SVD}(\mathcal{A}(:, :, k))$;
                    \STATE $\mathcal{U}(:, :, k) = \mathbf{U}, \mathcal{S}(:, :, k) = \mathbf{S}, \mathcal{V}(:, :, k) = \mathbf{V}$;
                \ENDFOR
                \end{algorithmic}
            \end{algorithm}
        \end{minipage}

        \begin{definition}[\textbf{Tensor (conjugate) transpose}]
            \label{def:tensor_conjugate_transpose}
            Given a tensor $\mathcal{A} \in \mathbb{C}^{n_1 \times n_2 \times n_3}$, the tensor transpose $\mathcal{A}^{\text{T}} \in \mathbb{C}^{n_2 \times n_1 \times n_3}$ can be defined following $\mathcal{A}^{\text{T}}(:, :, k) = [\mathcal{A}(:, :, k)]^{\text{T}}$. The tensor conjugate transpose $\mathcal{A}^{\text{H}}$ can be defined as $\mathcal{A}^{\text{H}}(:, :, k) = [\mathcal{A}(:, :, k)]^{\text{H}}$.
        \end{definition}
        
        \begin{definition}[\textbf{Identity tensor}]
            \label{def:identity_tensor}
            The identity tensor $\mathcal{I} \in \mathbb{C}^{n_1 \times n_1 \times n_3}$ is defined such that each frontal slice of $\mathcal{I}$ is an identity matrix $\mathbf{I} \in \mathbb{C}^{n_1 \times n_1}$.
        \end{definition}

        \begin{definition}[\textbf{Unitary tensor}]
            \label{def:unitray_tensor}
            A tensor $\mathcal{U} \in \mathbb{C}^{n_1 \times n_1 \times n_3}$ is called unitary if $\mathcal{U} \star \mathcal{U}^{\text{H}} = \mathcal{I}$.
        \end{definition}

        \begin{definition}[\textbf{T-SVD}]
            \label{def:natural_t-SVD}
            Tensor SVD (t-SVD) of $\mathcal{A} \in \mathbb{C}^{n_1 \times n_2 \times n_3}$is performing matrix SVD for each frontal slice of $\mathcal{A}$. The process of computing t-SVD is given in Algorithm \ref{alg:natural_t-SVD}.
        \end{definition}
        After performing t-SVD defined as above, one can verify that $\mathcal{U}, \mathcal{V}$ decomposed are both unitary according to Definition \ref{def:unitray_tensor}.

        With t-SVD, we are ready to further define the tensor rank and the tensor spectral norm.
        \begin{definition}[\textbf{Tensor tubal rank}]
            \label{def:tensor_tubal_rank}
            For $\mathcal{A} \in \mathbb{C}^{n_1 \times n_2 \times n_3}$ with t-SVD $\mathcal{A} = \mathcal{U} \star \mathcal{S} \star \mathcal{V}^{\text{H}}$, the tensor tubal rank of $\mathcal{A}$ is defined as the number of nonzero tubes of $\mathcal{S}$, i.e., $\text{rank}(\mathcal{A}) = \#\{i, \mathcal{S}(i, i, :) \neq \mathbf{0}\}$.
        \end{definition}
        Sharing the analogy with matrices, we also have skinny t-SVD as $\mathcal{A} = \mathcal{U}_r \star \mathcal{S}_r \star \mathcal{V}_r^{\text{H}}$ with $\mathcal{U}_r \in \mathbb{C}^{n_1 \times r \times n_3}, \mathcal{S}_r \in \mathbb{C}^{r \times r \times n_3}, \mathcal{V}_r \in \mathbb{C}^{n_2 \times r \times n_3}$, where $r = \text{rank}(\mathcal{A})$. Since the skinny t-SVD explicitly unveils the subspace of $\mathcal{A}$, we use it when referring to ``t-SVD'' throughout the paper. 

        \begin{definition}[\textbf{Tensor spectral norm}]
            \label{def:tensor_spectral_norm}
            For a tensor $\mathcal{A} \in \mathbb{C}^{n_1 \times n_2 \times n_3}$ with t-SVD $\mathcal{A} = \mathcal{U} \star \mathcal{S} \star \mathcal{V}^{\text{H}}$, the tensor spectral norm is the maximum value of $\mathcal{S}$, i.e., $\Vert \mathcal{A} \Vert = \max_{i, k} \mathcal{S}(i, i, k)$.
            \label{def:tensor_spectral_norm}
        \end{definition}
        It can be verified that the tensor spectral norm of $\mathcal{A}$ in Definition \ref{def:tensor_spectral_norm} is the operator norm induced by Frobenius norm in both domains as we treat ``$\mathcal{A} \star (\cdot)$'' as an operator and use the fact that the matrix operator norm is the matrix spectral norm. 
        
        Now we can define the tensor nuclear norm as the dual norm of the tensor spectral norm:\footnote{One can verify that the tensor nuclear norm that follows Definition \ref{def:tensor_nuclear_norm} is indeed the dual norm of the tensor spectral norm.}
        \begin{definition}[\textbf{Tensor nuclear norm}]
            \label{def:tensor_nuclear_norm}
            For a tensor $\mathcal{A} \in \mathbb{C}^{n_1 \times n_2 \times n_3}$ with t-SVD $\mathcal{A} = \mathcal{U} \star \mathcal{S} \star \mathcal{V}^{\text{H}}$, the tensor spectral norm is defined as $\Vert \mathcal{A} \Vert_* = \sum_{k=1}^{n_3}\sum_{i=1}^{r} \mathcal{S}(i, i, k)$.
        \end{definition}

\section{Exact Tensor Completion Guarantee with Arbitrary Linear Transforms}
    Equipped with these concepts directly built on the transform domain, we now consider completing a tensor with an arbitrary linear transform. In this section, under certain tensor incoherence conditions, a theoretical guarantee to achieve exact tensor completion with arbitrary linear transforms is established.

    \subsection{The Optimization Program}
        The program we consider can be expressed as
        \begin{equation}
            \begin{aligned}
                \min_{\mathcal{X}} \Vert \mathit{T}(\mathcal{X})\Vert_{*} ~~~\text{s.t.}~\mathcal{S}_{\Omega}(\mathcal{X}) = \mathcal{S}_{\Omega}(\mathcal{M}), 
            \end{aligned}
            \label{eq:program:min_transformed_TNN}
        \end{equation}
        where $\mathit{T}: \mathbb{C}^{n_3}\mapsto\mathbb{C}^{N_3}$ is a linear injection, $\mathit{T}(\mathcal{X}) = \mathcal{X} \times_3 \mathbf{T}, \mathbf{T} \in \mathbb{C}^{N_3 \times n_3}$. $\mathcal{X}, \mathcal{M} \in \mathbb{C}^{n_1 \times n_2 \times n_3}$, $\mathcal{S}_{\Omega}$ is a sampling operator with $\Omega \sim \text{Ber}(p)$. $[\mathcal{S}_{\Omega}(\mathcal{X})]_{ijk} = \mathcal{X}_{ijk}, (i, j, k) \in \Omega$ and equaling $0$ elsewhere. When $\mathbf{T}$ is the discrete Fourier transform (DFT), Program \eqref{eq:program:min_transformed_TNN} regresses to the optimization problem considered in existing TNN works such as\cite{original_TNN}. 
        
        Program \eqref{eq:program:min_transformed_TNN} can be solved by the standard ADMM framework, with details provided in Appendix \ref{section:solving_main_program}.

        Program \eqref{eq:program:min_transformed_TNN} is motivated by a natural idea: we wish to recover tensor $\mathcal{X}$ from partial observations $\Omega$, while the intrinsic low-rank property of $\mathcal{X}$ itself might not be significant enough, Thus, we seek to minimize its nuclear norm in the transform domain characterized by a transform $\mathbf{T}$. Obviously, the transform $\mathbf{T}$ plays a direct and crucial role in the success of the completion. Initially, this is only considered in the context of DFT representing the idea that convolution can well capture the structure along the third mode of the data \cite{original_TNN}. The transform is further generalized to any invertible orthogonal transforms, e.g. discrete cosine transform (DCT) and discrete sine transform (DST), while still maintaining the exact recovery guarantee \cite{Lu_CVPR}.

        However, the requirements on invertibility and orthogonality limit our choices of transforms. The low-rank property might become more pronounced when the transform maps $\mathcal{X}$ into a larger space, which is not invertible ($N_3 > n_3$) \cite{Framelet_TNN,S2NTNN,dctionary_learning_TNN}. Thus, a recovery guarantee overcoming the limitations imposed by isotropy and self-adjointness is desired, which is answered in the next two subsections.

        \subsection{Tensor Incoherence Conditions}
            \label{section:tensor_inc_cond}
            To establish such a guarantee, we need tensor incoherence conditions. First, the tensor bases in the transform domain are introduced:
            \begin{definition}[\textbf{Tensor basis}]
                \label{def:tensor_basis}
                We denote $\xi_i \in \mathbb{C}^{n_1 \times 1 \times N_3}$ as the tensor column basis with $\xi_i(i, 1, :) = 1$ and equalling $0$ elsewhere. $\xi_j^{\text{H}} \in \mathbb{C}^{1 \times n_2 \times N_3}$ denotes the tensor row basis with $\xi_j^{\text{H}}(1, j, :) = 1$ and equalling $0$ elsewhere. The tensor frontal slice basis is denoted by $\zeta_k \in \mathbb{C}^{1 \times 1 \times N_3}$ with $\zeta_k(1, 1, k) = 1$ and equalling $0$ elsewhere. Figure \ref{fig:sketch_tensor_basis} provides an illustration. 
            \end{definition}

            \begin{figure}[ht]
                \vskip -0.1 in
                \centering
                \includegraphics[width=0.3\textwidth, height=0.12\textwidth]{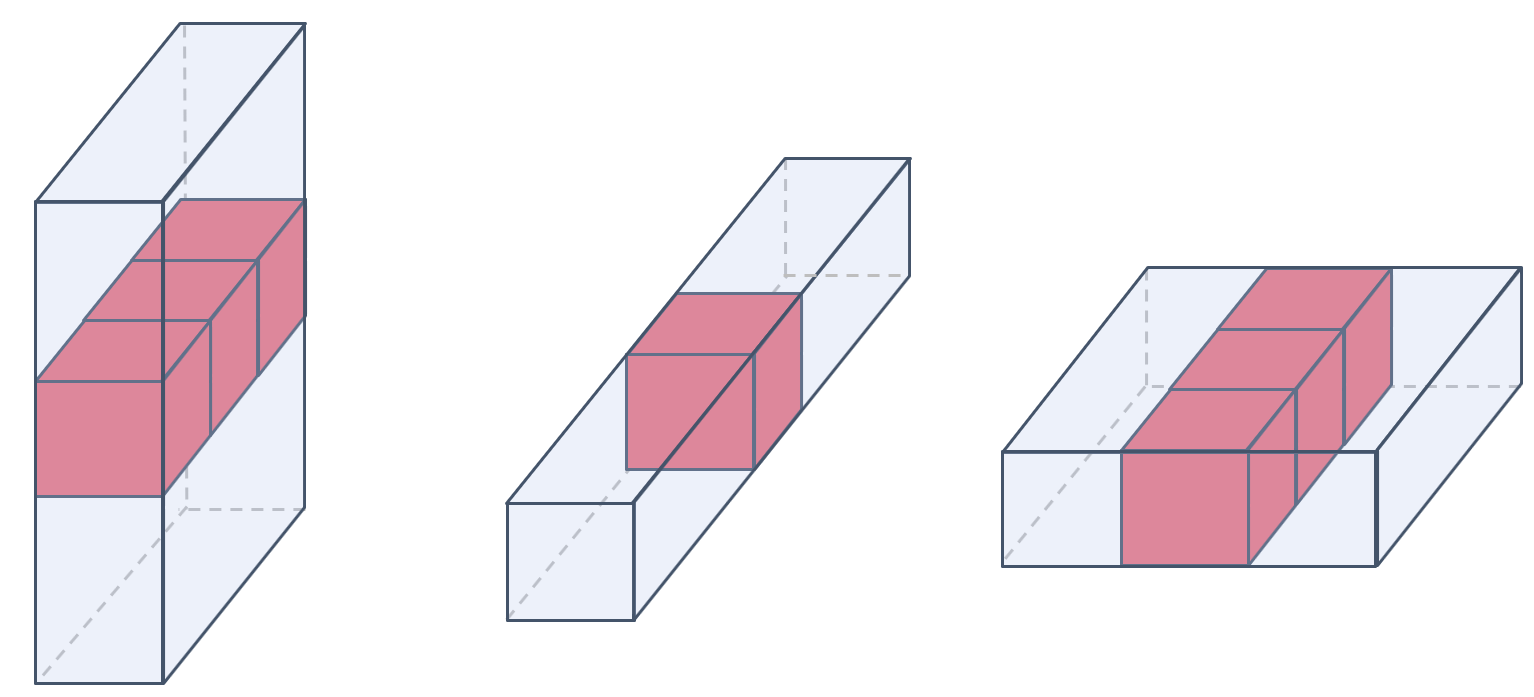}
                \caption{\scriptsize Tensor basis $\xi_i, \zeta_k$ and $\xi_j^{\text{H}}$ from left to right. The red cubes represent $1$ and rest represent $0$.}
                \label{fig:sketch_tensor_basis}
                \vskip -0.1 in
            \end{figure}
            
            To ensure the energy of the singular tensors is sufficiently spread, we need the following tensor incoherence conditions:
            \begin{definition}[\textbf{Tensor incoherence conditions}]
                Let $\mathit{T}(\mathcal{X}) = \mathcal{U} \star \mathcal{S} \star \mathcal{V}^{\text{H}}$. We say that Program \eqref{eq:program:min_transformed_TNN} satisfies tensor incoherence condition with parameters $ \mu > 0, \nu > 0$ if it holds that 
                \begin{align}
                    \max_{i}~\Vert \mathcal{U}^{\text{H}} \star \xi_{i} \Vert_{\text{F}} & \leq \sqrt{\frac{\mu r N_3}{n_1}}, \quad \label{eq:inc_cond_U_V}
                    &
                    \max_{j}~\Vert \mathcal{V}^{\text{H}} \star \xi_{j} \Vert_{\text{F}} & \leq \sqrt{\frac{\mu r N_3}{n_2}}; 
                    \\
                    \max_{ik}~\Vert \mathcal{U}^{\text{H}} \star \xi_i \star \mathit{T}(\zeta_k) \Vert_{\text{F}} & \leq \sqrt{\frac{\nu r}{n_1}} \Vert \mathbf{T} \Vert_{1\mapsto2} , \quad 
                    &
                    \max_{jk}~\Vert \mathcal{V}^{\text{H}} \star \xi_j \star \mathit{T}(\zeta_k) \Vert_{\text{F}} & \leq \sqrt{\frac{\nu r}{n_2}} \Vert \mathbf{T} \Vert_{1\mapsto2} .\label{eq:inc_cond_U_V_T}
                \end{align}
            \end{definition}
            Parameters $\mu, \nu$ reflect the energy distribution of the singular tensors of $\mathit{T}(\mathcal{X})$: When the energy of $\mathcal{U}(:, :, k), \mathcal{V}(:, :, k)$ is most uniformly distributed, $\mu, \nu$ have the minimum values of $1$; When $\mathcal{U}(:, :, k)$ or $\mathcal{V}(:, :, k))$ contains an identity matrix, $\mu, \nu$ have the maximum values of $\frac{n_1}{r}, \frac{n_2}{r}$, respectively.

            \begin{remark}
                When $\mathbf{T}$ is an orthogonal matrix, \eqref{eq:inc_cond_U_V_T} reduces to (14) and (15) in \cite{Lu_ICCV}. Also \cite{Lu_ICCV} states that if letting $\mu = \nu$,  \eqref{eq:inc_cond_U_V_T} can be derived from \eqref{eq:inc_cond_U_V}, which means that \eqref{eq:inc_cond_U_V} is stronger and only one set of incoherence conditions is needed. In fact, though if $\nu < 1$ there for sure exists some $\mu < 1$ and vice versa, $\mu = \nu$ does not necessarily hold. Thus for the precision of our model and proof, we keep both incoherence conditions.
            \end{remark}

        \subsection{Exact Completion Guarantee}
            Now we give the exact tensor completion guarantee with arbitrary linear transforms:
            \begin{theorem}
                Suppose $(\mathcal{X}, \mathit{T})$ satisfies \eqref{eq:inc_cond_U_V}-\eqref{eq:inc_cond_U_V_T} with $\mu, \nu > 0$, $\lambda = \max(\mu, \nu)$, then there exist constants $c_0, c_1, c_2> 0$ such that if 
                \begin{align}
                    p \geq &c_0 (\kappa^2(\mathbf{T}) + \rho^2(\mathbf{T}))\frac{\lambda r (n_1 + n_2)}{n_1 n_2} \log^2(\kappa(\mathbf{T})(n_1 + n_2)N_3), 
                    \label{eq:sampling_rate_p}
                \end{align}
                then $\mathcal{X}$ is the unique optimal solution to \eqref{eq:program:min_transformed_TNN} with probability at least $1 - c_1((n_1 + n_2)N_3)^{-c_2}$, where $\kappa(\mathbf{T}) = \frac{\sigma_{\max}(\mathbf{T})}{\sigma_{\min}(\mathbf{T})}$ is the condition number of $\mathit{T}$ and $\rho(\mathbf{T}) = N_3\max(\Vert \mathbf{T} \Vert_{\infty}^2, \Vert \mathbf{T}^{\dag} \Vert_{\infty}^2)$.
                \label{theorem:exact_completion}
                \vskip -0.1 in
        \end{theorem}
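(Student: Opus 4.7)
The plan is to follow the dual-certificate paradigm of Cand\`es--Recht--Gross, adapted to the new transform-domain tensor nuclear norm of Definition \ref{def:tensor_nuclear_norm}. Every step has to be re-derived because $\mathit{T}$ is neither an isometry nor self-adjoint. Let the skinny natural t-SVD of $\mathit{T}(\mathcal{M})$ be $\mathcal{U}\star\mathcal{S}\star\mathcal{V}^{\text{H}}$ and define the tangent space
\begin{align*}
\mathcal{T}=\{\mathcal{U}\star\mathcal{A}^{\text{H}}+\mathcal{B}\star\mathcal{V}^{\text{H}}:\mathcal{A},\mathcal{B}\ \text{conformable}\}
\end{align*}
in the transform domain, with orthogonal projector $P_{\mathcal{T}}$. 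I would first reduce the claim to showing the existence of an inexact dual certificate $\mathcal{Y}$ (in the transform domain) satisfying (i) $\mathit{T}^{\text{H}}(\mathcal{Y})$ is supported on $\Omega$, (ii) $\Vert P_{\mathcal{T}}(\mathcal{Y})-\mathcal{U}\star\mathcal{V}^{\text{H}}\Vert_{\text{F}}$ is much smaller than $1$, and (iii) $\Vert P_{\mathcal{T}^{\perp}}(\mathcal{Y})\Vert<1/2$. Combined with an injectivity bound that precludes nonzero feasible perturbations $\mathcal{H}$ with $P_{\mathcal{T}^{\perp}}(\mathit{T}(\mathcal{H}))=0$, the tensor subgradient inequality for $\Vert\cdot\Vert_{*}$ then forces $\Vert \mathit{T}(\mathcal{M}+\mathcal{H})\Vert_{*}>\Vert \mathit{T}(\mathcal{M})\Vert_{*}$ for every feasible $\mathcal{H}\neq 0$, giving uniqueness.

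Second, I would prove the key norm-controlling lemmas: a concentration statement of the form $\Vert P_{\mathcal{T}}\mathcal{A}P_{\mathcal{T}}-P_{\mathcal{T}}\Vert_{\text{op}}\leq 1/2$ w.h.p., for an appropriately constructed random operator $\mathcal{A}$ built from $\mathit{T},\mathit{T}^{\dagger}$ and $\mathcal{S}_{\Omega}$ whose expectation acts as the identity on $\mathcal{T}$, together with companion $\Vert\cdot\Vert_{\infty,2}$ and $\Vert\cdot\Vert_{\infty}$ bounds for residual tensors used along the golfing iterations. The strategy is to decompose $\mathcal{A}$ as a sum of independent mean-zero rank-one random operators indexed by $\Omega$ and apply the matrix Bernstein inequality. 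The per-summand operator-norm and variance bounds invoke the incoherence assumptions \eqref{eq:inc_cond_U}--\eqref{eq:inc_cond_V_T}, the condition number $\kappa(\mathbf{T})$ (which quantifies the $\Vert\cdot\Vert_{\text{F}}$ mismatch between the original and transform domains), and the transform coherence $\rho(\mathbf{T})$ (which bounds entry-wise magnitudes of $\mathit{T}(\zeta_k),\mathit{T}^{\dagger}(\zeta_k)$-type tensors). It is exactly here that the factors $\kappa^{2}(\mathbf{T})+\rho^{2}(\mathbf{T})$ in \eqref{eq:sampling_rate_p} are produced.

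Third, I would construct the certificate through Gross's golfing scheme. Partition $\Omega=\bigcup_{\ell=1}^{L}\Omega_{\ell}$ into $L\asymp\log(\kappa(\mathbf{T})(n_1+n_2)N_3)$ i.i.d.\ Bernoulli batches of rate $q$, with the aggregate rate matching \eqref{eq:sampling_rate_p}. Initialize $\mathcal{W}_0=\mathcal{U}\star\mathcal{V}^{\text{H}}$ and iterate
\begin{align*}
\mathcal{W}_{\ell}=\mathcal{W}_{\ell-1}-q^{-1}P_{\mathcal{T}}\mathit{T}\mathcal{S}_{\Omega_{\ell}}\mathit{T}^{\dagger}(\mathcal{W}_{\ell-1}),\qquad \mathcal{Y}=\sum_{\ell=1}^{L}q^{-1}\mathit{T}\mathcal{S}_{\Omega_{\ell}}\mathit{T}^{\dagger}(\mathcal{W}_{\ell-1}).
\end{align*}
Then (i) is automatic from the construction (each summand has $\mathit{T}^{\text{H}}$-image supported on $\Omega_\ell$), (ii) is obtained from the operator-norm contraction of step two since $\Vert\mathcal{W}_{\ell}\Vert_{\text{F}}$ decays geometrically, and (iii) is checked term-by-term by applying the tensor spectral Bernstein inequality (re-derived for the spectral norm of Definition \ref{def:tensor_spectral_norm}) to each $P_{\mathcal{T}^{\perp}}q^{-1}\mathit{T}\mathcal{S}_{\Omega_{\ell}}\mathit{T}^{\dagger}(\mathcal{W}_{\ell-1})$, exploiting the geometric decay of $\mathcal{W}_{\ell-1}$ in $\Vert\cdot\Vert_{\infty,2}$ and $\Vert\cdot\Vert_{\infty}$. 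The squared logarithmic factor in \eqref{eq:sampling_rate_p} arises from combining the $L$ iterations with per-iteration Bernstein tails.

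The main obstacle, as emphasized in the introduction, is that the random sampling $\mathcal{S}_{\Omega}$ lives in the original domain while the TNN geometry and tangent space live in the transform domain, tied together by the non-isotropic rectangular $\mathit{T}$. Consequently the summands in step two are no longer simple outer products of the transform-domain basis tensors $\xi_i,\xi_j^{\text{H}},\zeta_k$ used in \cite{Lu_ICCV}, but of their $\mathit{T}^{\dagger}$-preimages, so the per-summand operator-norm and variance bounds have to be re-derived from scratch. I expect the bulk of the technical effort to consist in showing that $\mu,\nu,\kappa(\mathbf{T}),\rho(\mathbf{T})$ combine precisely as the sum $\kappa^{2}(\mathbf{T})+\rho^{2}(\mathbf{T})$ times $\lambda r(n_1+n_2)/(n_1n_2)$ in the final estimates, rather than as higher powers or products---in particular, that the $\kappa^{2}$ factor arises from a single norm-conversion between the two domains while $\rho^{2}$ arises from the per-sample spectral-norm bound. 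Once these tensor Bernstein-type concentration lemmas are in place, the dual-certificate and subgradient arguments are structural analogs of the orthogonal-transform proof in \cite{Lu_ICCV}.
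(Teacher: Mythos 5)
Your high-level plan (dual certificate, golfing scheme, matrix Bernstein bounds with the incoherence parameters combining into $\kappa^{2}(\mathbf{T})+\rho^{2}(\mathbf{T})$) matches the paper's architecture, but two of your concrete steps fail exactly where the non-orthogonality of $\mathbf{T}$ bites, and these are the steps the paper had to redesign. First, your certificate condition (i), that $\mathit{T}^{\text{H}}(\mathcal{Y})$ be supported on $\Omega$, is not ``automatic from the construction'': each summand $q^{-1}\mathit{T}\mathcal{S}_{\Omega_{\ell}}\mathit{T}^{\dagger}(\mathcal{W}_{\ell-1})$ has $\mathit{T}^{\text{H}}$-image $q^{-1}\mathit{T}^{\text{H}}\mathit{T}\,\mathcal{S}_{\Omega_{\ell}}\mathit{T}^{\dagger}(\mathcal{W}_{\ell-1})$, and since $\mathbf{T}^{\text{H}}\mathbf{T}\neq\mathbf{I}$ for a general injection, the mode-3 multiplication smears every tube and destroys entrywise support. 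The paper replaces (i) by the weaker operator identity $(\mathit{T}\mathit{T}^{\dag}-\mathit{T}\mathcal{S}_{\Omega}\mathit{T}^{\dag})^{\text{H}}\mathcal{Y}=\mathbf{0}$, which still annihilates $\langle\mathcal{Y},\mathit{T}(\mathcal{H})\rangle$ for feasible $\mathcal{H}$ once combined with $\mathcal{S}_{\Omega}(\mathcal{H})=0$ and $\mathbf{T}^{\dag}\mathbf{T}=\mathbf{I}$, and it builds the golfing summands as $(\frac{1}{p_i}\mathit{\tilde{T}}\mathcal{S}_{\Omega_i}\mathit{T}^{\text{H}}+\mathcal{I}-\mathit{\tilde{T}}\mathit{T}^{\text{H}})\mathcal{M}_{i-1}$ with $\tilde{\mathbf{T}}=(\mathbf{T}^{\dag})^{\text{H}}$; the identity then holds exactly because $\mathbf{T}^{\text{H}}\tilde{\mathbf{T}}=\mathbf{I}$ and $\mathcal{S}_{\Omega}\mathcal{S}_{\Omega_i}=\mathcal{S}_{\Omega_i}$. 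Without the extra $\mathcal{I}-\mathit{\tilde{T}}\mathit{T}^{\text{H}}$ correction in each step, your certificate does not annihilate the feasible directions.

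Second, your concentration target $\Vert P_{\mathcal{T}}\mathcal{A}P_{\mathcal{T}}-P_{\mathcal{T}}\Vert_{\text{op}}\leq 1/2$ presumes that the expectation of the sampled operator ``acts as the identity on $\mathcal{T}$.'' It does not: $\mathbb{E}[q^{-1}\mathit{T}\mathcal{S}_{\Omega}\mathit{T}^{\dag}]=\mathit{T}\mathit{T}^{\dag}$ is the projection onto the mode-3 range of $\mathbf{T}$, and the tangent space $\mathbb{S}$, spanned by the singular tensors of $\mathit{T}(\mathcal{M})$ whose mode-3 fibers are unconstrained, is not contained in that range when $N_3>n_3$. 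The paper accordingly proves $\Vert\mathcal{P}_{\mathbb{S}}\mathit{T}\mathcal{R}_{\Omega}\mathit{T}^{\dag}\mathcal{P}_{\mathbb{S}}-\mathcal{P}_{\mathbb{S}}\mathit{T}\mathit{T}^{\dag}\mathcal{P}_{\mathbb{S}}\Vert\leq 1/2$ (deviation from $\mathcal{P}_{\mathbb{S}}\mathit{T}\mathit{T}^{\dag}\mathcal{P}_{\mathbb{S}}$, not from $\mathcal{P}_{\mathbb{S}}$), and in the injectivity step works with the feasibility identity $(\mathit{T}\mathcal{R}_{\Omega}\mathit{T}^{\dag}+\mathcal{I}-\mathit{T}\mathit{T}^{\dag})\mathit{T}(\mathcal{H})=0$. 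The resulting $\kappa(\mathbf{T})/p$ cross term is also why the Frobenius tolerance in your condition (ii) must be quantified as $p/(8\kappa(\mathbf{T}))$ rather than merely ``much smaller than $1$.'' These are not cosmetic adjustments; your plan as written stalls at both points for any $\mathbf{T}$ with non-orthonormal columns, which is precisely the regime the theorem is meant to cover.
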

        \begin{remark}[\textbf{Consistency with the existing TNN result}]
            It can be readily checked that when $\mathbf{T}$ is the DFT matrix\footnote{The DFT matrix referred to here is with a normalizing parameter $\frac{1}{\sqrt{n_3}}$.}, \eqref{eq:inc_cond_U_V} is the same as \eqref{eq:inc_cond_U_V_T}. Since for DFT, $N_3 = n_3, \kappa(\mathbf{T}) = 1, \rho(\mathbf{T}) = 1$, Theorem \ref{theorem:exact_completion} regresses to the existing TNN result. 
        \end{remark}
        \begin{remark}[\textbf{How to design $\mathbf{T}$ according to Theorem \ref{theorem:exact_completion}}]
            From the perspective of minimizing the sampling rate, $\mathbf{T}$ is a good measurement matrix (condition number close to 1 and energy is uniformly distributed) if $\kappa(\mathbf{T})$ and $\rho(\mathbf{T})$ take smaller values. However, the presence of incoherence parameter $\lambda$ makes the optimal $\mathbf{T}$ data-dependent and thus coupled in a complex way. Experiment results are provided in Section \ref{section:exp}, whereas the design of the optimal transform is outside the scope of this work. 
            \label{remark:design_T}
        \end{remark}

        Theorem \ref{theorem:exact_completion} is applicable to arbitrary transforms with no requirements on isotropy or self-adjointness. It provides a theoretical guarantee for transforms such as wavelet, Framelet, or data-driven types and greatly enhances the flexibility of the transformed tensor completion model \eqref{eq:program:min_transformed_TNN}.

        \subsection{Roadmap and Main Challenges of the Proof}
            \label{subsection:roadmap_and_challenges}
            The high-level roadmap of the proof of Theorem \ref{theorem:exact_completion} is standard. First, a dual certificate is assumed to have been constructed successfully and we need to prove that with this dual certificate, Program \ref{eq:program:min_transformed_TNN} has the unique optimal solution and $\mathcal{X}$ is completed exactly. This step is given by Proposition \ref{proposition:optimality_condition}. Second, with the tensor incoherence conditions, it suffices to construct such certificate satisfying subgradient conditions, which is given by Proposition \ref{proposition:constructing_dual_certificate}. Combining these two propositions completes the proof of Theorem \ref{theorem:exact_completion}.
    
            It is worth noting that the proof of Theorem \ref{theorem:exact_completion} is challenging. Firstly, the definitions of transform-involved tensor-tensor product, t-SVD and tensor nuclear norm in existing works cannot be utilized in our case, which makes us turn to work directly in the transform domain. Secondly, to the best of our knowledge, the proof of subgradient optimality conditions (Proposition \ref{proposition:optimality_condition}) which requires a case-by-case analysis, is new in the problem of tensor completion. The proof of this proposition is inspired by \cite{proof_6}, which aims to recover 1-D data sparse in the transform domain. By noticing the analogy among 1-D sparse signal, 2-D low rank matrices and 3-D low rank tensors, we complete the proof of Proposition \ref{proposition:optimality_condition}. Thirdly, the construction of dual certificate requires the establishment of three norm-controlling lemmas (Lemma \ref{lemma:bounding_spectral_norm}, \ref{lemma:bounding_infinity_norm}, \ref{lemma:bounding_infinity_2_norm}), in which the operator is mingled with the transform $\mathbf{T}$. Bounding Norms such as $\Vert \mathcal{P}_{\mathbb{S}} \mathit{T} \mathcal{P}_{\Omega} \mathit{T}^{\dag} \mathcal{P}_{\mathbb{S}} \Vert$, $\Vert T^{\text{H}}(\mathcal{U} \star \mathcal{V}^{\text{H}}) \Vert_{\infty}$, and $\Vert \mathcal{P}_{\mathbb{S}}\mathit{T}(\mathdutchcal{e}_{ijk}) \Vert_{\text{F}}^2$ that have not shown up in existing proofs have brought new challenges. Details are in the appendices.            

    \section{Experiments and Analysis}
        \label{section:exp}
        Numerical experiments are first conducted on random data to study the completion phenomena and then application to visual data impainting is considered.\footnote{The source codes can be found at \href{https://github.com/vecevecev/transformed_TNN}{\color{blue}{https://github.com/vecevecev/transformed\_TNN}}. The experiments can be conducted on a PC with a 8-core CPU and 16GB RAM.} Also, in Section \ref{section:brief_analysis} we present a new analysis explaining why slim transforms can outperform square transforms from a theoretical level. 
        \subsection{Exact Completion of Random Tensors}
            \label{section:exp_random_tensor}
            First, we examine the completion behaviors of Program \eqref{eq:program:min_transformed_TNN} with $\text{rank}(\mathcal{M})$ and sampling rate $p$ both varying. To that goal, we need to generate random tensors $\mathit{T}(\mathcal{M})$ with specified tubal rank $r$. Such generation of $\mathcal{M}$ is not that trivial since the randomly generated $\mathit{T}(\mathcal{M}) \in \mathbb{C}^{n_1 \times n_2 \times N_3}$ with a tubal rank might not have preimage $\mathcal{M} \in \mathbb{C}^{n_1 \times n_2 \times n_3}$ when $\mathit{T}(\cdot)$ is not bijection ($N_3 > n_3$). Thus, to generate random tensors $\mathcal{M}$ satisfying $\mathit{T}(\mathcal{M}) = r$, we design an alternating projection algorithm to iteratively project a randomly initialized tensor onto the target manifold. Due to space constraints, elaborate explanation and corresponding algorithm (Algorithm \ref{alg:gen_2_TX_with_rank_r1_r2}) are provided in Appendix~\ref{section:generate_M}. 
            \begin{figure}[ht]
                \vskip -0.05 in
                \centering
                \begin{subfigure}{0.3\textwidth}
                    \includegraphics[width=0.96\textwidth, height=0.72\textwidth]{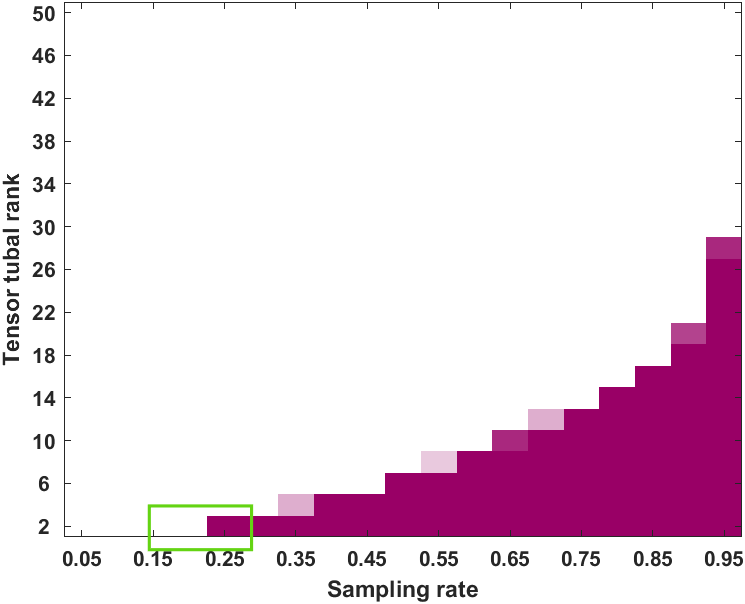}
                    \caption{\scriptsize DFT, $N_3 = n_3$}
                    \label{fig:phase_N3=n3_DFT}
                    \includegraphics[width=0.96\textwidth, height=0.72\textwidth]{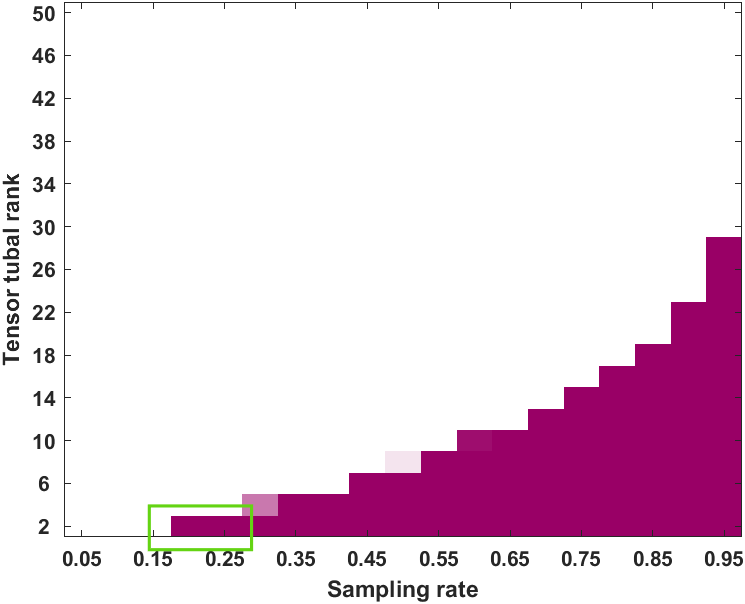}
                    \caption{\scriptsize DFT, $N_3 = 2n_3$}
                    \label{fig:phase_N3=2n3_DFT}
                \end{subfigure}
                \hfill
                \begin{subfigure}{0.3\textwidth}
                    \includegraphics[width=0.96\textwidth, height=0.72\textwidth]{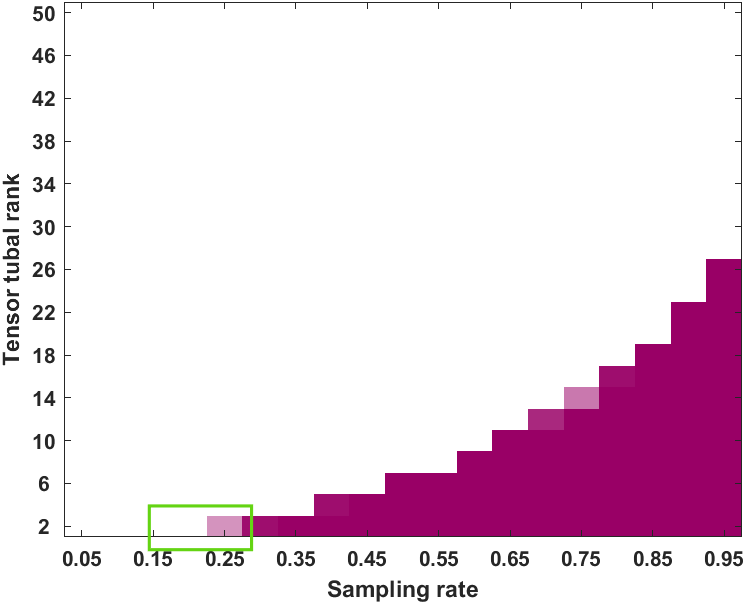}
                    \caption{\scriptsize RUT, $N_3 = n_3$}
                    \label{fig:phase_N3=n3_RUT}
                    \includegraphics[width=0.96\textwidth, height=0.72\textwidth]{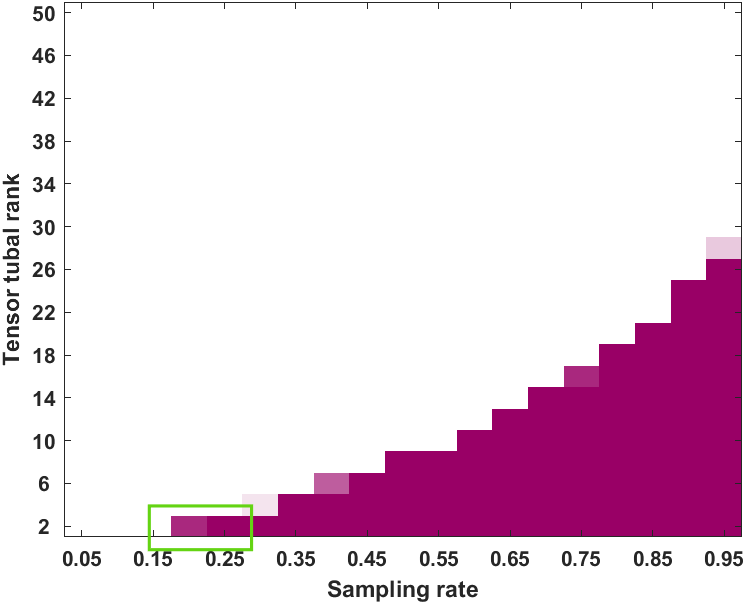}
                    \caption{\scriptsize RUT, $N_3 = 2n_3$}
                    \label{fig:phase_N3=2n3_RUT}
                \end{subfigure}
                \hfill
                \begin{subfigure}{0.3\textwidth}
                    \includegraphics[width=0.96\textwidth, height=0.72\textwidth]{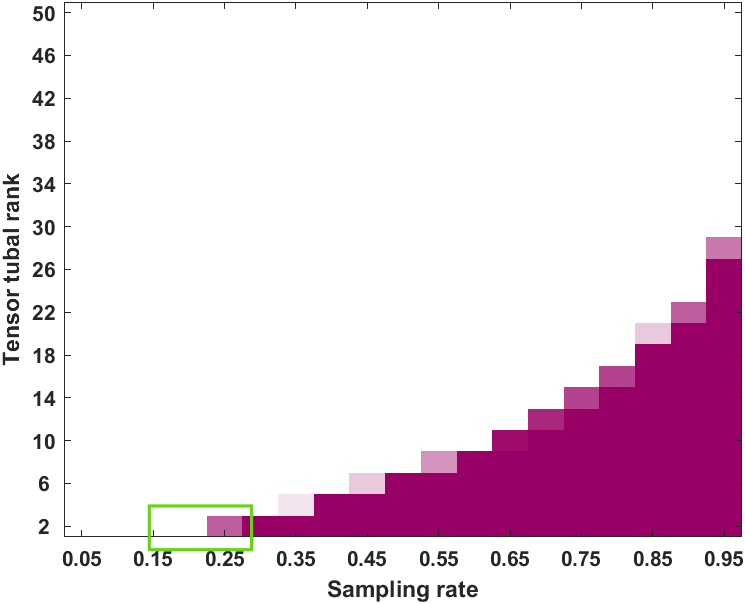}
                    \caption{\scriptsize $\kappa(\mathbf{T}) = 1, N_3 = n_3$}
                    \label{fig:phase_N3=n3_RUT_cond_1}
                    \includegraphics[width=0.96\textwidth, height=0.72\textwidth]{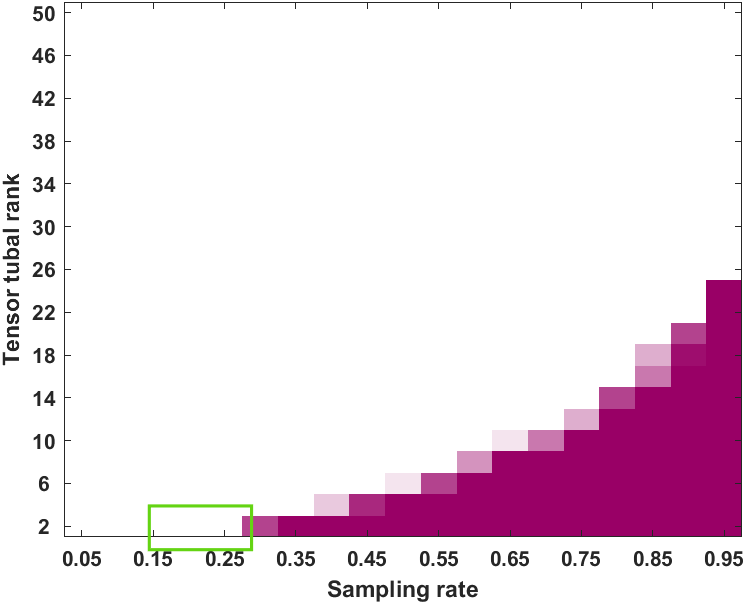}
                    \caption{\scriptsize $\kappa(\mathbf{T}) = 4, N_3 = n_3$}
                    \label{fig:phase_N3=2n3_RUT_cond_4}
                \end{subfigure}
                \caption{\scriptsize Exact completion with varying rank (x-axis) and sampling rate (y-axis). The white cube denotes all failure and the magenta cube denotes all success.}
                \label{fig:phase}
            \end{figure}

            \begin{figure}[ht]
                \vskip -0.3 in
                \centering
                \begin{subfigure}{0.32\textwidth}
                    \centering
                    \includegraphics[width=0.96\textwidth,height=0.84\textwidth]{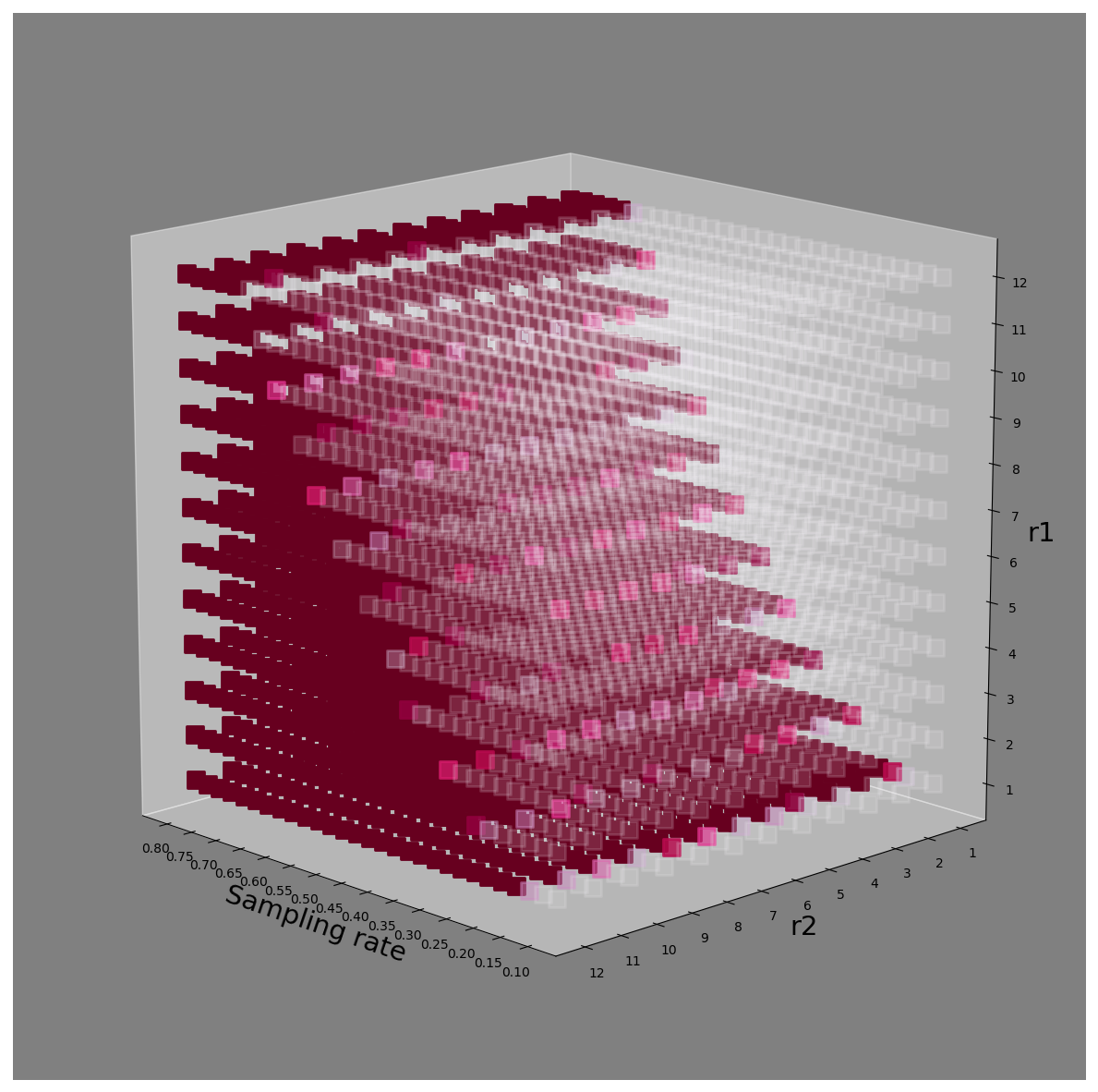}
                    \subcaption{\scriptsize Using RUT1 only}
                    \label{fig:phase_3d-RUT1}
                \end{subfigure}
                \hfill
                \begin{subfigure}{0.32\textwidth}
                    \centering
                    \includegraphics[width=0.96\textwidth,height=0.84\textwidth]{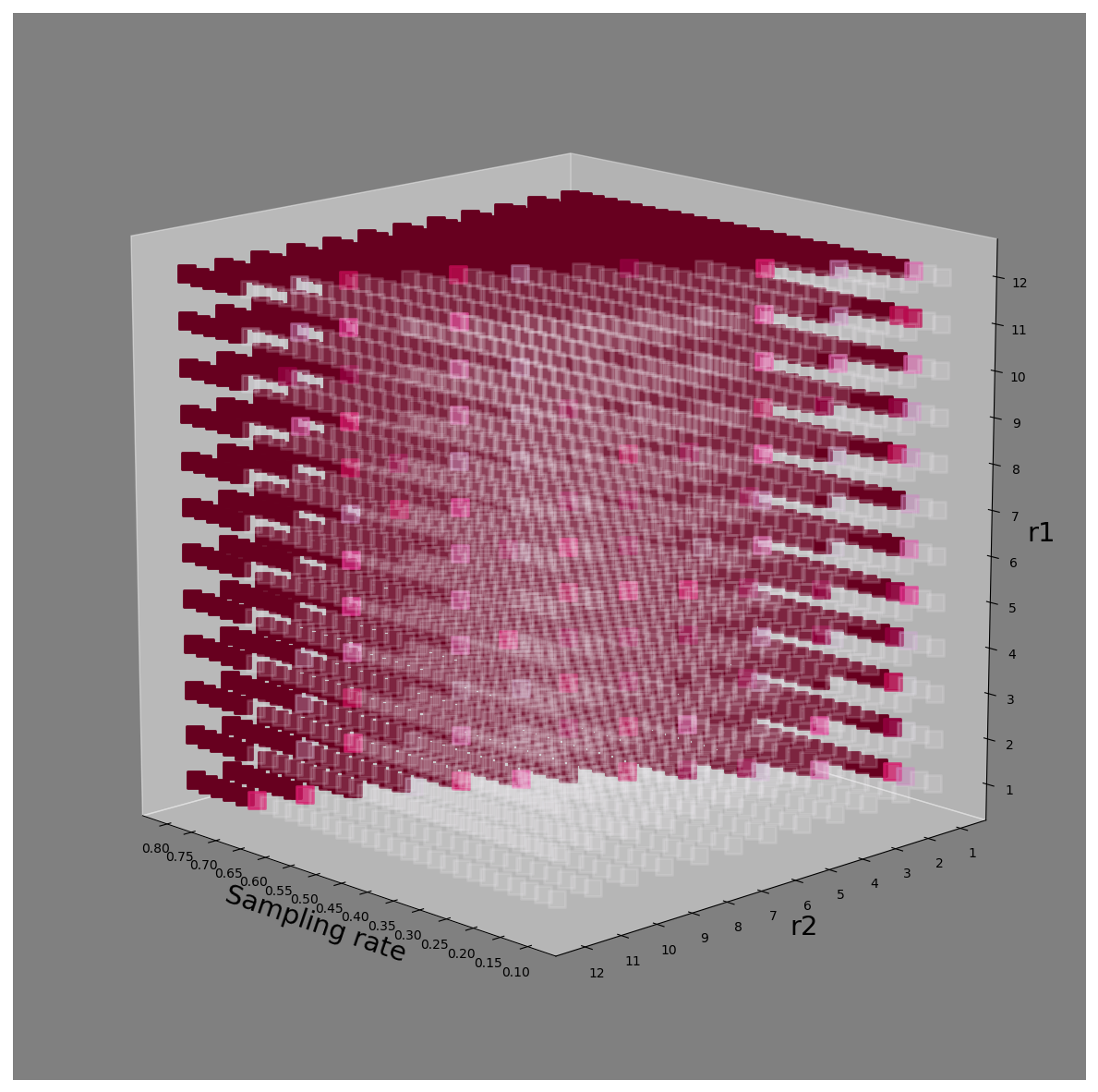}
                    \subcaption{\scriptsize Using RUT2 only}
                    \label{fig:phase_3d-RUT2}
                \end{subfigure}
                \hfill
                \begin{subfigure}{0.32\textwidth}
                    \centering
                    \includegraphics[width=0.96\textwidth,height=0.84\textwidth]{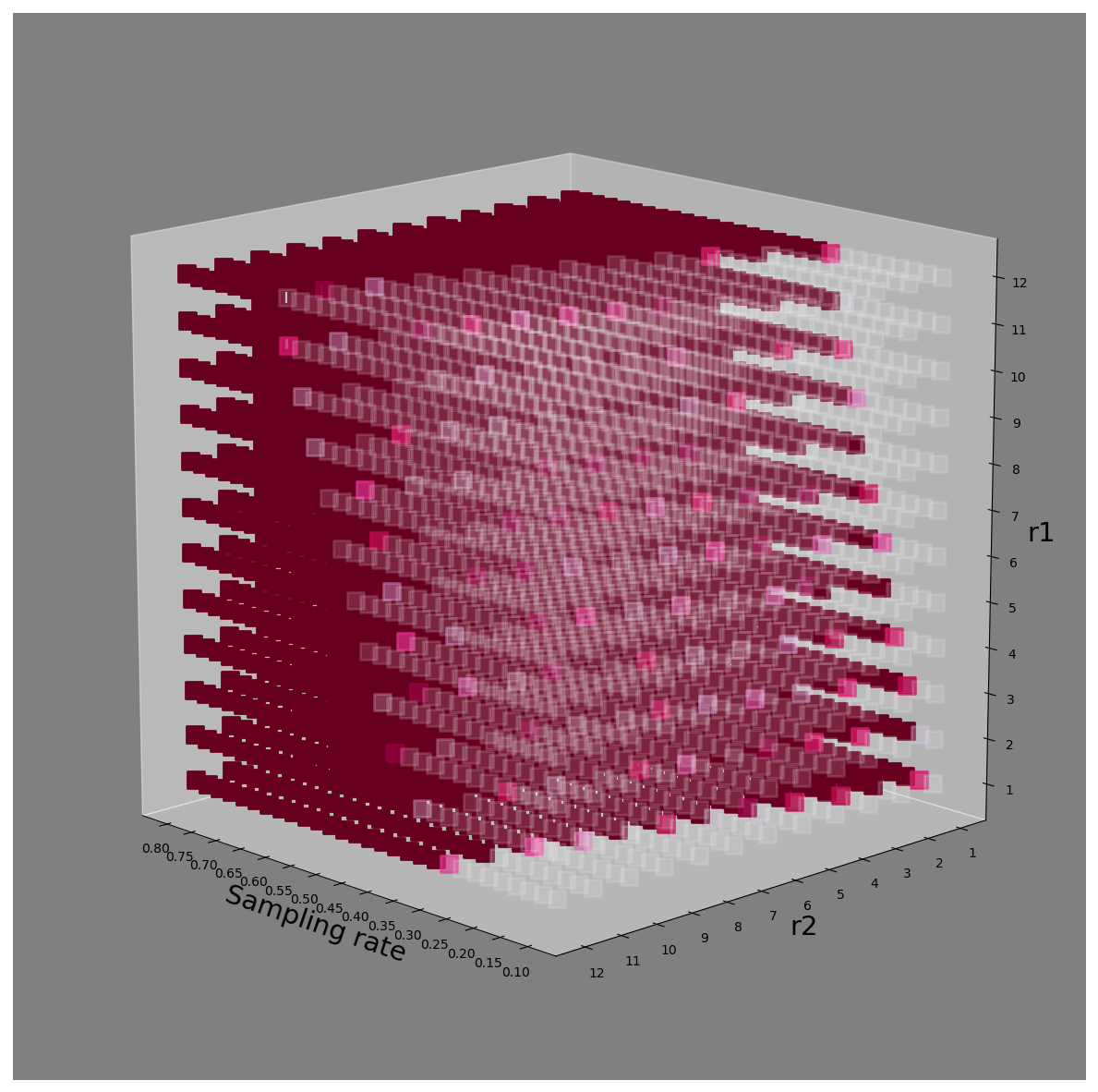}
                    \subcaption{\scriptsize RUT1 and RUT2 combined}
                    \label{fig:phase_3d-combined}
                \end{subfigure}
                \caption{\scriptsize Exact completion with varying $\text{rank}(\mathit{T}_1(\mathcal{X}))$ (z-axis), $\text{rank}(\mathit{T}_2(\mathcal{X}))$ (y-axis) and sampling rate (x-axis). The white cube denotes all failure and the magenta cube denotes all success.}
                \label{fig:phase_3D}
                \vskip -0.1 in
            \end{figure}
                
            Using Algorithm \ref{alg:gen_2_TX_with_rank_r1_r2}, we generate $\mathcal{M} \in \mathbb{C}^{n_1 \times n_2 \times n_3}$ with $n_1 = n_2 = 50, n_3 = 20$ and $N_3 = \{20, 40\}$. We have tested two types of $\mathbf{T}$: DFT and random unitary transform (RUT), using the first $n_3$ columns when $N_3 > n_3$. We have also conducted experiments to see the influence of $\kappa(\mathbf{T})$, in which the singular values of the generated RUT are randomly set between $[0.5, 2]$, resulting $\kappa(\mathbf{T}) = 4$\footnote{Since the singular values are random from $[0.5, 2]$, in some test cases $\kappa(\mathbf{T})$ is a bit smaller than 4.}. The sampling rate (SR) is set as $p = [0.05 : 0.05: 0.95]$
            and $\text{rank}(\mathcal{M}) = [2 : 2: 50]$ (we use MATLAB notation here). We conducted $10$ test cases for each $(r, p)$ and in each case if $\Vert \mathcal{X} - \mathcal{M} \Vert_{\text{F}} / \Vert \mathcal{X} \Vert_{\text{F}} \leq {10}^{-3}$ we deem $\mathcal{X}$ completed successfully. The parameters are set as $\alpha_0 = 10^{-2}, \alpha_{\max} = 10^{6}, \rho = 1.02$ and the ADMM algorithm terminates when $\max(\Vert \mathcal{X}^t - \mathcal{X}^{t-1}\Vert_{\infty}, \Vert \mathcal{Y}^t - \mathcal{Y}^{t-1}\Vert_{\infty}, \Vert \mathit{T}(\mathcal{X}^t) - \mathit{T}(\mathcal{X}^{t-1})\Vert_{\infty}) \leq 10^{-10}$.

            From Figure \ref{fig:phase}, it can be observed that the completion behaviors are alike among different types of $\mathbf{T}$ and different $N_3$, all showing a region of exact completion in all cases which supports Theorem \ref{theorem:exact_completion}. It can also be observed by comparing Figure \ref{fig:phase_N3=n3_DFT} and Figure \ref{fig:phase_N3=2n3_DFT}, Figure \ref{fig:phase_N3=n3_RUT} and Figure \ref{fig:phase_N3=2n3_RUT} that, with the same tubal ranks, the transforms that map $\mathcal{X}$ into higher dimensional space outperforms their counterparts, which validates the advantage of such non-invertible transforms. Additionally, from Figure \ref{fig:phase_N3=n3_RUT_cond_1} and Figure \ref{fig:phase_N3=2n3_RUT_cond_4}, the transform with $\kappa(\mathbf{T}) = 1$ exhibits better recovery efficacy than the transform with $\kappa(\mathbf{T}) = 4$, also supporting Theorem \ref{theorem:exact_completion}.

           Such non-invertible slim transforms can provide more information since more prior knowledge of $\mathcal{X}$ is available. To illustrate this, consider two transforms $\mathbf{T}_1$ and $\mathbf{T}_2$. We generate $\mathcal{X}$ by Algorithm \ref{alg:gen_2_TX_with_rank_r1_r2} to satisfy $\text{rank}(\mathit{T}_1(\mathcal{X})) = r_1$ and $\text{rank}(\mathit{T}_2(\mathcal{X})) = r_2$. Since $\mathbf{T}_1$ and $\mathbf{T}_2$ provide information of $\mathcal{X}$ in different domains, the concatenated slim transform $\mathbf{T}_3 = [\mathbf{T}_1^{\text{T}}, \mathbf{T}_2^{\text{T}}]^{\text{T}}$ ensembles richer information than its individual components. In Figure \ref{fig:phase_3D}, we set $\mathbf{T}_1$ and $\mathbf{T}_2$ as two RUTs, the slim $\mathbf{T}_3$ is cascaded as $\mathbf{T}_3 = \frac{1}{\sqrt{2}}[\mathbf{T}_1^{\text{T}}, \mathbf{T}_2^{\text{T}}]^{\text{T}}$ with $r_1, r_2 = [1 : 12]$, $p = [0.1 : 0.025 : 0.8]$ and plot 3D phase graphs.

            It can be observed from Figure \ref{fig:phase_3D} that: 1) Since in Figure \ref{fig:phase_3d-RUT1} (Figure \ref{fig:phase_3d-RUT2}), $\mathbf{T}_1$ ($\mathbf{T}_2$) is the only transform, its performance basically remains fixed with $r_2$ ($r_1$) varying; 2) As shown in Figure \ref{fig:phase_3d-combined}, though the tubal rank of $\mathit{T}_3(\mathcal{X})$ has not decreased ($\text{rank}(\mathit{T}_3(\mathcal{X})) = \text{rank}(\mathit{T}_1(\mathcal{X})) \vee \text{rank}(\mathit{T}_2(\mathcal{X}))$), compared to using $\mathbf{T}_1$/$\mathbf{T}_2$ only, the exact completion region is enlarged giving credit to the utilization of the additional information provided by another transform. Such a simple and natural combinational design of transform $\mathit{T}(\cdot)$ provides a perspective of understanding why slim transforms may have better performances.

             \subsection{Why Slim Transforms Are Better? An Analysis Obtained by Proof}
             \label{section:brief_analysis}
             Previous works \cite{Framelet_TNN,dctionary_learning_TNN,S2NTNN} give an intuitive explanation that slim transforms can induce redundancy and achieve a better low-rank representation in the transform domain, but no theoretical justification is provided. Here, we use our proof to present a justification. From our proof, it can be observed that the sampling rate $p$ to achieve perfect completion is proportional to two energy terms: $E_1 = \max_{ijk} \{\Vert \mathcal{P}_{\mathbb{S}}\mathbf{T}(\mathdutchcal{e}_{ijk}) \Vert_{\text{F}}^2 \vee \Vert \mathcal{P}_{\mathbb{S}}\tilde{\mathbf{T}}(\mathdutchcal{e}_{ijk}) \Vert_{\text{F}}^2\}, E_2 = \max_{ijk} \Vert \mathbf{T}^{\dag}\mathcal{P}_{\mathbb{S}}\mathbf{T}(\mathdutchcal{e}_{ijk}) \Vert_{\text{F}}^2$. Let us use a special class of slim transforms, i.e., combinational transforms to see how the value of $N_3$ impacts $E_1$ and $E_2$.\footnote{There are other terms that affect $p$. The formulations and analysis of them are similar to $E_1$ and $E_2$. Please find a complete analysis in Appendix \ref{section:ananlyzing_sampling_rate}.}
             \begin{figure}[ht]
                \vskip -0.1 in
                \centering
                \begin{minipage}{0.45\textwidth}
                    \centering
                    \includegraphics[width=0.9\textwidth, height=0.55\textwidth]{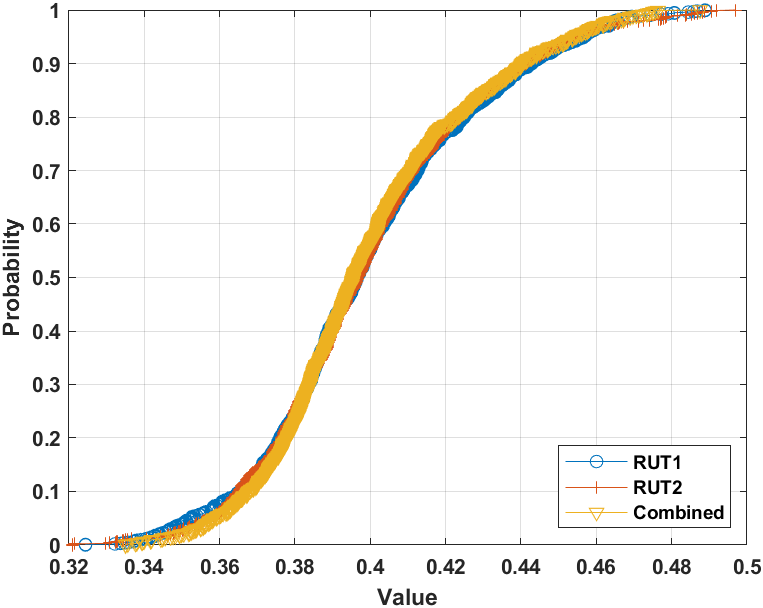}
                    \subcaption{\scriptsize $\Vert \mathcal{P}_{\mathbb{S}}\mathbf{T}(\mathdutchcal{e}_{ijk}) \Vert_{\text{F}}^2 \vee \Vert \mathcal{P}_{\mathbb{S}}\tilde{\mathbf{T}}(\mathdutchcal{e}_{ijk}) \Vert_{\text{F}}^2$}
                    \label{fig:E1}
                \end{minipage}
                \begin{minipage}{0.45\textwidth}
                    \centering
                    \includegraphics[width=0.9\textwidth, height=0.55\textwidth]{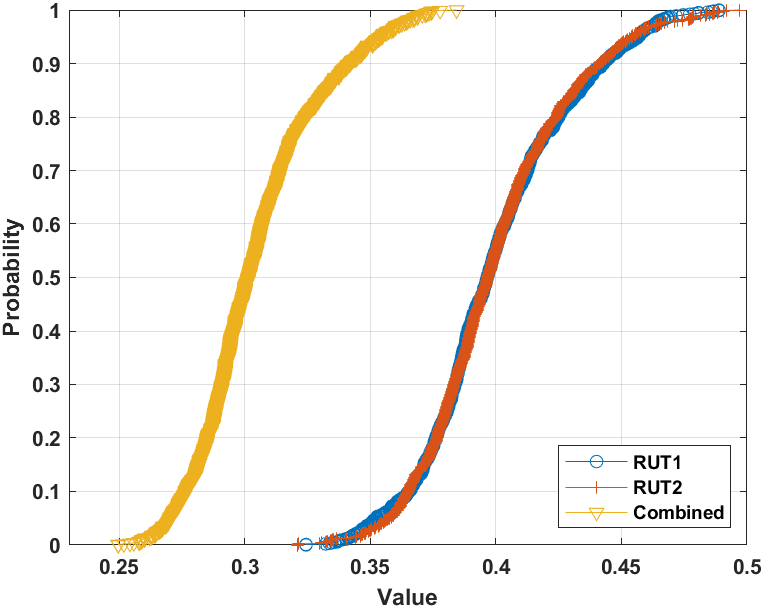}
                    \subcaption{\scriptsize $\Vert \mathbf{T}^{\dag}\mathcal{P}_{\mathbb{S}}\mathbf{T}(\mathdutchcal{e}_{ijk}) \Vert_{\text{F}}^2$}
                    \label{fig:E2}
                \end{minipage}
                \caption{\scriptsize Empirical cumulative distribution of two energy terms. Note that although the bound is given by $\max\{\cdot\}$, actually the distribution plays an important role.}
                \label{fig:energy}
                \vskip -0.2 in
            \end{figure}

             First, we analyze $E_1$. Consider that our slim transform $\mathbf{T}_{\text{c}}$ is formed by concatenating a list of square balanced transforms as $\mathbf{T}_{\text{c}} = \frac{1}{\sqrt{n}}[\mathbf{T}_1^{\text{T}}, \mathbf{T}_2^{\text{T}}, \hdots, \mathbf{T}_M^{\text{T}}]^{\text{T}}$, where the normalizier $\frac{1}{\sqrt{n}}$ is to ensure that the combined $\mathbf{T}_{\text{c}}$ is also balanced. Since the tensor-tensor product is operated slice-wise, the following relations can be acquired: $\forall~i,j,k,~ \Vert \mathcal{P}_{\mathbb{S}}\mathbf{T}_{c}(\mathdutchcal{e}_{ijk}) \Vert_{\text{F}}^2 = \text{mean}\{ \Vert \mathcal{P}_{\mathbb{S}}\mathbf{T}_m(\mathdutchcal{e}_{ijk}) \Vert_{\text{F}}^2 \},~\Vert \mathcal{P}_{\mathbb{S}}\tilde{\mathbf{T}}_{\text{c}}(\mathdutchcal{e}_{ijk}) \Vert_{\text{F}}^2 = \text{mean}\{\Vert \mathcal{P}_{\mathbb{S}}\tilde{\mathbf{T}_m}(\mathdutchcal{e}_{ijk}) \Vert_{\text{F}}^2\}, m = 1, 2, \hdots, M$, which means that the two norms in $E_1$ of the slim transform $\mathbf{T}_{c}$ are averaged by those of its components. After taking the maximum w.r.t. $i,j,k$, it becomes an inequality: $E_1(\mathbf{T}_c) \leq \text{mean}\{E_1(\mathbf{T}_m)\}$.\footnote{This inequality will be observed in Figure \ref{fig:arr_v_1_1} in Appendix \ref{section:ananlyzing_sampling_rate}.}
        
             However, the relations become different when analyzing $E_2$ due to ${\mathbf{T}_{\text{c}}}^{\dag}\mathcal{P}_{\mathbb{S}}\mathbf{T}_{\text{c}}(\mathdutchcal{e}_{ijk})$. One observes that: $\forall~i,j,k,~ \mathbf{T}_{\text{c}}^{\dag}\mathcal{P}_{\mathbb{S}}\mathbf{T}_{\text{c}}(\mathdutchcal{e}_{ijk}) = \text{mean}(\mathbf{T}_{m}^{\dag}\mathcal{P}_{\mathbb{S}}\mathbf{T}_m(\mathdutchcal{e}_{ijk})), m = 1, \hdots, M$, that is, instead of the energy, the (entries of) tensors themselves are the mean of the components. If we assume the entries in $\mathbf{T}_{m}^{\dag}\mathcal{P}_{\mathbb{S}}\mathbf{T}_m(\mathdutchcal{e}_{ijk})$ are i.i.d., Central Limit Theorem implies that the energy/variance of $\mathbf{T}_{m}^{\dag}\mathcal{P}_{\mathbb{S}}\mathbf{T}_m(\mathdutchcal{e}_{ijk})$ is less than the energy of each of its components, thus allowing for a smaller sampling rate $p$.
             
             To validate this analysis, we conduct an experiment to compare the energies. Figure \ref{fig:energy} shows the empirical cumulative distribution of $\Vert \mathcal{P}_{\mathbb{S}}\mathbf{T}(\mathdutchcal{e}_{ijk}) \Vert_{\text{F}}^2 \vee \Vert \mathcal{P}_{\mathbb{S}}\tilde{\mathbf{T}}(\mathdutchcal{e}_{ijk}) \Vert_{\text{F}}^2$ and $\Vert \mathbf{T}^{\dag}\mathcal{P}_{\mathbb{S}}\mathbf{T}(\mathdutchcal{e}_{ijk}) \Vert_{\text{F}}^2$ under $\mathbf{T}_1$, $\mathbf{T}_2$ and $\mathbf{T}_{\text{c}}$. One can observe that the former is averaged and the latter has been shrunken.

            \subsection{Application to Visual Data Inpainting}
             
             We conduct experiments on visual data.\footnote{The emphasis of this section is to show the impacts of different transforms. Thus, we have not included methods with different tensor structures and assumptions, such as CP, Tucker, tensor ring, etc.} The transforms used for comparison include $n_3 \times n_3$ DFT, i.e., TNN, $n_3 \times n_3$ DCT, the first $n_3$ columns of $N_3 \times N_3$ DFT matrix and DCT matrix with $N_3 = 2 n_3$ (DFT2, DCT2), $n_3 \times n_3$ DFT/DCT cantencated transform (DFT-DCT), discrete Walsh-Hadamard transform (DWHT) and Framelet \cite{Framelet_TNN} with piecewise cubic B-spline and decomposition level $l = 4$. The parameters are $\alpha_0 = 10^{-2}, \alpha_{\max} = 10^{6}, \rho = 1.2$ and the ADMM algorithm terminates when $\max(\Vert \mathcal{X}^t - \mathcal{X}^{t-1}\Vert_{\infty}, \Vert \mathcal{Y}^t - \mathcal{Y}^{t-1}\Vert_{\infty}, \Vert \mathit{T}(\mathcal{X}^t) - \mathit{T}(\mathcal{X}^{t-1})\Vert_{\infty}) \leq 10^{-3}$ ($10^{-4}$ for Framelet). 
            
             \subsubsection{MSI Data}                
                We apply our model on MSI data, of which the third dimension contains rich spectrum information. Each MSI is of size $512 \times 521 \times 31$ (height $\times$ width $\times$ band). The methods are tested on MSI ``super balls'', ``cloth'', ``stuffed toys'', ``photo and face'', ``oil painting'' and ``clay''. The evaluation metrics adopted are the peak signal-to-noise ratio (PSNR) and the structural similarity index measure (SSIM).
                \begin{table}[H]
                    \vskip -0.2 in
                    \setlength\tabcolsep{5pt}
                    \centering
                    \caption{\scriptsize The average performances on MSI and video data. The best and the second best results are highlighted by bolder and italic fonts, respectively.} 
                    \label{tb:MSI_Video}
                    \begin{scriptsize}
                    \begin{tabular}{cccccccccc}
                        \toprule Data & SR & Metric & DWHT & DFT & DFT2 & DCT & DCT2 & DFT-DCT & Framelet\\
                        \midrule
                        \multirow{6}*{MSI} & \multirow{2}*{0.05} & PSNR & 31.1468 & 31.9586 & 32.5784 & 33.0083 & 33.3522 & \textit{33.7772} & \textbf{34.2920}\\
                        & & SSIM & 0.8256 & 0.8465 & 0.8746  & 0.8724 & 0.8910 & \textit{0.8942} & \textbf{0.9151} \\
                        \cmidrule(lr){2-10}
                        & \multirow{2}*{0.1} & PSNR & 35.6833 & 37.0205 & 37.4532 & 38.1529 & 38.2279 & \textit{38.7927} & \textbf{39.1917} \\
                        & & SSIM & 0.9186 & 0.9373 & 0.9489 & 0.9512 & 0.9560 & \textit{0.9594} & \textbf{0.9676} \\
                        \cmidrule(lr){2-10}
                        & \multirow{2}*{0.15} & PSNR & 38.1754 & 39.7736 & 40.2428 & 40.9420 & 40.9023 & \textit{41.4876} & \textbf{41.8892} \\
                        & & SSIM & 0.9478 & 0.9627 & 0.9705 & 0.9722 & 0.9741 & \textit{0.9764} & \textbf{0.9808} \\
                        \cmidrule(lr){1-10} 
                        \multirow{6}*{Video} & \multirow{2}*{0.05} & mPSNR & 23.1675 & 23.1469 & 23.8319 & 23.3915 & \textit{24.1659} & 24.0859 & \textbf{24.5394}\\
                        & & mSSIM & 0.6046 & 0.5849 & 0.6499 & 0.5948 & \textit{0.6611} & 0.6525 & \textbf{0.7008} \\
                        \cmidrule(lr){2-10}
                        & \multirow{2}*{0.1} & mPSNR & 25.7329 & 25.7384 & 26.6074 & 25.9600 & \textit{26.8386} & 26.7433 & \textbf{27.2258} \\
                        & & mSSIM & 0.7099 & 0.6858 & 0.7526 & 0.6937 & \textit{0.7570} & 0.7488 & \textbf{0.7870} \\
                        \cmidrule(lr){2-10}
                        & \multirow{2}*{0.15} & mPSNR & 27.0870 & 27.1902 & 28.1257 & 27.4041 & \textit{28.2968} & 28.1984 & \textbf{28.7145} \\
                        & & mSSIM & 0.7599 & 0.7408 & 0.8017 & 0.7470 & \textit{0.8032} & 0.7971 & \textbf{0.8308}\\
                        \bottomrule
                    \end{tabular}
                    \end{scriptsize}
                    \vskip -0.2 in
                \end{table}

                \begin{figure}[ht]
                    \begin{subfigure}{0.45\textwidth}
                        \centering
                        \includegraphics[width=0.96\textwidth,height=0.54\textwidth]{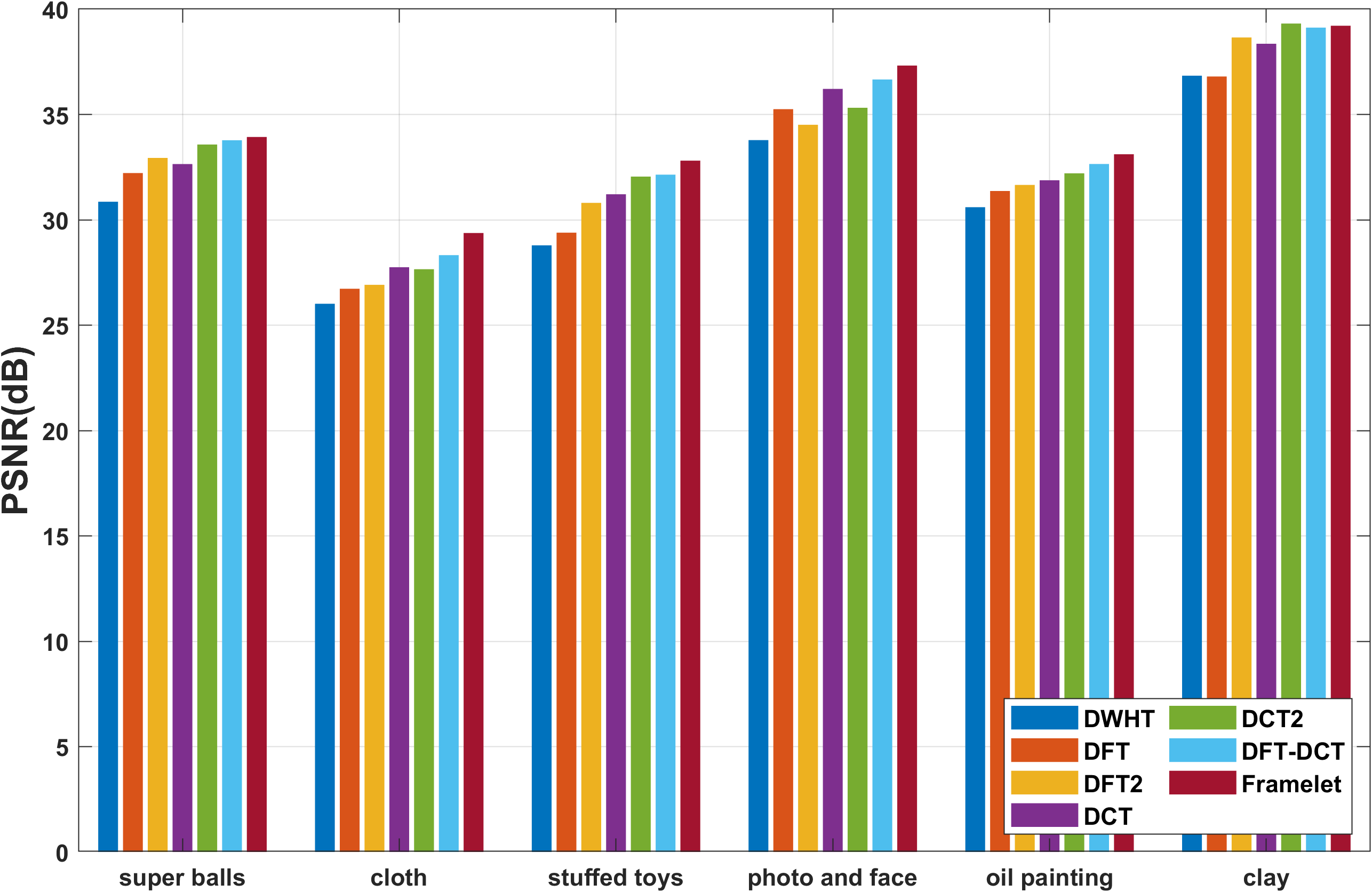}
                        \label{fig:msi_PSNR}
                    \end{subfigure}
                    \hfill
                    \begin{subfigure}{0.45\textwidth}
                        \centering
                        \includegraphics[width=0.96\textwidth,height=0.54\textwidth]{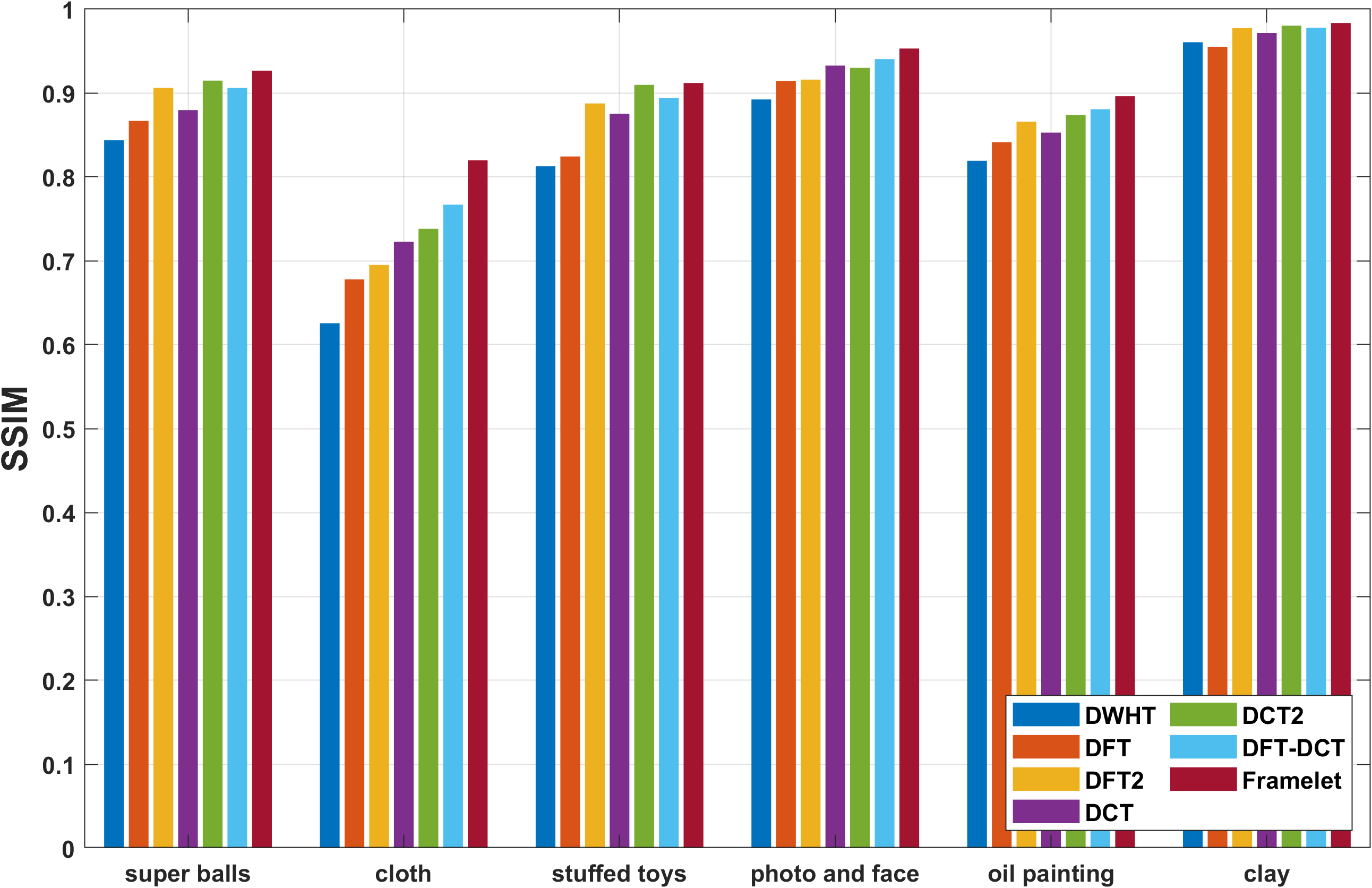}
                        \label{fig:msi_SSIM}
                    \end{subfigure}
                    \caption{\scriptsize PSNR (left) and SSIM (right) results on MSI data with SR =0.05.}
                    \label{fig:msi_ind_p=0.05}
                    \vskip -0.12 in
                \end{figure}
                
                The top half of Table \ref{tb:MSI_Video} shows the average PSNR and SSIM value on MSI data for SR$=0.05, 0.1, 0.15$. Figure \ref{fig:msi_ind_p=0.05} lists the indicators for each MSI. It is observable that the non-invertible slim transforms, i.e., DFT2 and DCT2, outperform their square counterparts from which they originate. The Framelet transform and DFT-DCT achieve the best and the second best results, both being slim (the Framelet transform is slimmer as we set $l=4$). It is worth noting that the recovery performance of DFT-DCT is quite satisfactory by further considering that such design is simple and relatively efficient (the dimension is only doubled after the transform). Naturally more transforms could be included and each of them provides information about the original data from a different aspect/domain, which reflects the idea of ensemble learning or information fusion.

                \begin{figure}[ht]
                \vskip -0.1 in
                \centering
                \setlength{\belowcaptionskip}{-5pt}
                \begin{minipage}{0.102\textwidth}
                		\centering
                		\includegraphics[width=1\textwidth, height=0.75\textwidth]{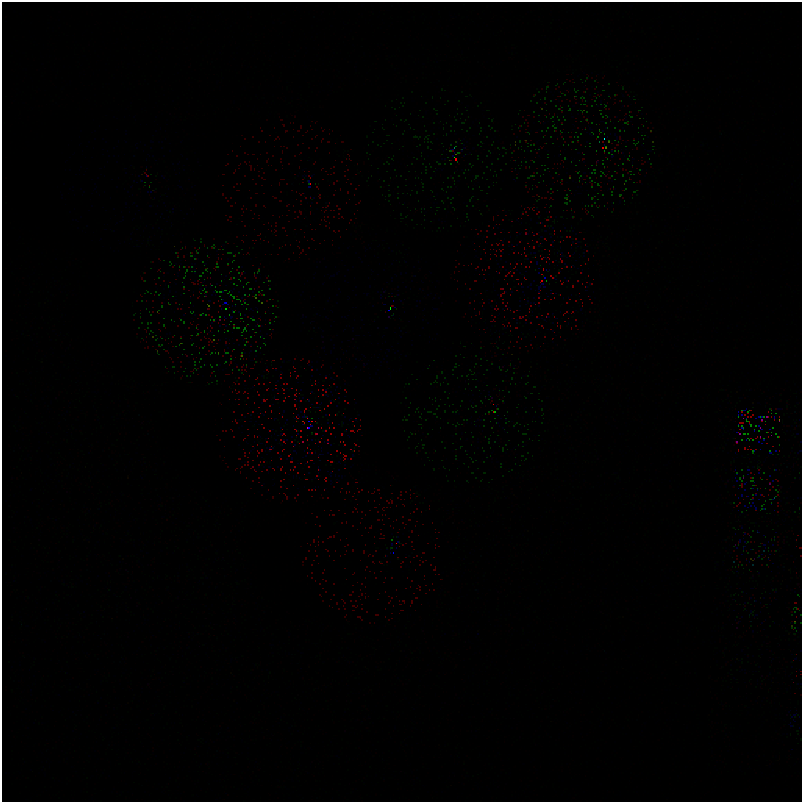}
                	\end{minipage}
                    \begin{minipage}{0.102\textwidth}
                		\centering
                		\includegraphics[width=1\textwidth, height=0.75\textwidth]{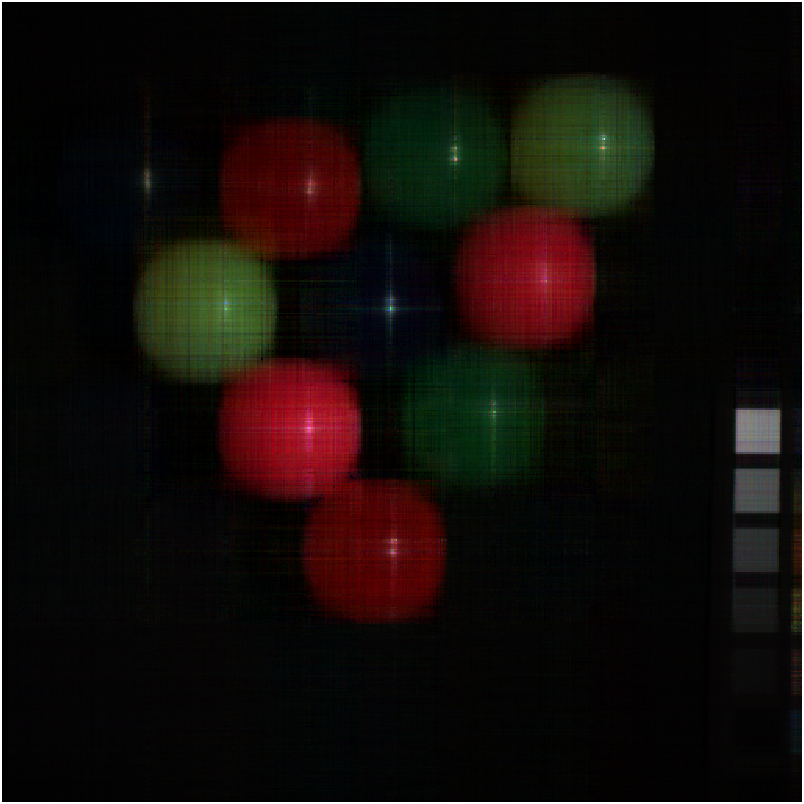}
                	\end{minipage}
                    \begin{minipage}{0.102\textwidth}
                		\centering
                		\includegraphics[width=1\textwidth, height=0.75\textwidth]{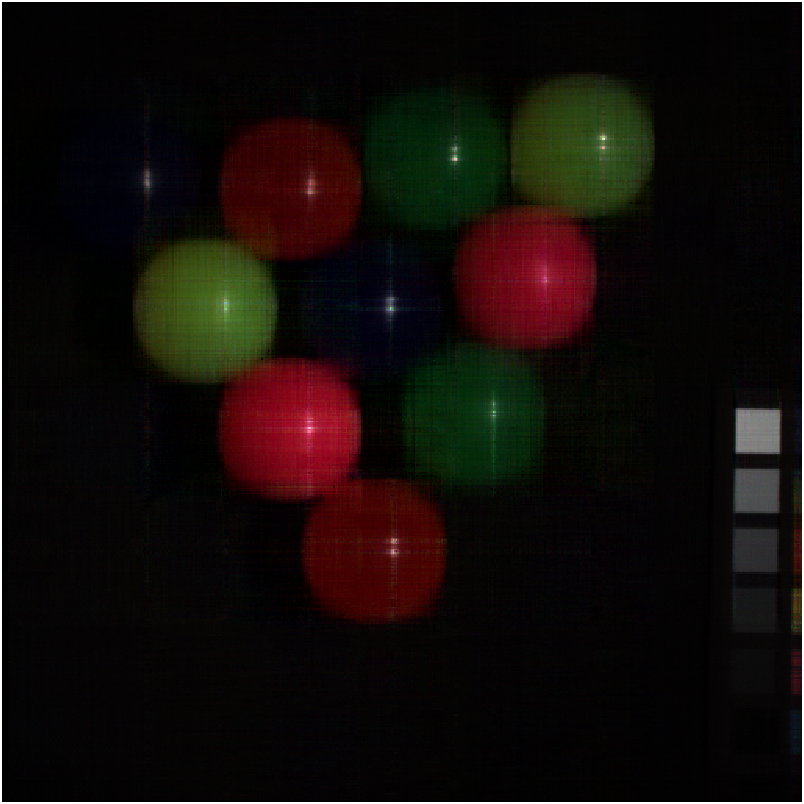}
                	\end{minipage}
                    \begin{minipage}{0.102\textwidth}
                		\centering
                		\includegraphics[width=1\textwidth, height=0.75\textwidth]{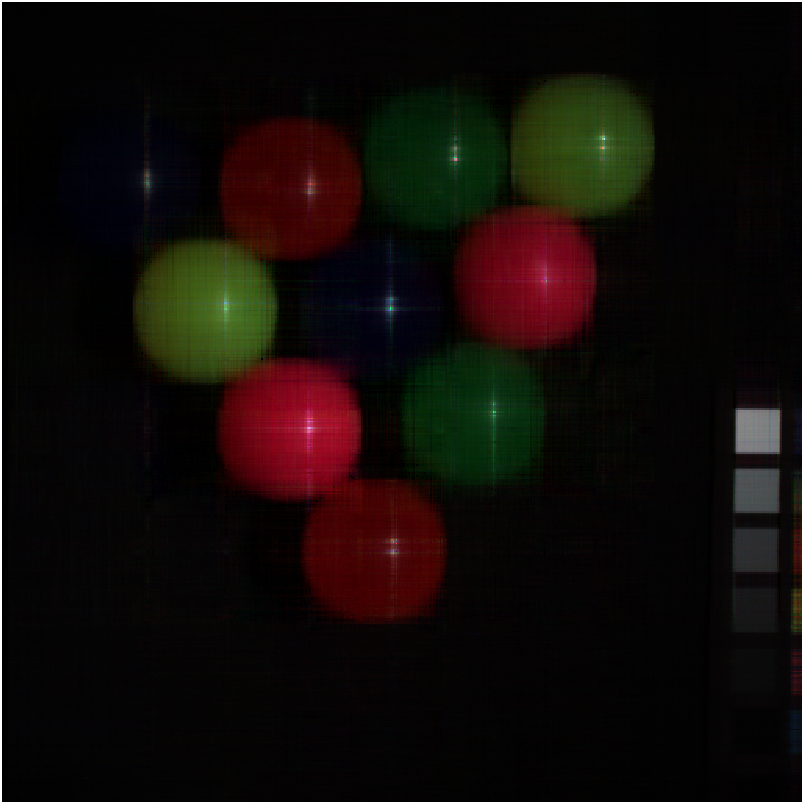}
                	\end{minipage}
                    \begin{minipage}{0.102\textwidth}
                		\centering
                		\includegraphics[width=1\textwidth, height=0.75\textwidth]{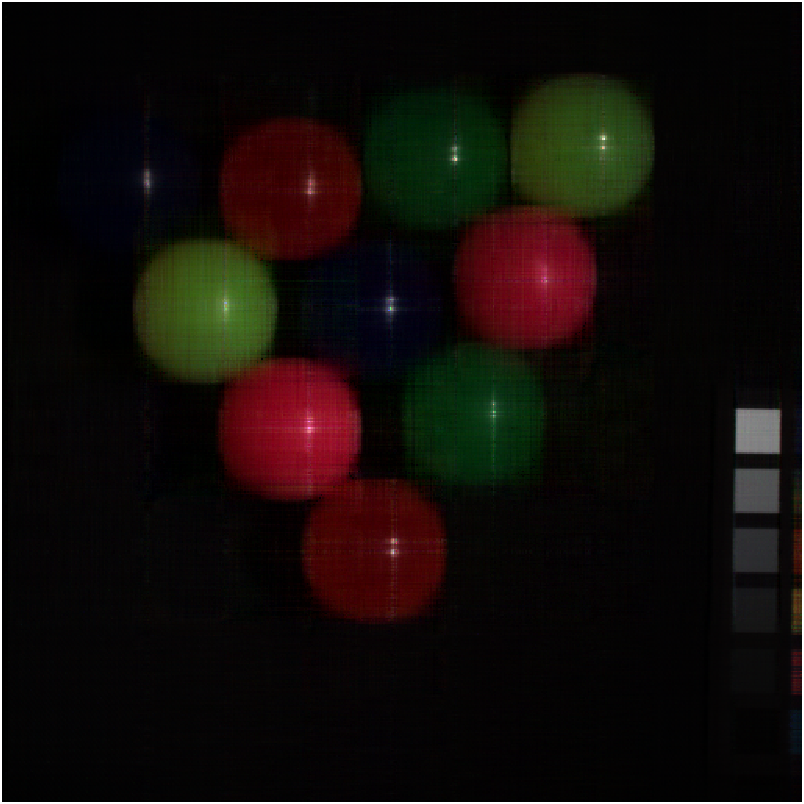}
                	\end{minipage}
                    \begin{minipage}{0.102\textwidth}
                		\centering
                		\includegraphics[width=1\textwidth, height=0.75\textwidth]{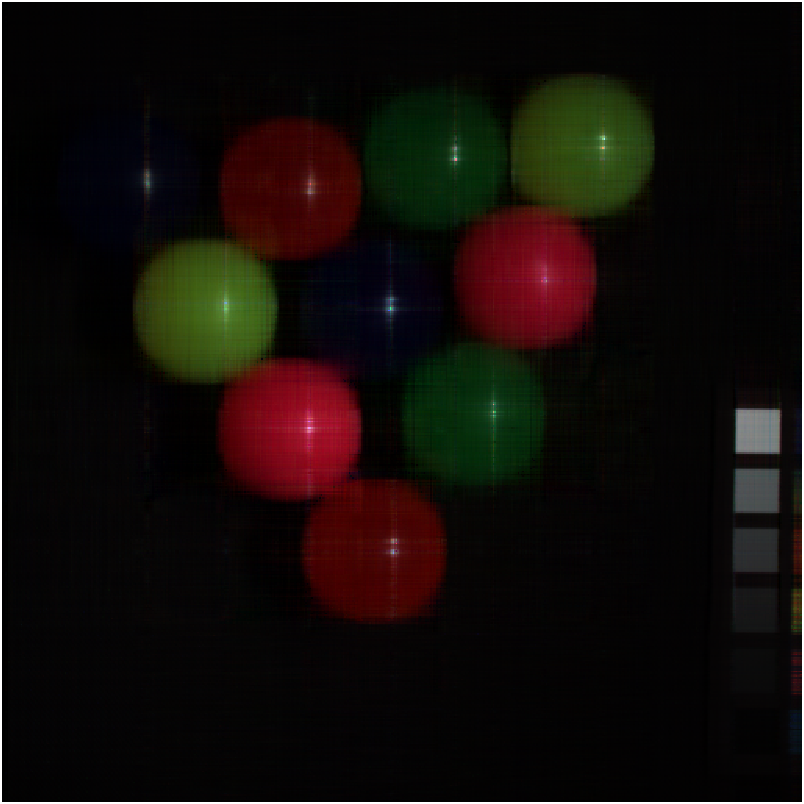}
                	\end{minipage}
                    \begin{minipage}{0.102\textwidth}
                		\centering
                		\includegraphics[width=1\textwidth, height=0.75\textwidth]{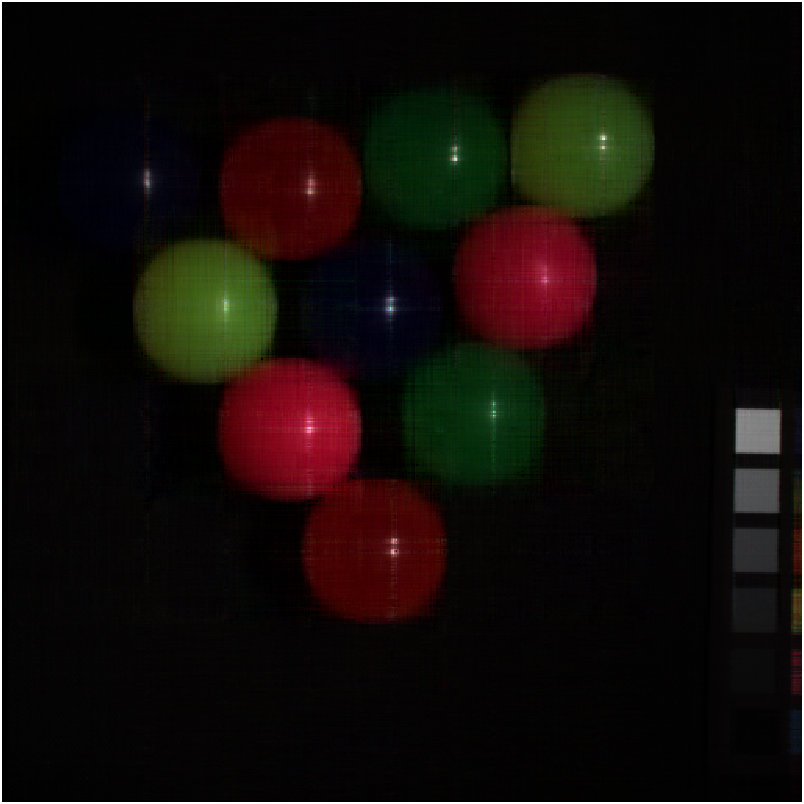}
                	\end{minipage}
                    \begin{minipage}{0.102\textwidth}
                		\centering
                		\includegraphics[width=1\textwidth, height=0.75\textwidth]{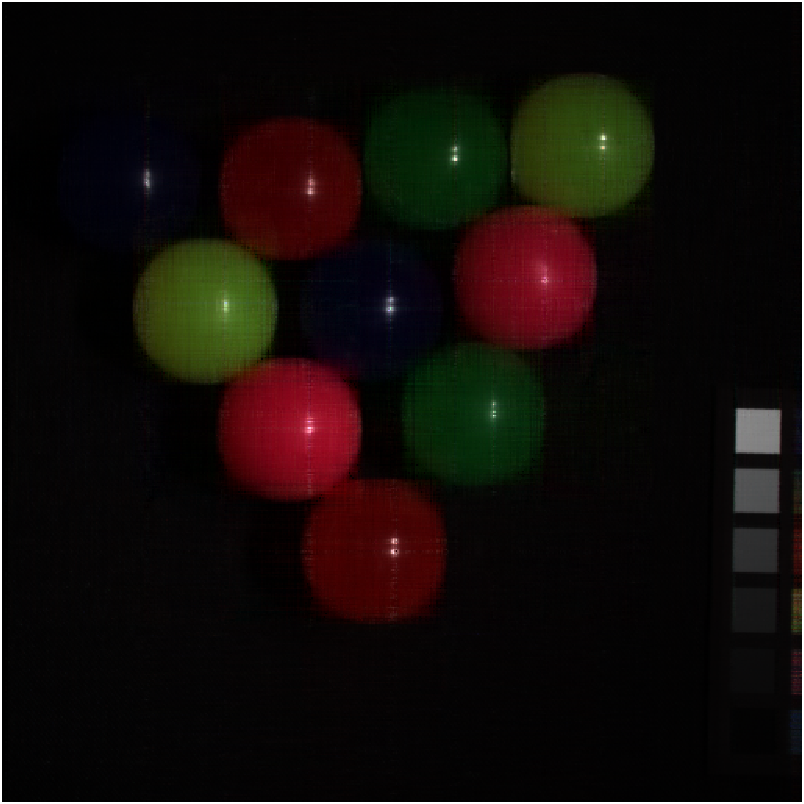}
                	\end{minipage}
                    \begin{minipage}{0.102\textwidth}
                		\centering
                		\includegraphics[width=1\textwidth, height=0.75\textwidth]{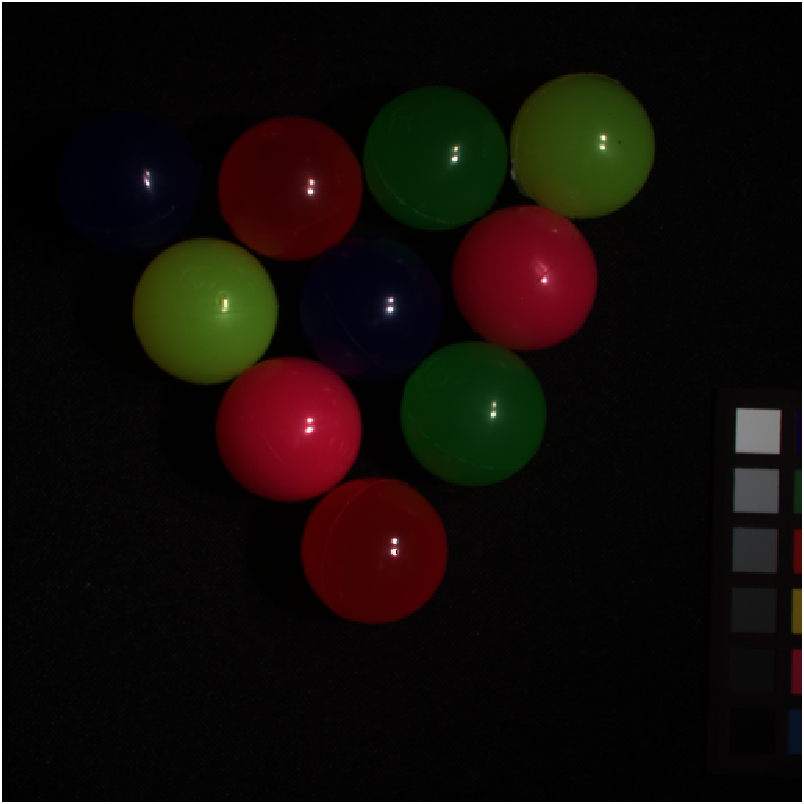}
                	\end{minipage}
                    \begin{minipage}{0.102\textwidth}
                		\centering
                		\includegraphics[width=1\textwidth, height=0.75\textwidth]{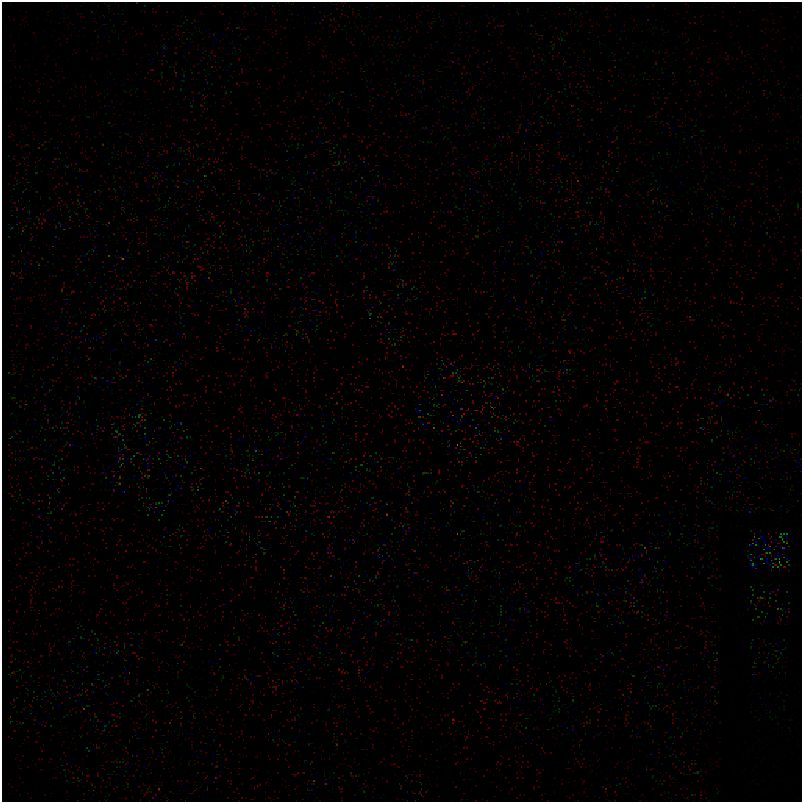}
                	\end{minipage}
                    \begin{minipage}{0.102\textwidth}
                		\centering
                		\includegraphics[width=1\textwidth, height=0.75\textwidth]{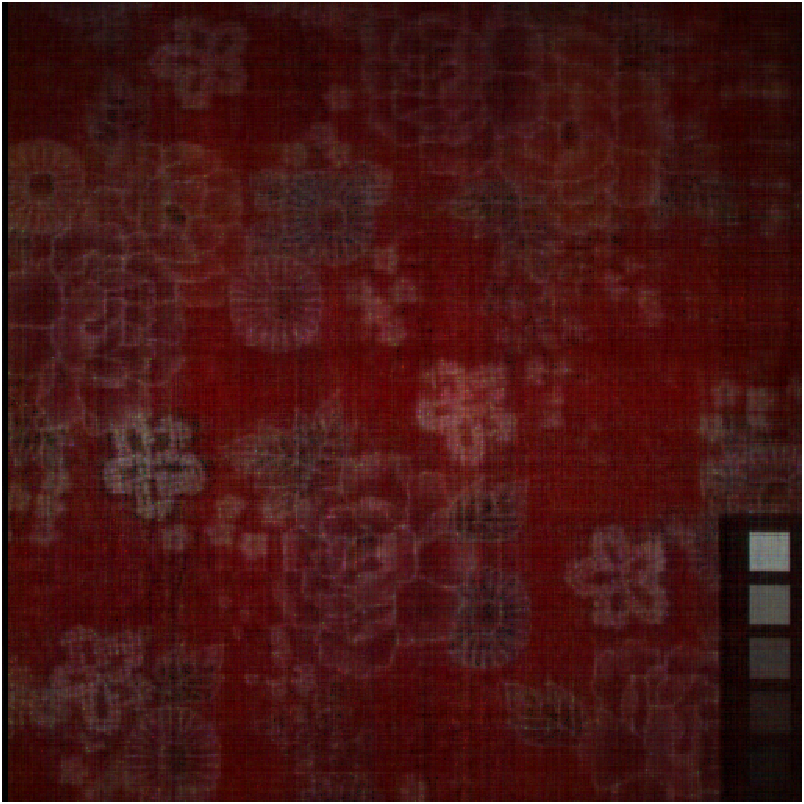}
                	\end{minipage}
                    \begin{minipage}{0.102\textwidth}
                		\centering
                		\includegraphics[width=1\textwidth, height=0.75\textwidth]{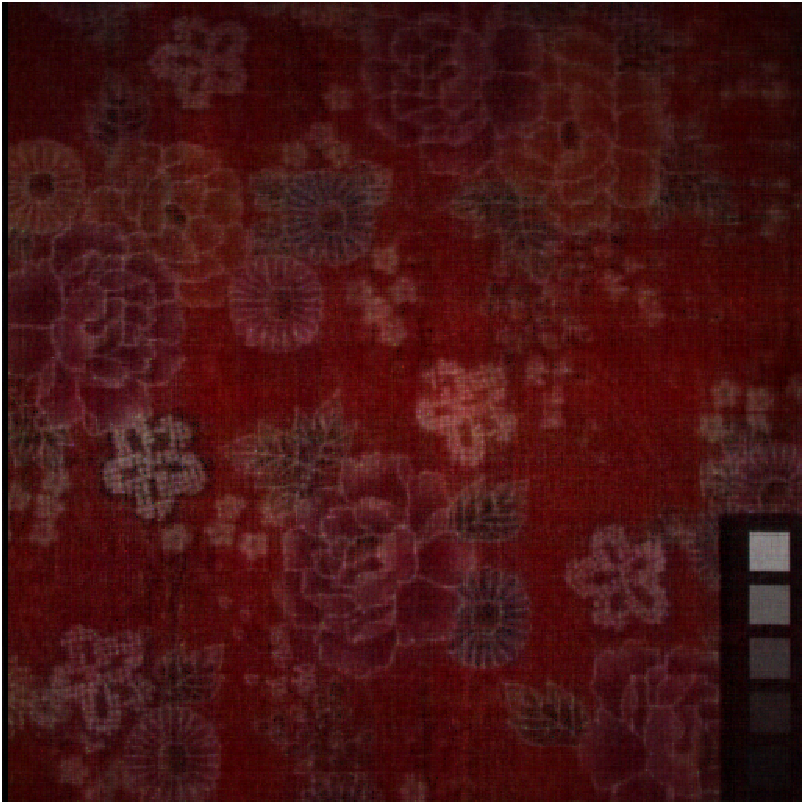}
                	\end{minipage}
                    \begin{minipage}{0.102\textwidth}
                		\centering
                		\includegraphics[width=1\textwidth, height=0.75\textwidth]{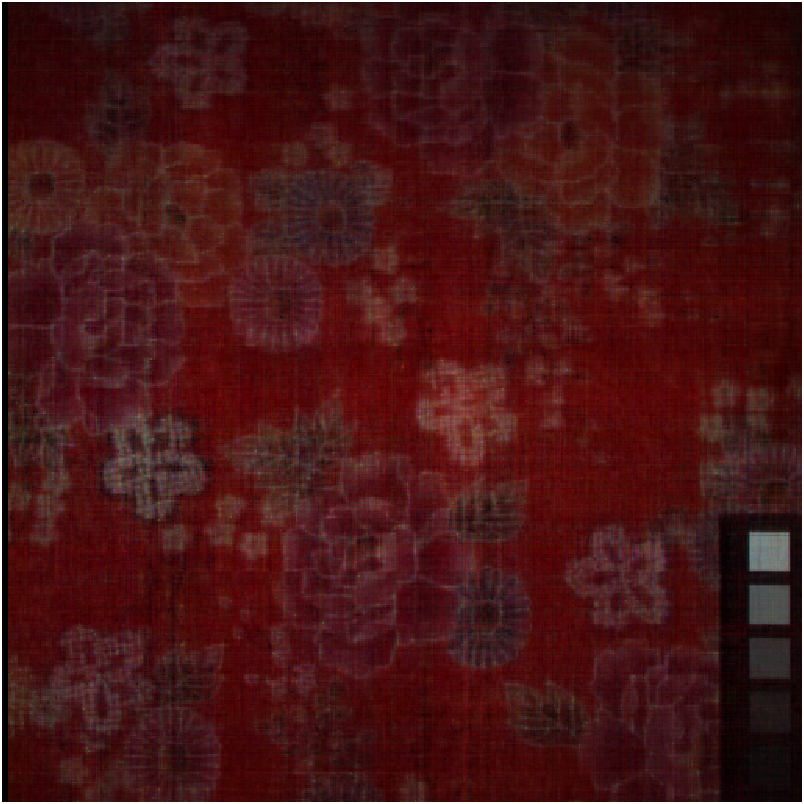}
                	\end{minipage}
                    \begin{minipage}{0.102\textwidth}
                		\centering
                		\includegraphics[width=1\textwidth, height=0.75\textwidth]{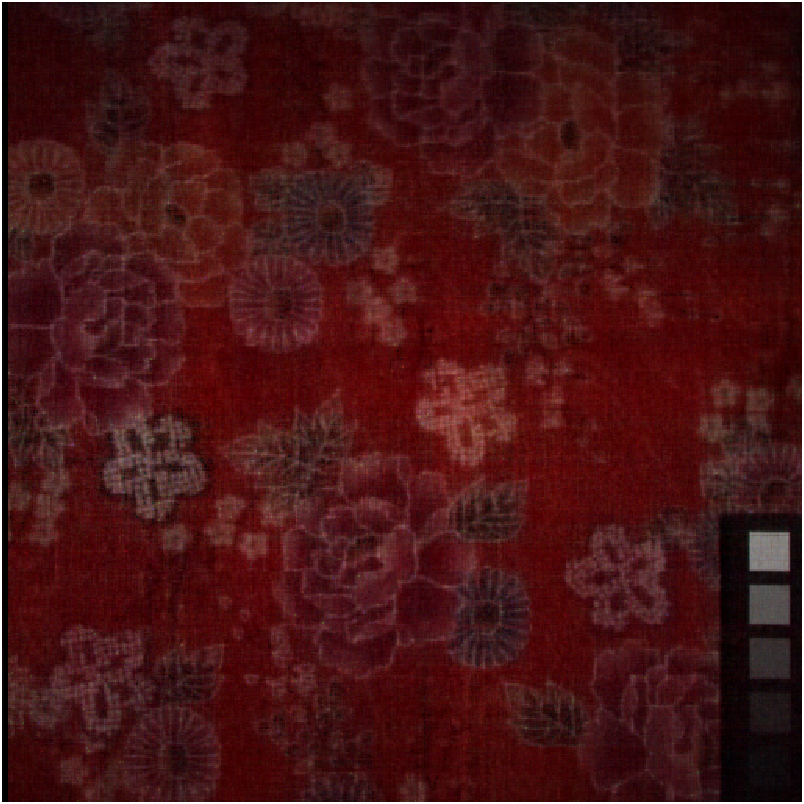}
                	\end{minipage}
                    \begin{minipage}{0.102\textwidth}
                		\centering
                		\includegraphics[width=1\textwidth, height=0.75\textwidth]{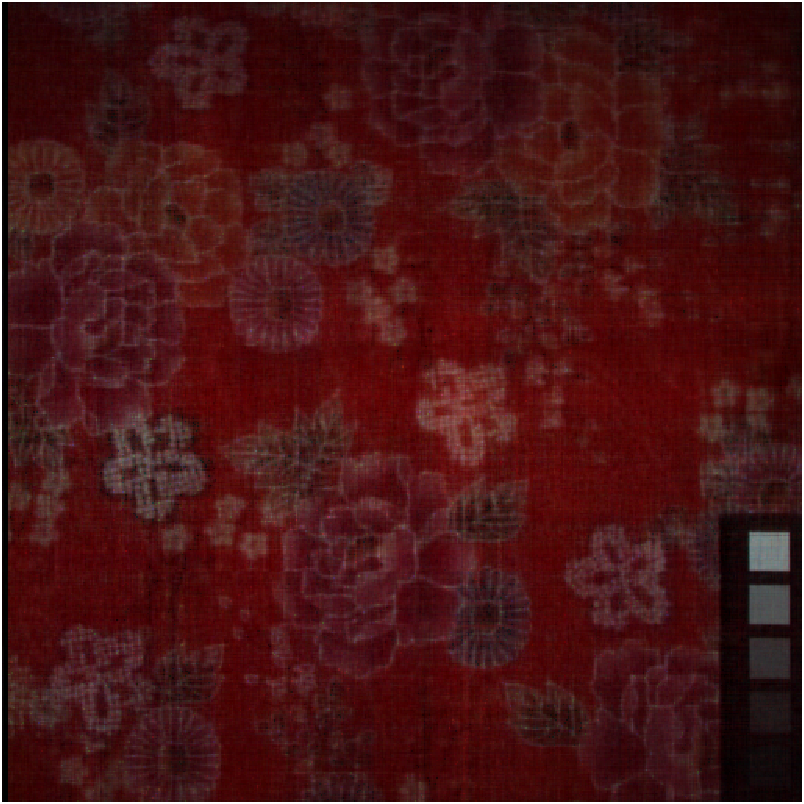}
                	\end{minipage}
                    \begin{minipage}{0.102\textwidth}
                		\centering
                		\includegraphics[width=1\textwidth, height=0.75\textwidth]{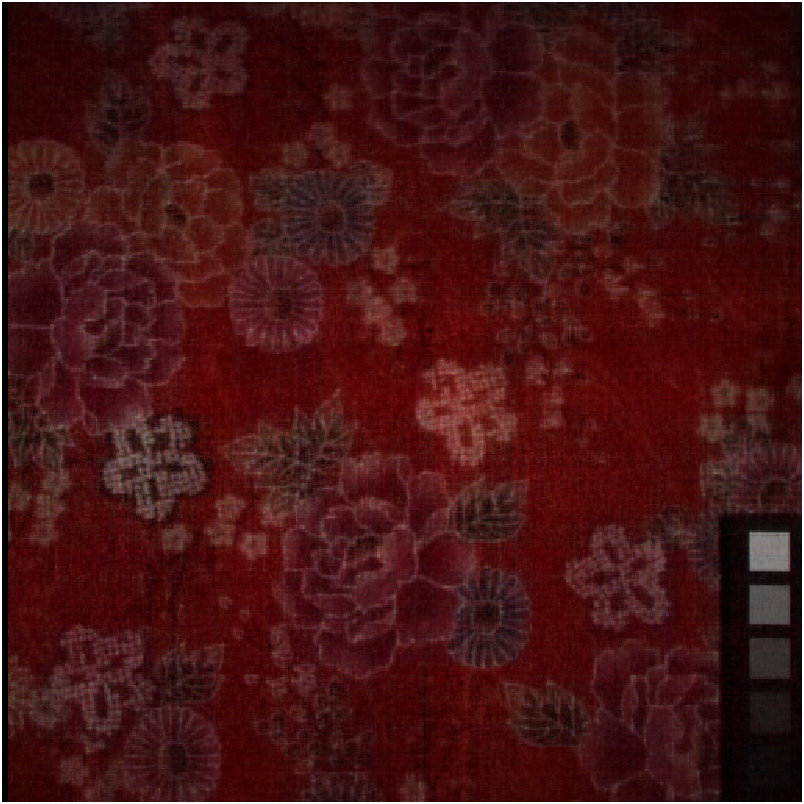}
                	\end{minipage}
                    \begin{minipage}{0.102\textwidth}
                		\centering
                		\includegraphics[width=1\textwidth, height=0.75\textwidth]{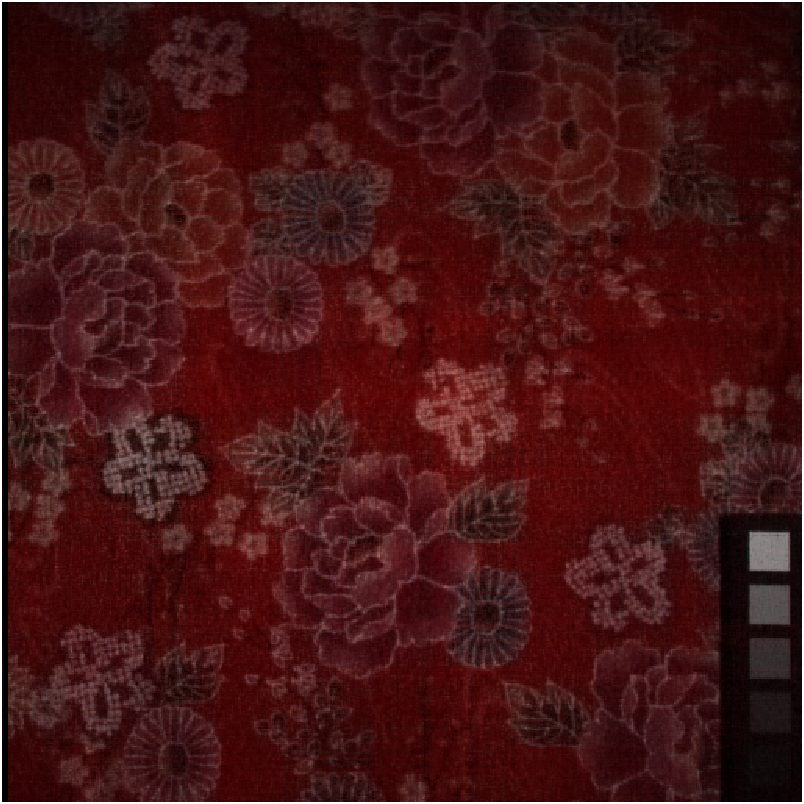}
                	\end{minipage}
                    \begin{minipage}{0.102\textwidth}
                		\centering
                		\includegraphics[width=1\textwidth, height=0.75\textwidth]{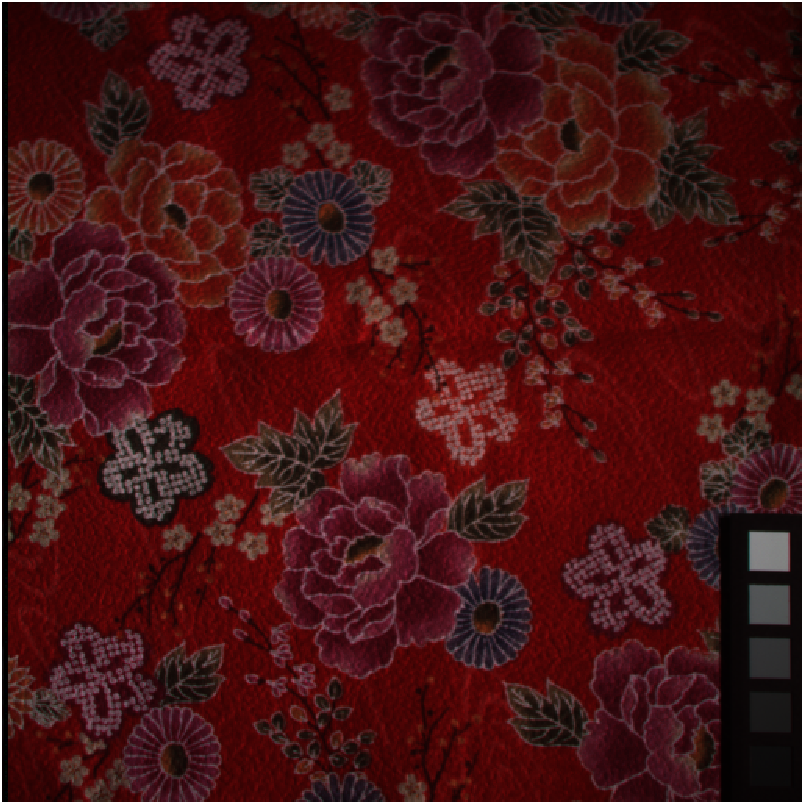}
                	\end{minipage}
                	\begin{minipage}{0.102\textwidth}
                		\centering
                		\includegraphics[width=1\textwidth, height=0.75\textwidth]{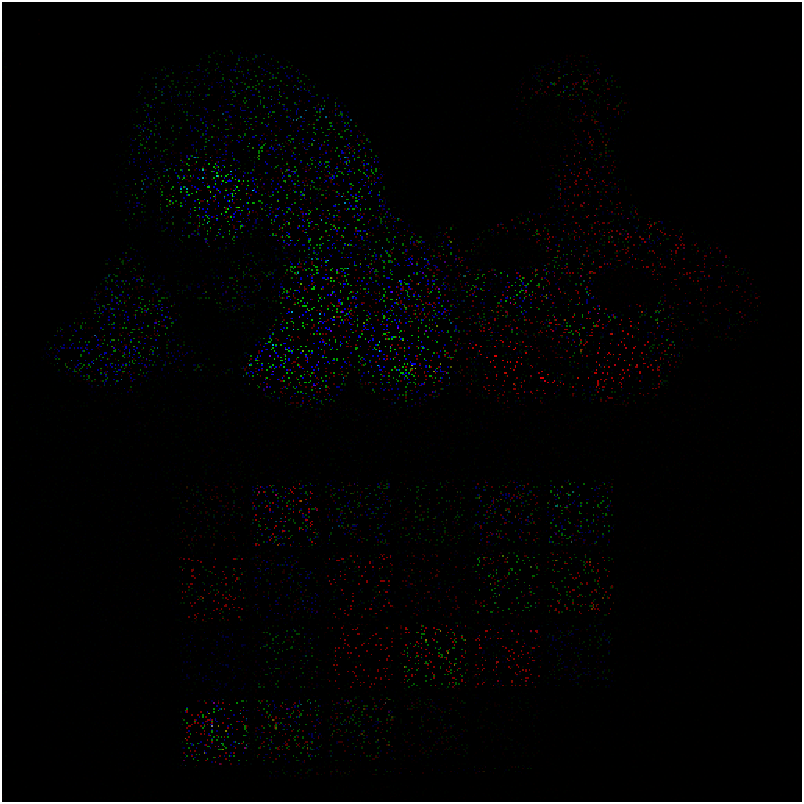}
                	\end{minipage}
                    \begin{minipage}{0.102\textwidth}
                		\centering
                		\includegraphics[width=1\textwidth, height=0.75\textwidth]{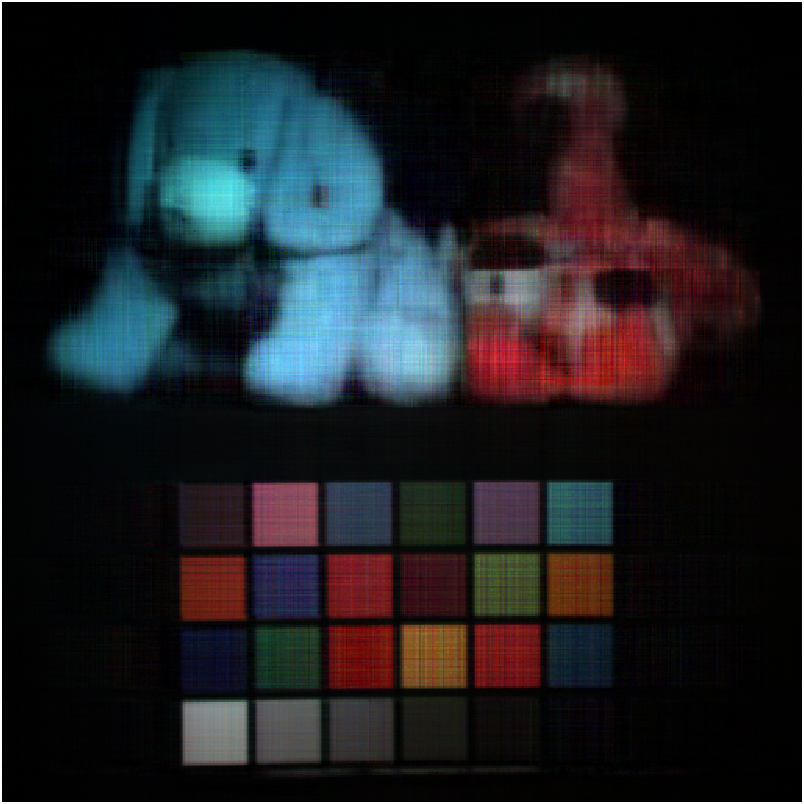}
                	\end{minipage}
                    \begin{minipage}{0.102\textwidth}
                		\centering
                		\includegraphics[width=1\textwidth, height=0.75\textwidth]{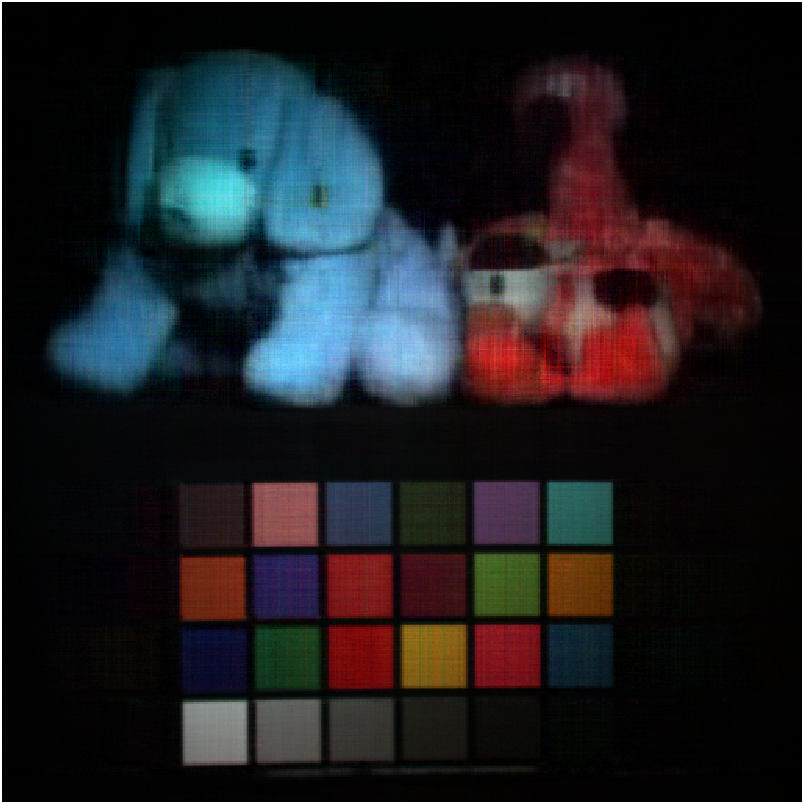}
                	\end{minipage}
                    \begin{minipage}{0.102\textwidth}
                		\centering
                		\includegraphics[width=1\textwidth, height=0.75\textwidth]{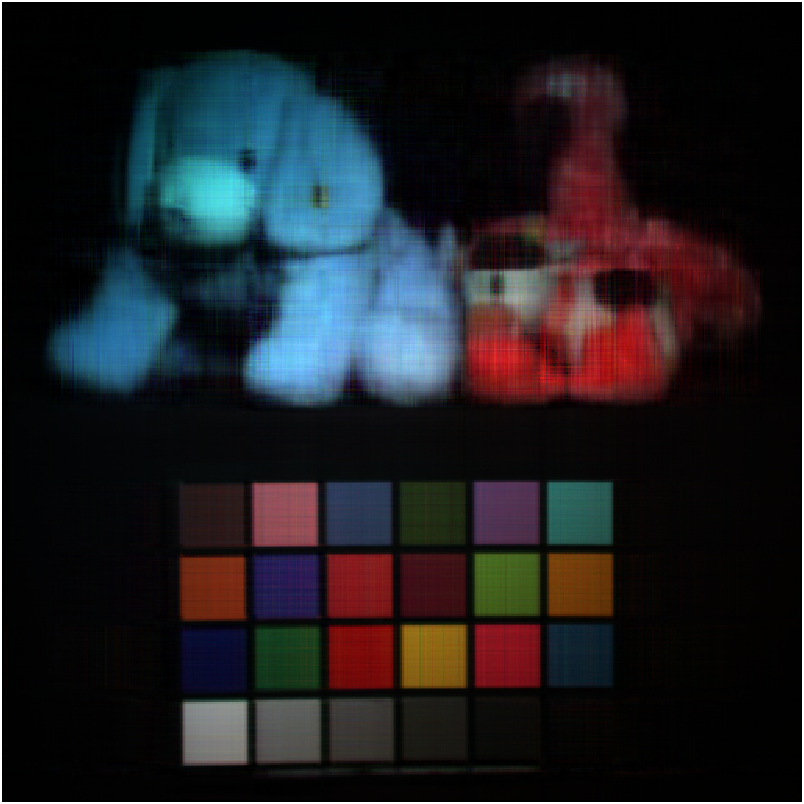}
                	\end{minipage}
                    \begin{minipage}{0.102\textwidth}
                		\centering
                		\includegraphics[width=1\textwidth, height=0.75\textwidth]{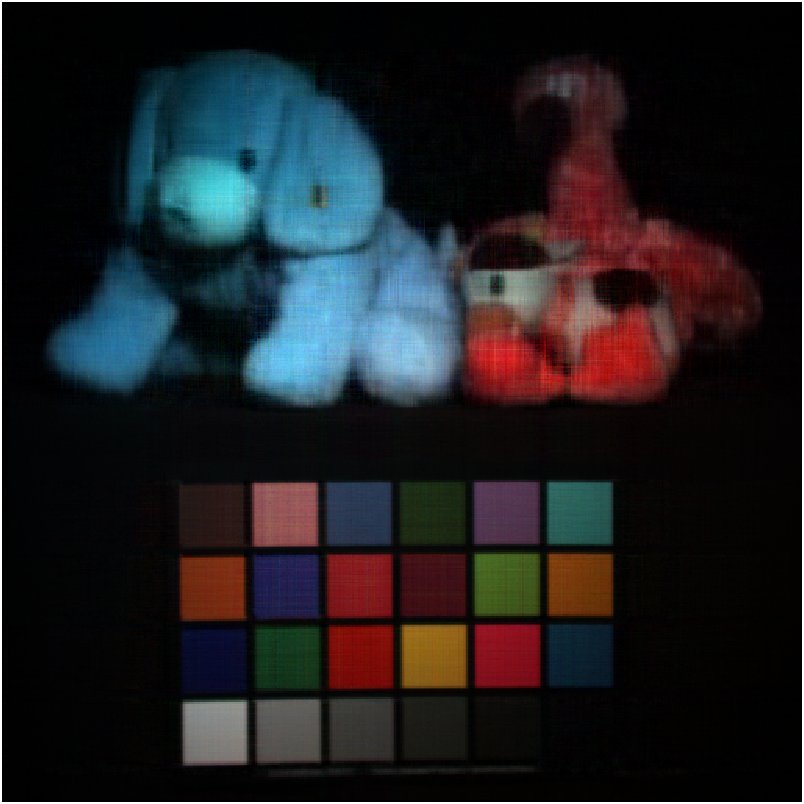}
                	\end{minipage}
                    \begin{minipage}{0.102\textwidth}
                		\centering
                		\includegraphics[width=1\textwidth, height=0.75\textwidth]{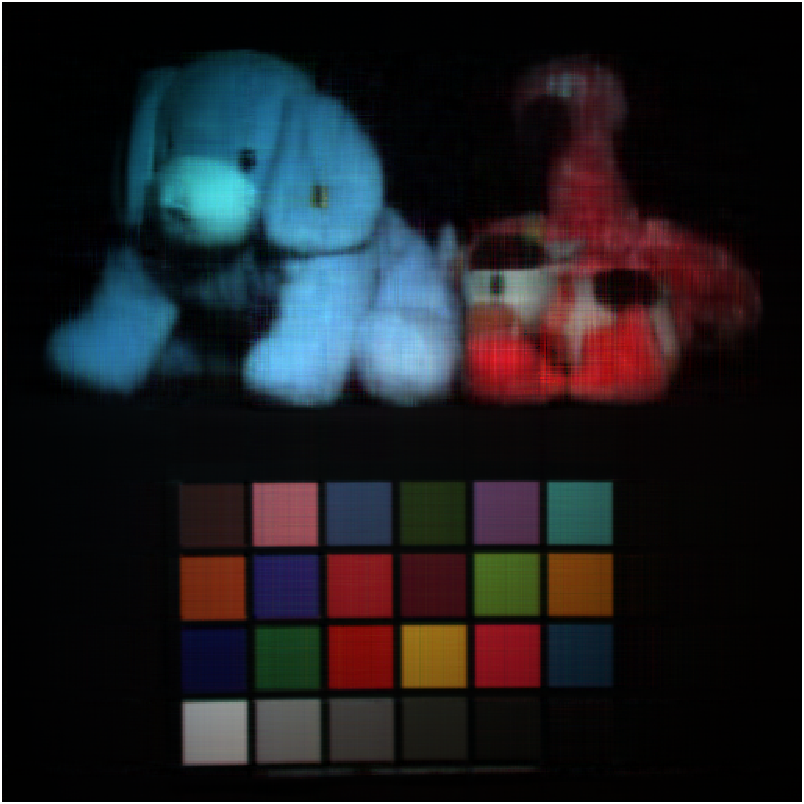}
                	\end{minipage}
                    \begin{minipage}{0.102\textwidth}
                		\centering
                		\includegraphics[width=1\textwidth, height=0.75\textwidth]{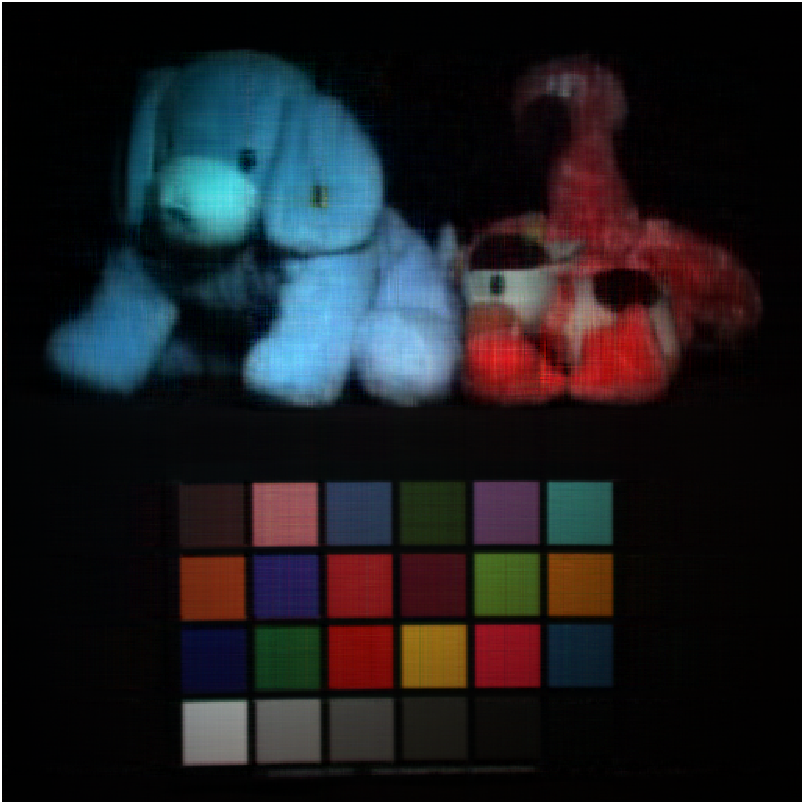}
                	\end{minipage}
                    \begin{minipage}{0.102\textwidth}
                		\centering
                		\includegraphics[width=1\textwidth, height=0.75\textwidth]{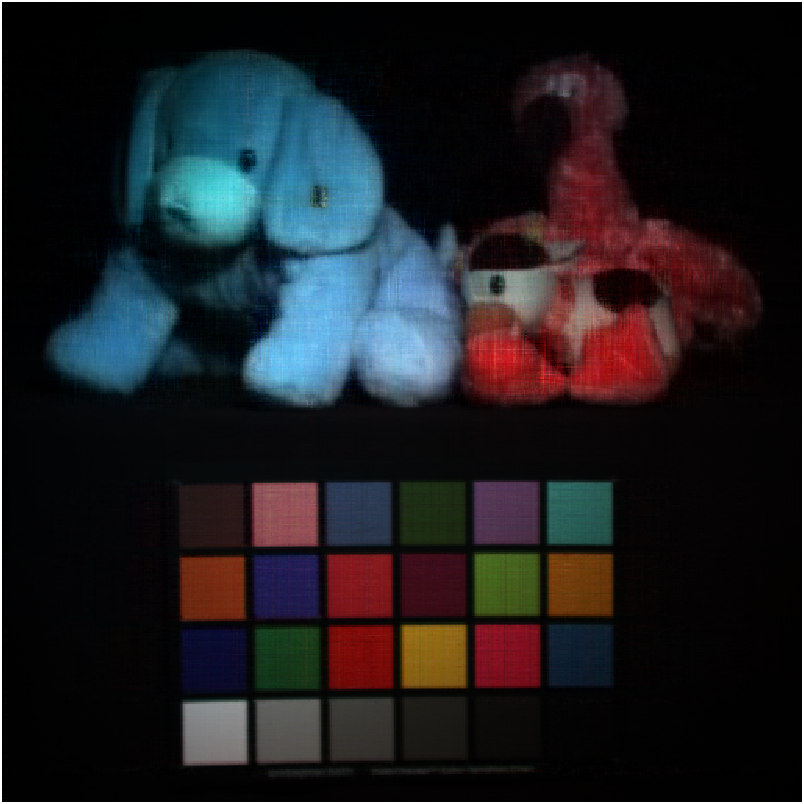}
                	\end{minipage}
                    \begin{minipage}{0.102\textwidth}
                		\centering
                		\includegraphics[width=1\textwidth, height=0.75\textwidth]{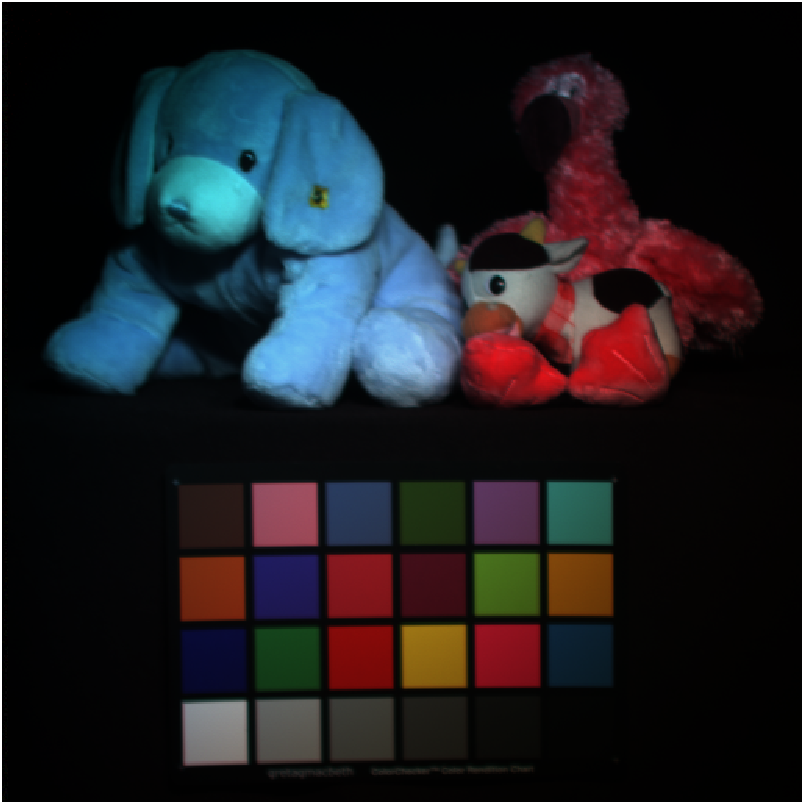}
                	\end{minipage}
                    \begin{minipage}{0.102\textwidth}
                		\centering
                		\includegraphics[width=1\textwidth, height=0.75\textwidth]{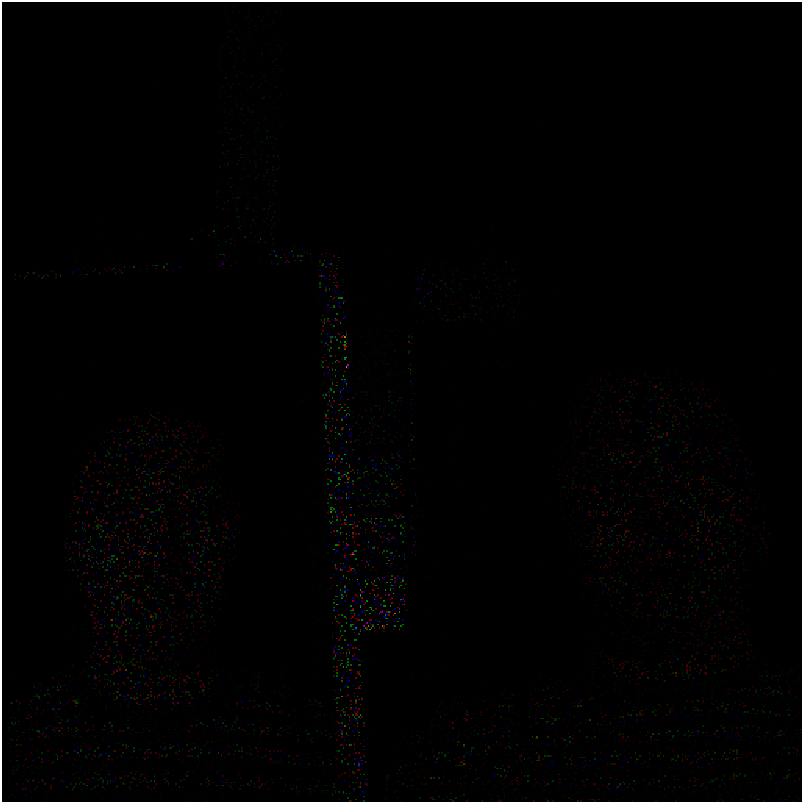}
                	\end{minipage}
                    \begin{minipage}{0.102\textwidth}
                		\centering
                		\includegraphics[width=1\textwidth, height=0.75\textwidth]{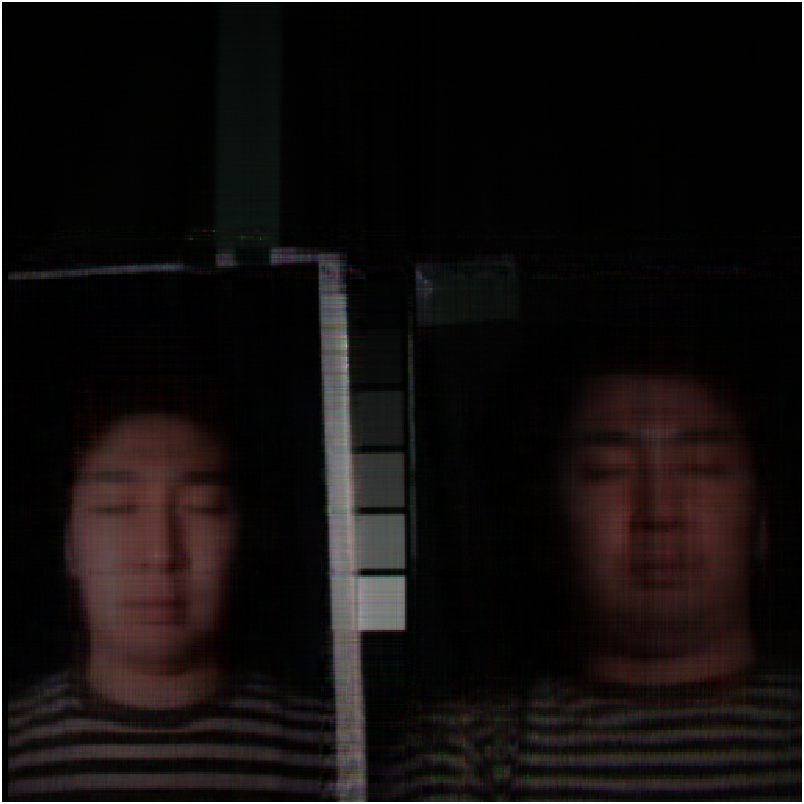}
                	\end{minipage}
                    \begin{minipage}{0.102\textwidth}
                		\centering
                		\includegraphics[width=1\textwidth, height=0.75\textwidth]{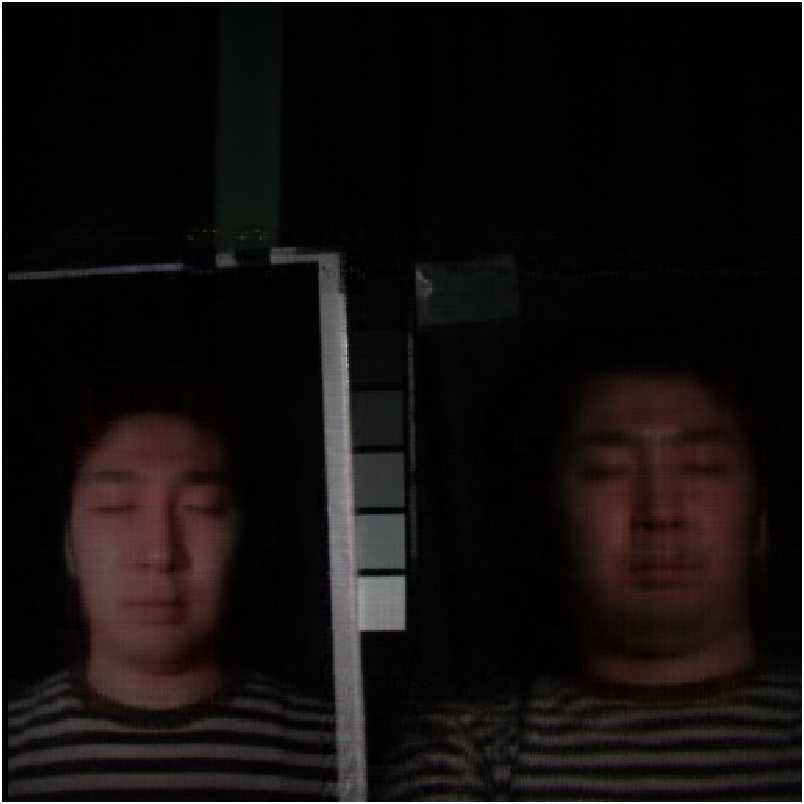}
                	\end{minipage}
                    \begin{minipage}{0.102\textwidth}
                		\centering
                		\includegraphics[width=1\textwidth, height=0.75\textwidth]{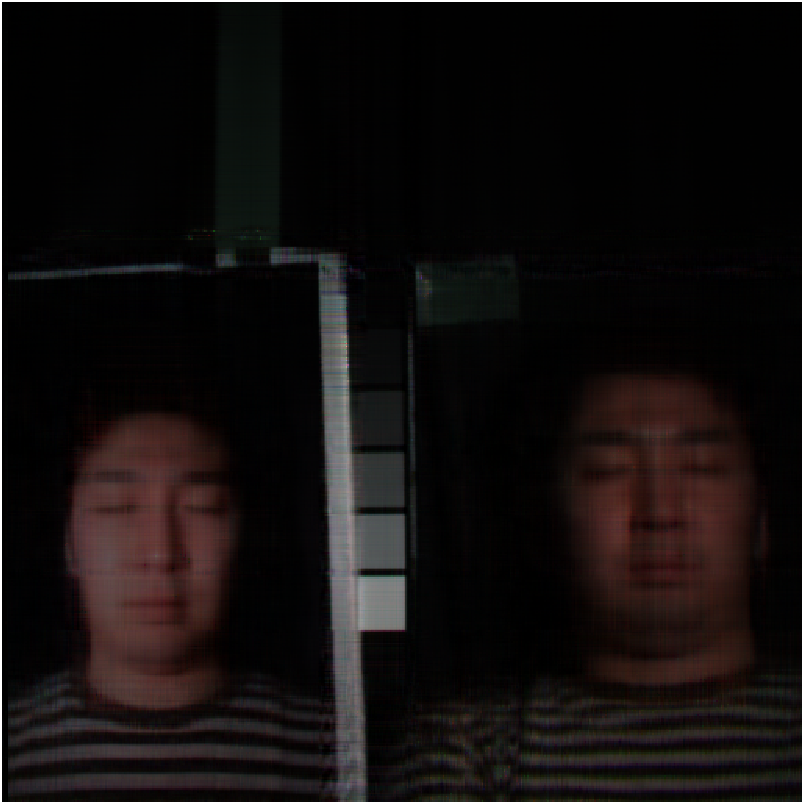}
                	\end{minipage}
                    \begin{minipage}{0.102\textwidth}
                		\centering
                		\includegraphics[width=1\textwidth, height=0.75\textwidth]{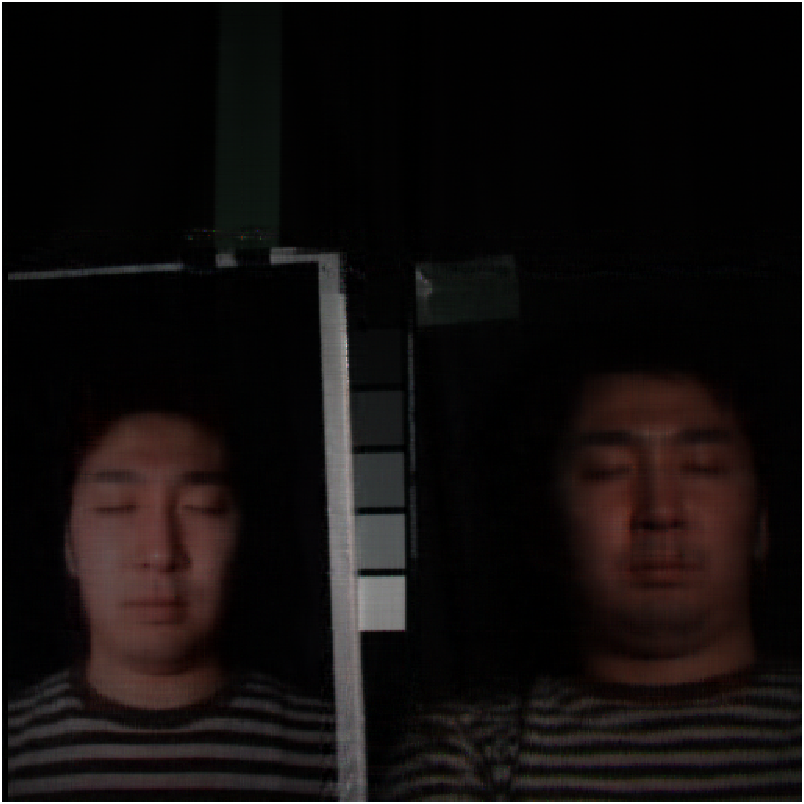}
                	\end{minipage}
                    \begin{minipage}{0.102\textwidth}
                		\centering
                		\includegraphics[width=1\textwidth, height=0.75\textwidth]{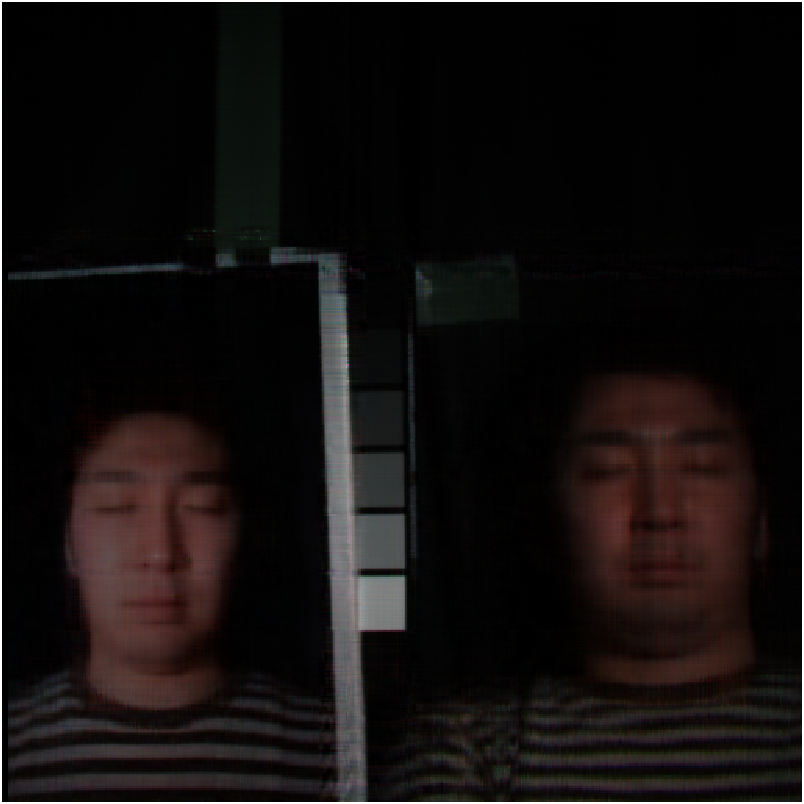}
                	\end{minipage}
                    \begin{minipage}{0.102\textwidth}
                		\centering
                		\includegraphics[width=1\textwidth, height=0.75\textwidth]{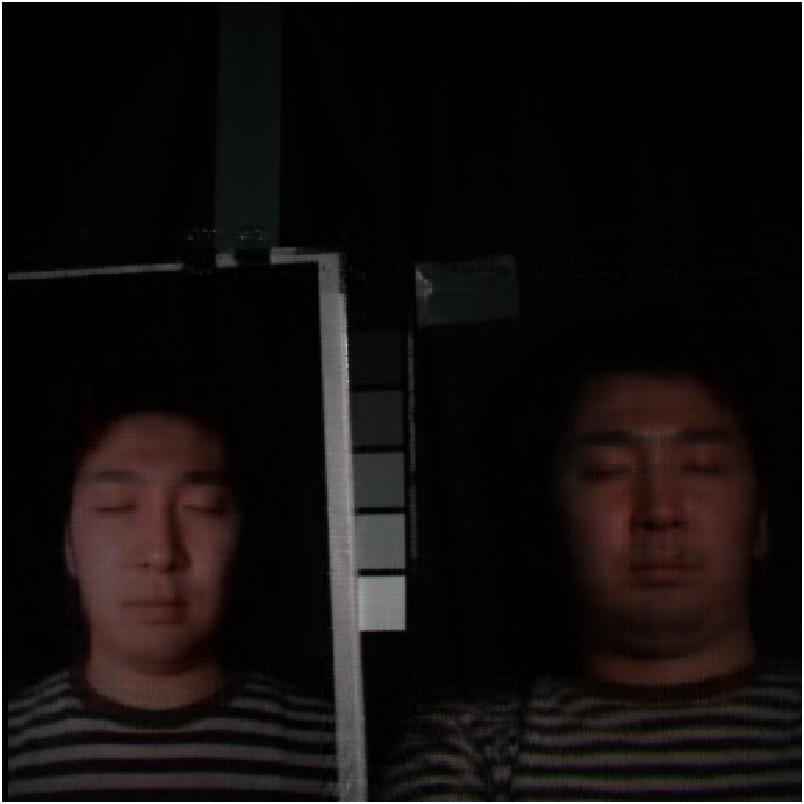}
                	\end{minipage}
                    \begin{minipage}{0.102\textwidth}
                		\centering
                		\includegraphics[width=1\textwidth, height=0.75\textwidth]{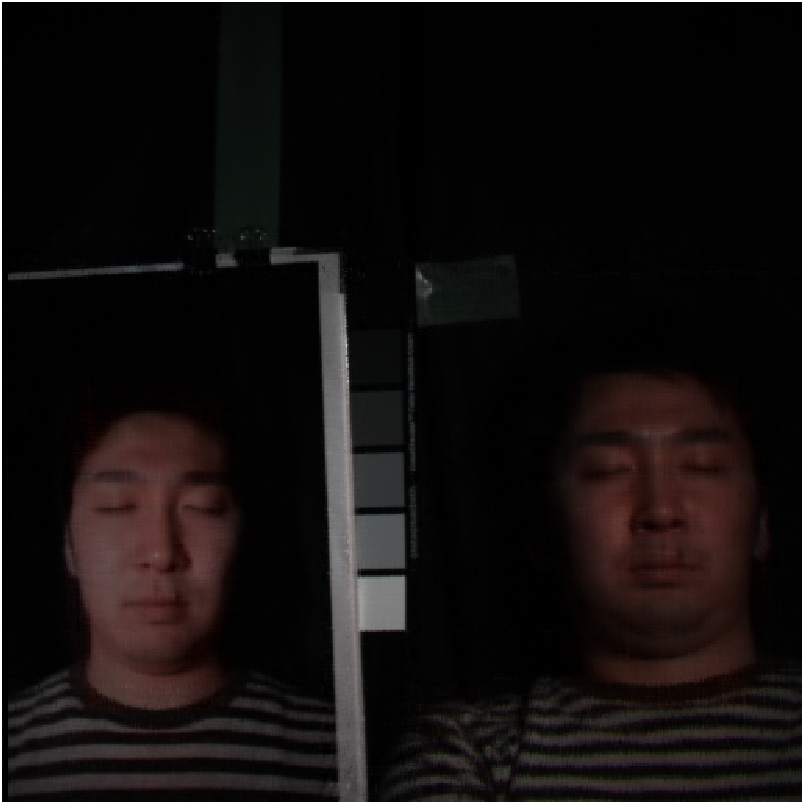}
                	\end{minipage}
                    \begin{minipage}{0.102\textwidth}
                		\centering
                		\includegraphics[width=1\textwidth, height=0.75\textwidth]{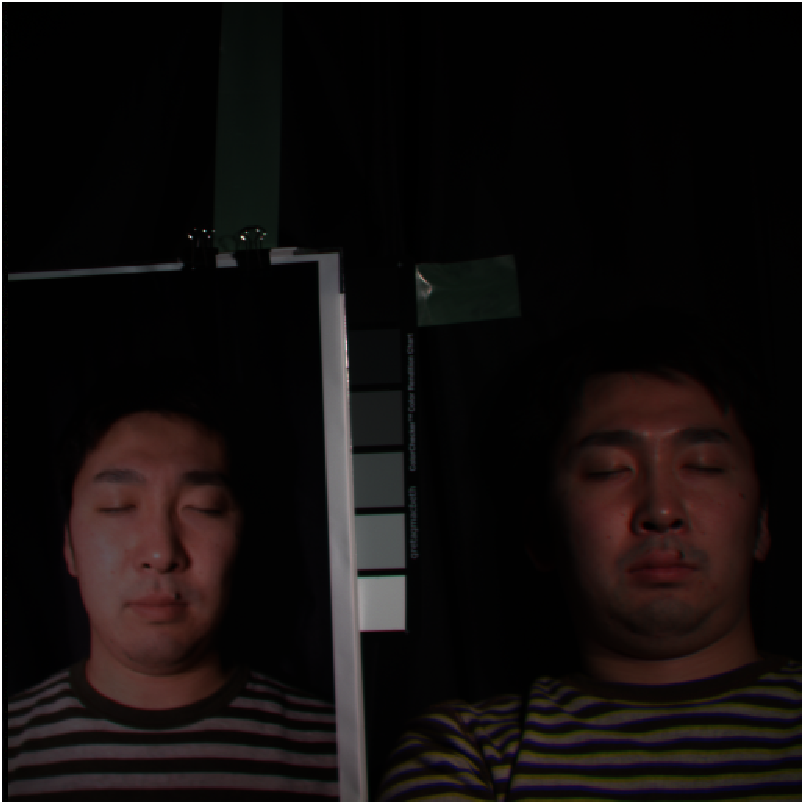}
                	\end{minipage}
                    \begin{minipage}{0.102\textwidth}
                		\centering
                		\includegraphics[width=1\textwidth, height=0.75\textwidth]{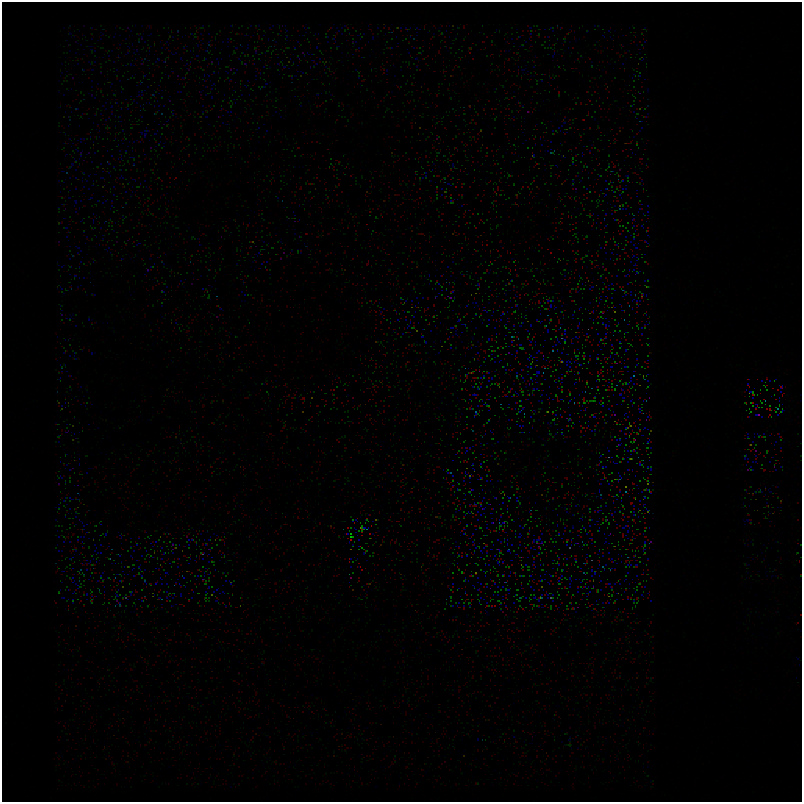}
                	\end{minipage}
                    \begin{minipage}{0.102\textwidth}
                		\centering
                		\includegraphics[width=1\textwidth, height=0.75\textwidth]{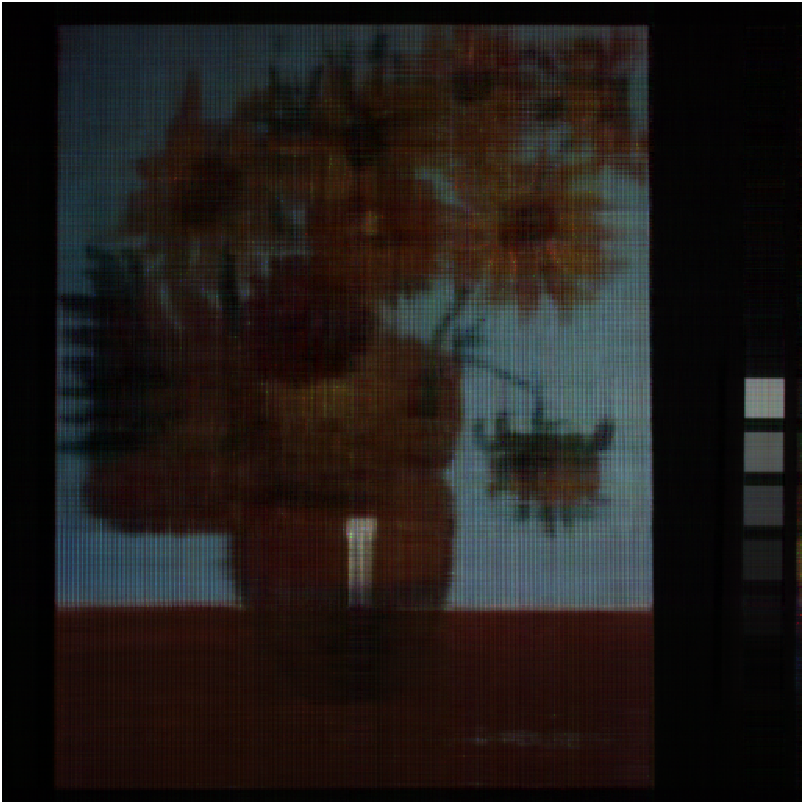}
                	\end{minipage}
                    \begin{minipage}{0.102\textwidth}
                		\centering
                		\includegraphics[width=1\textwidth, height=0.75\textwidth]{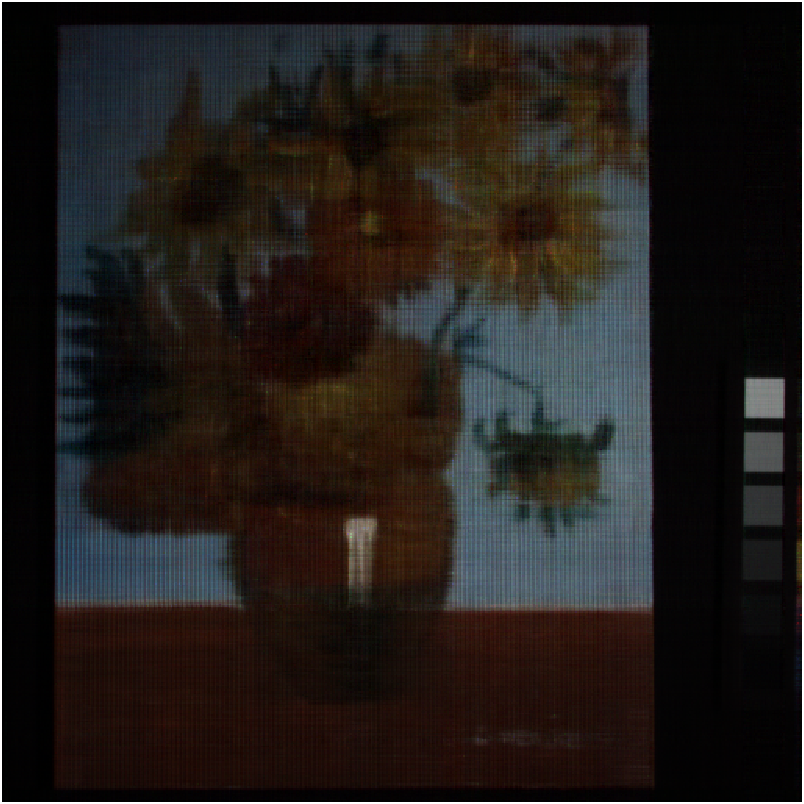}
                	\end{minipage}
                    \begin{minipage}{0.102\textwidth}
                		\centering
                		\includegraphics[width=1\textwidth, height=0.75\textwidth]{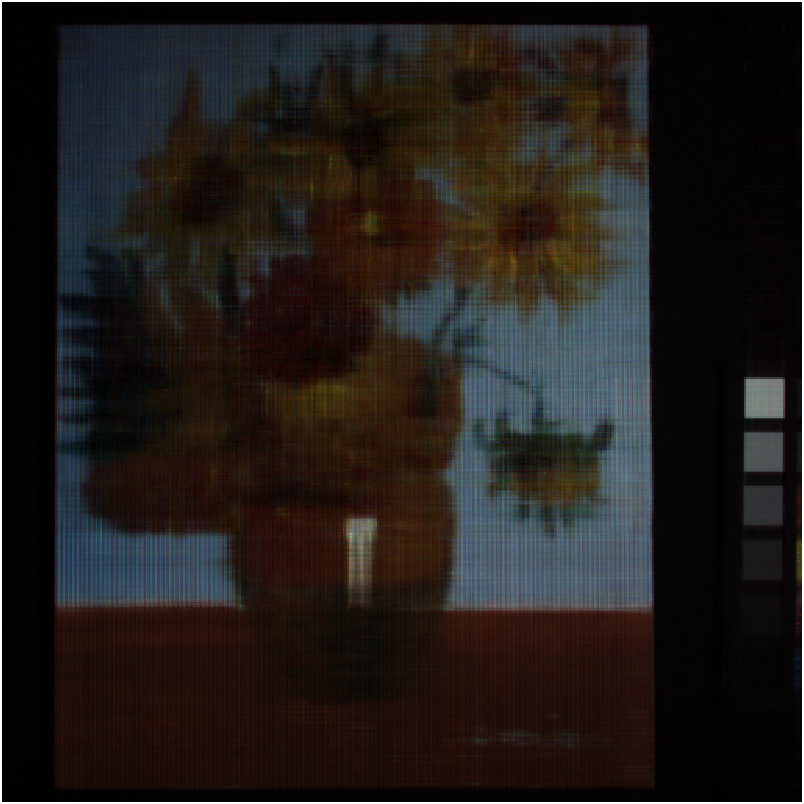}
                	\end{minipage}
                    \begin{minipage}{0.102\textwidth}
                		\centering
                		\includegraphics[width=1\textwidth, height=0.75\textwidth]{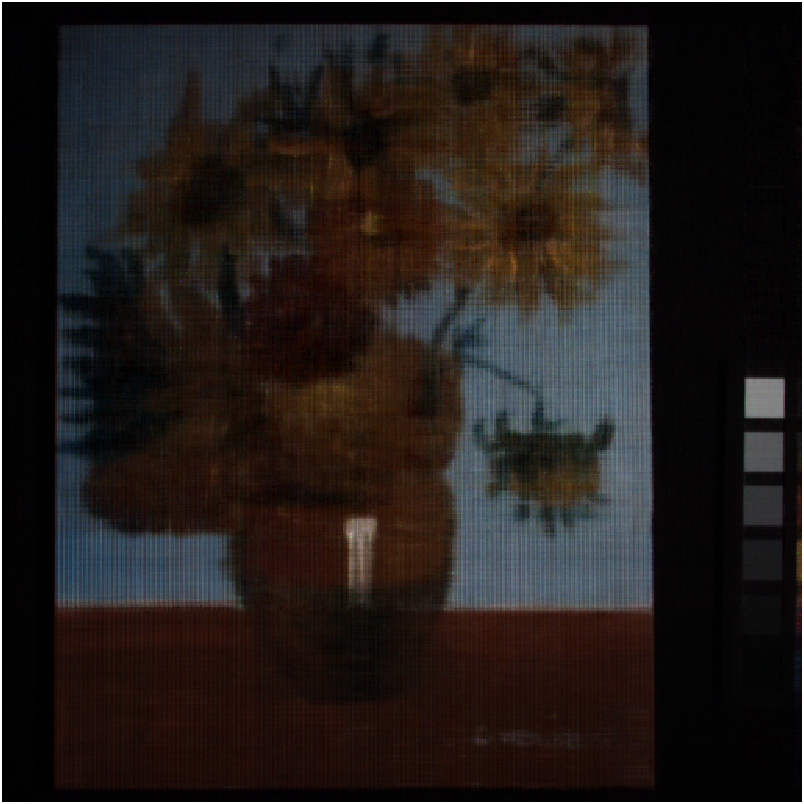}
                	\end{minipage}
                    \begin{minipage}{0.102\textwidth}
                		\centering
                		\includegraphics[width=1\textwidth, height=0.75\textwidth]{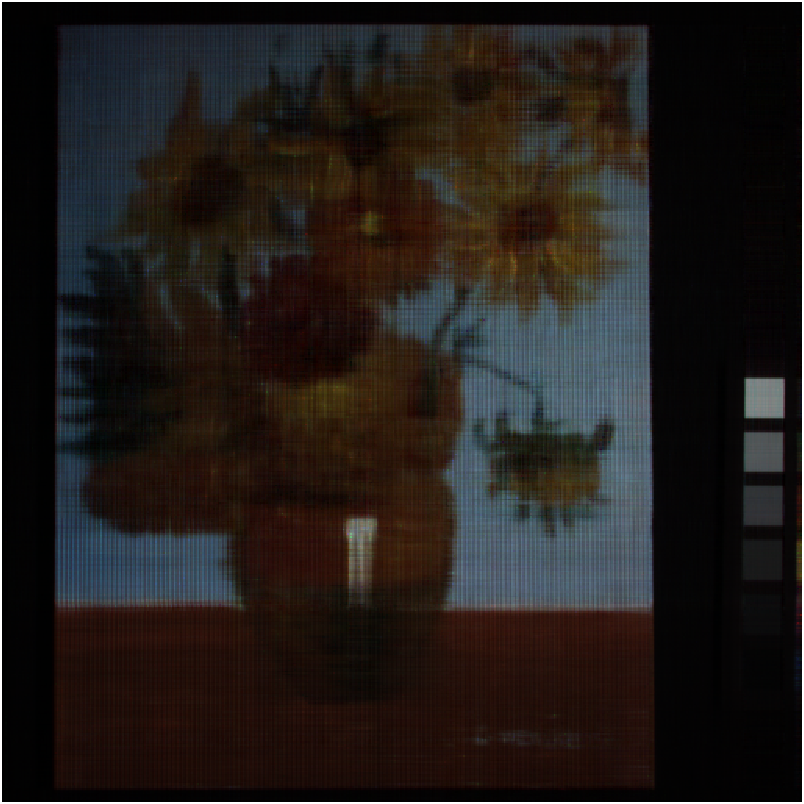}
                	\end{minipage}
                    \begin{minipage}{0.102\textwidth}
                		\centering
                		\includegraphics[width=1\textwidth, height=0.75\textwidth]{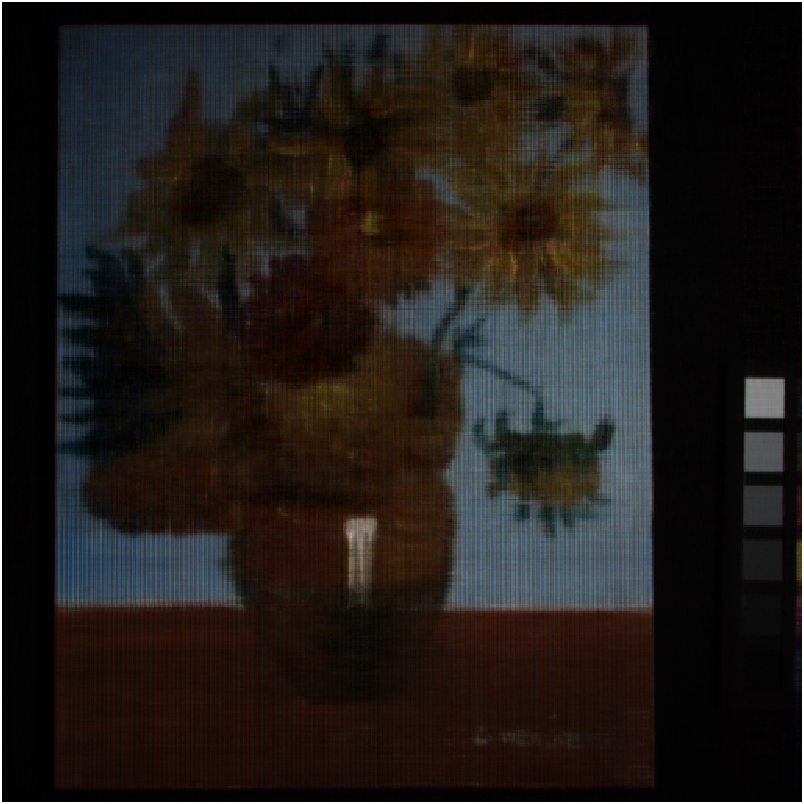}
                	\end{minipage}
                    \begin{minipage}{0.102\textwidth}
                		\centering
                		\includegraphics[width=1\textwidth, height=0.75\textwidth]{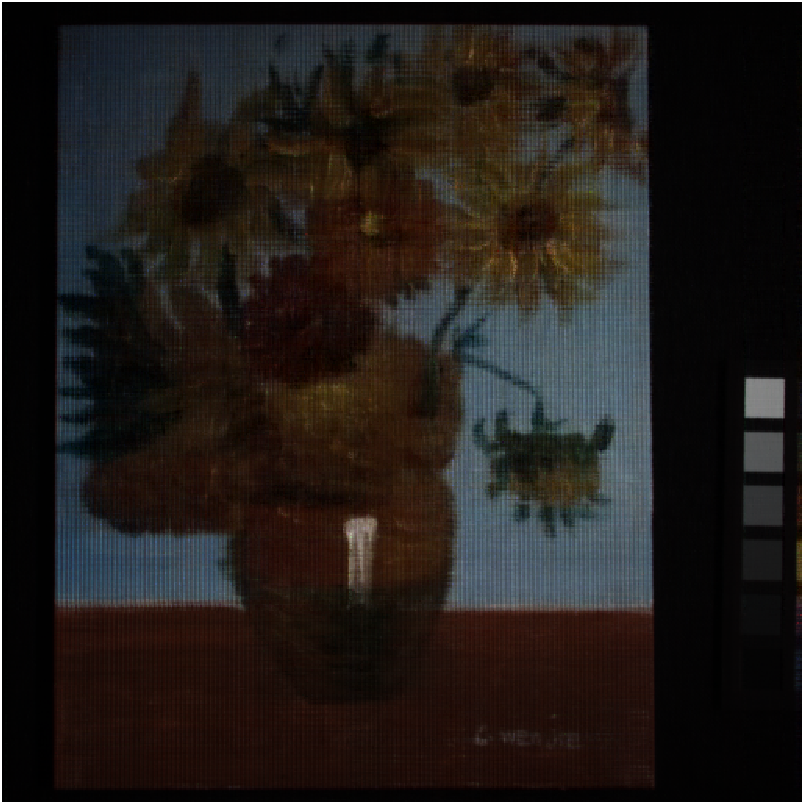}
                	\end{minipage}
                    \begin{minipage}{0.102\textwidth}
                		\centering
                		\includegraphics[width=1\textwidth, height=0.75\textwidth]{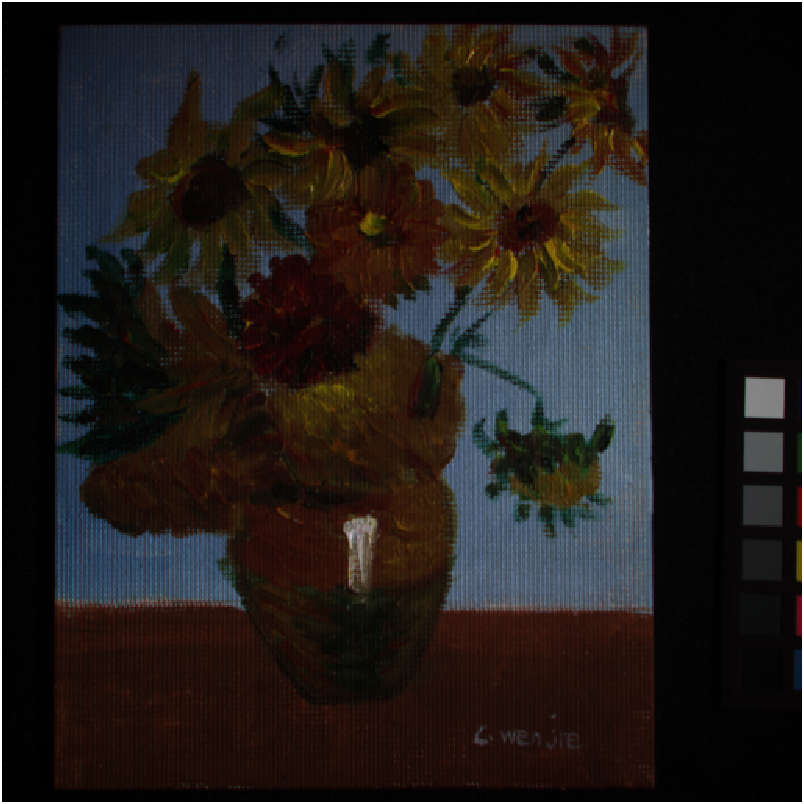}
                	\end{minipage}
                    \begin{minipage}{0.102\textwidth}
                		\centering
                		\includegraphics[width=1\textwidth, height=0.75\textwidth]{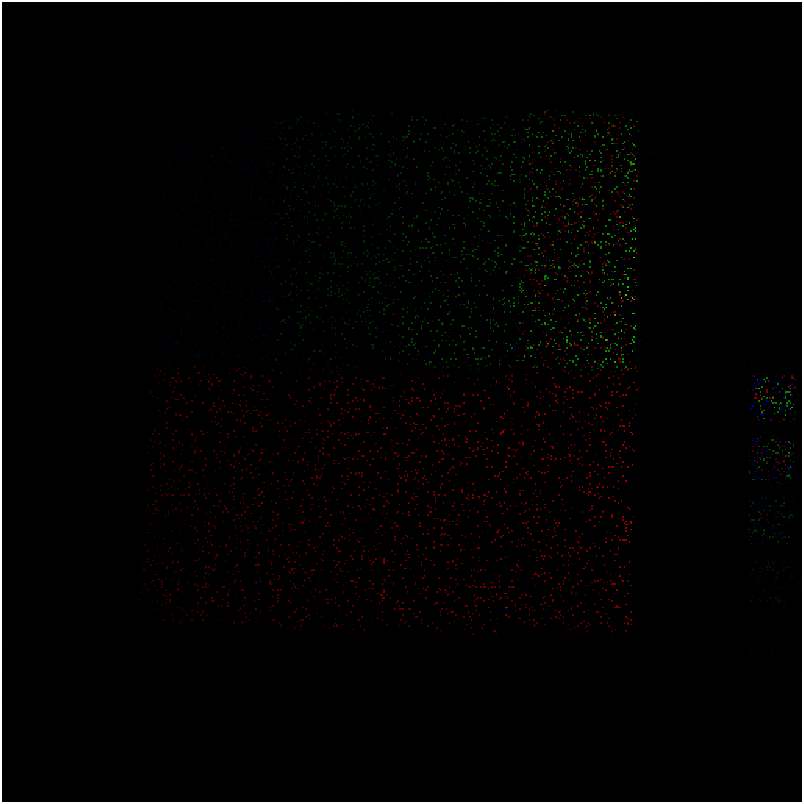}
                        \subcaption{\scriptsize \scriptsize Observed}
                	\end{minipage}
                    \begin{minipage}{0.102\textwidth}
                		\centering
                		\includegraphics[width=1\textwidth, height=0.75\textwidth]{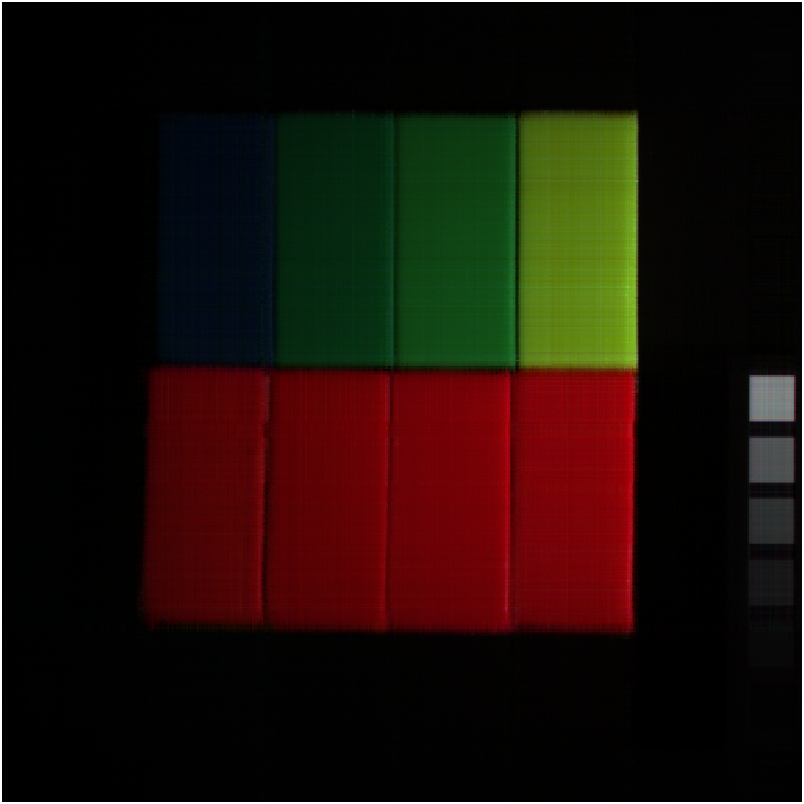}
                        \subcaption{\scriptsize \scriptsize DWHT}
                	\end{minipage}
                    \begin{minipage}{0.102\textwidth}
                		\centering
                		\includegraphics[width=1\textwidth, height=0.75\textwidth]{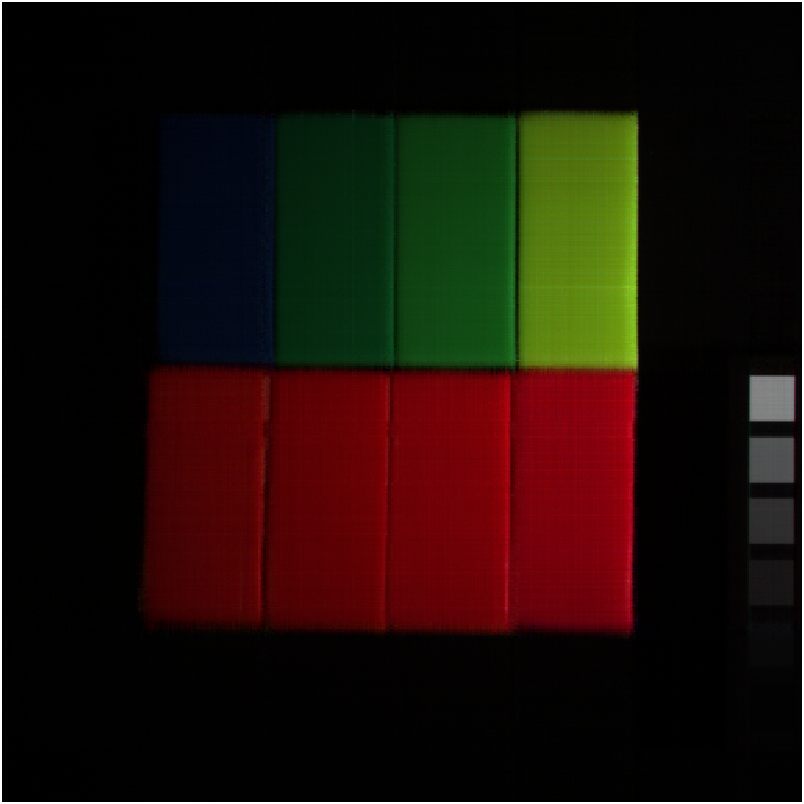}
                        \subcaption{\scriptsize \scriptsize DFT}
                	\end{minipage}
                    \begin{minipage}{0.102\textwidth}
                		\centering
                		\includegraphics[width=1\textwidth, height=0.75\textwidth]{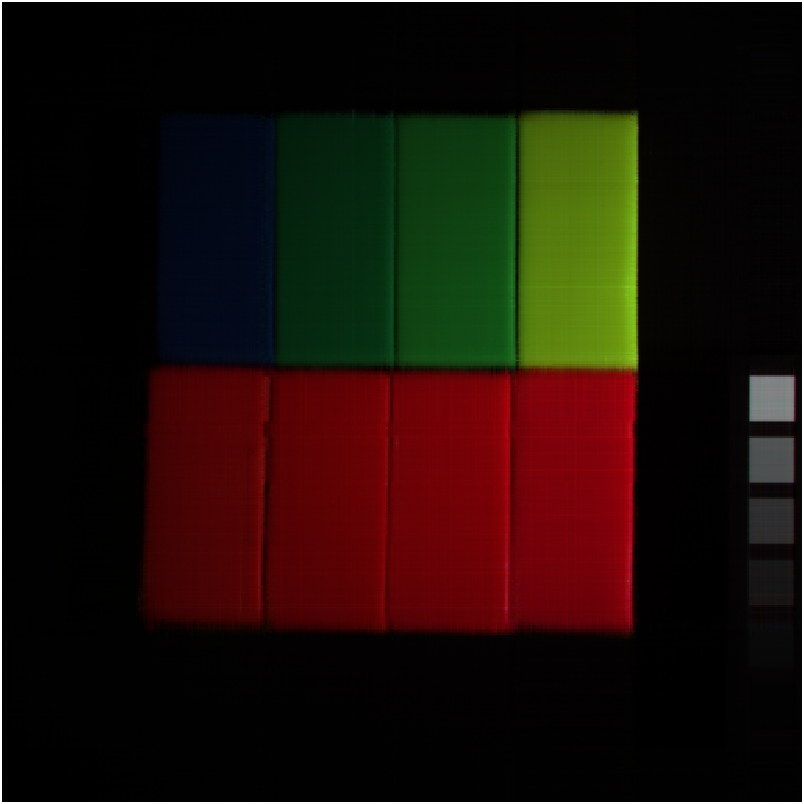}
                        \subcaption{\scriptsize \scriptsize DFT2}
                	\end{minipage}
                    \begin{minipage}{0.102\textwidth}
                		\centering
                		\includegraphics[width=1\textwidth, height=0.75\textwidth]{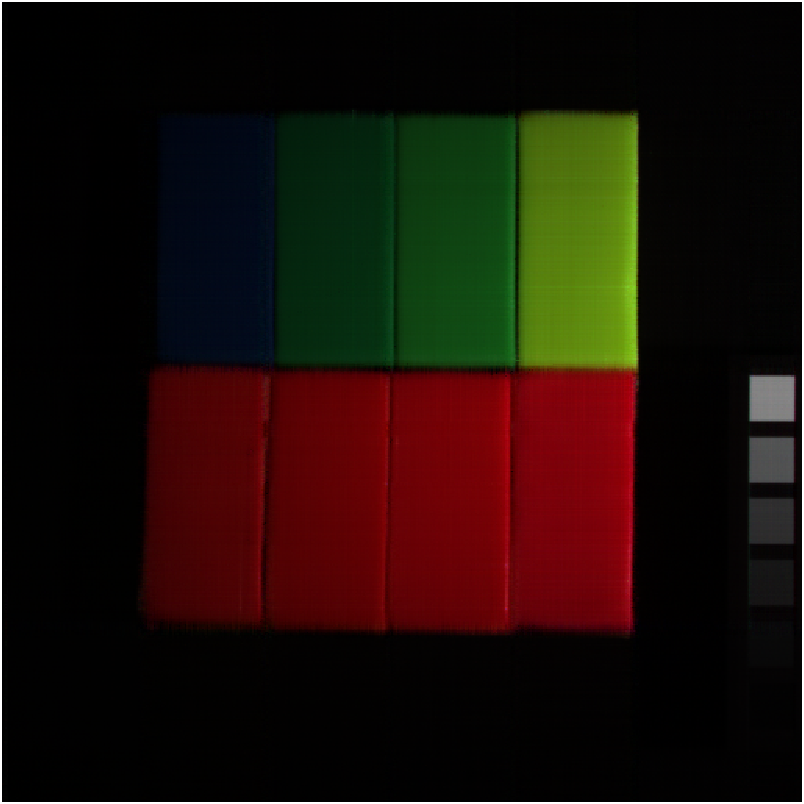}
                        \subcaption{\scriptsize \scriptsize DCT}
                	\end{minipage}
                    \begin{minipage}{0.102\textwidth}
                		\centering
                		\includegraphics[width=1\textwidth, height=0.75\textwidth]{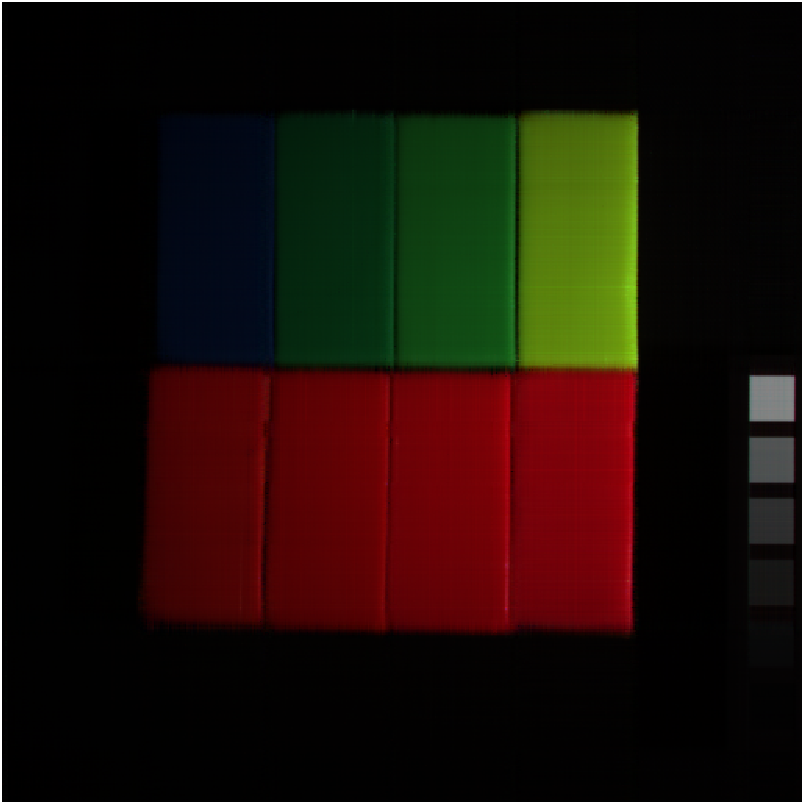}
                        \subcaption{\scriptsize \scriptsize DCT2}
                	\end{minipage}
                    \begin{minipage}{0.102\textwidth}
                		\centering
                		\includegraphics[width=1\textwidth, height=0.75\textwidth]{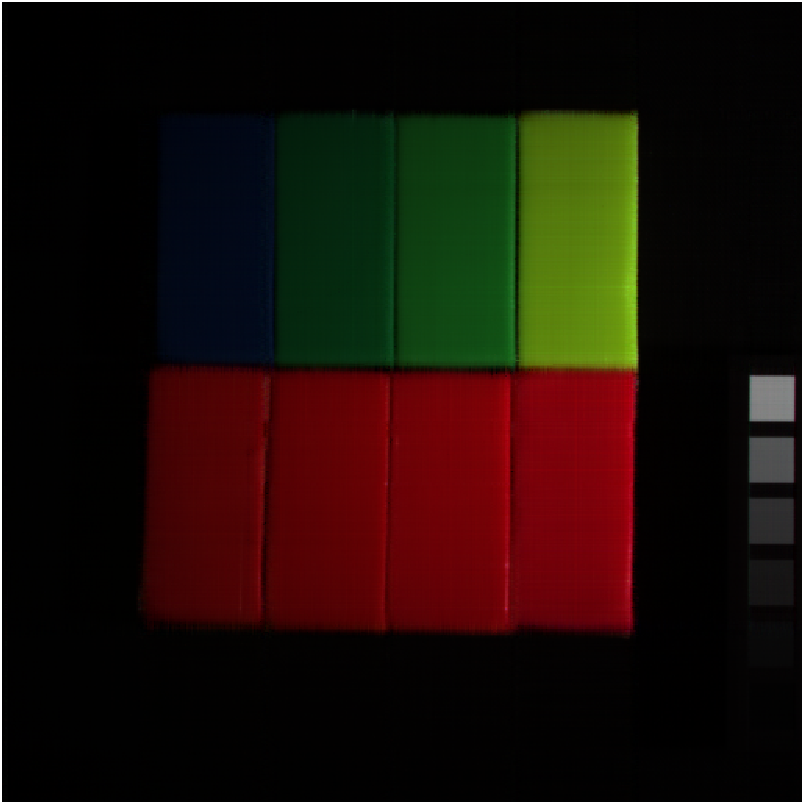}
                        \subcaption{\scriptsize \tiny  DFT-DCT}
                	\end{minipage}
                    \begin{minipage}{0.102\textwidth}
                		\centering
                		\includegraphics[width=1\textwidth, height=0.75\textwidth]{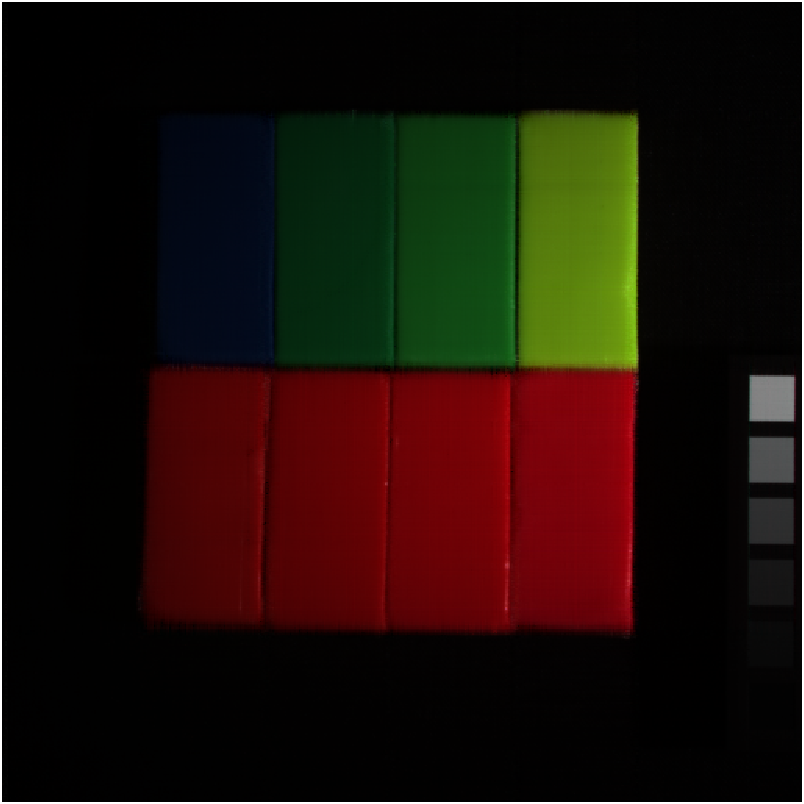}
                        \subcaption{\scriptsize \scriptsize Framelet}
                	\end{minipage}
                    \begin{minipage}{0.102\textwidth}
                		\centering
                		\includegraphics[width=1\textwidth, height=0.75\textwidth]{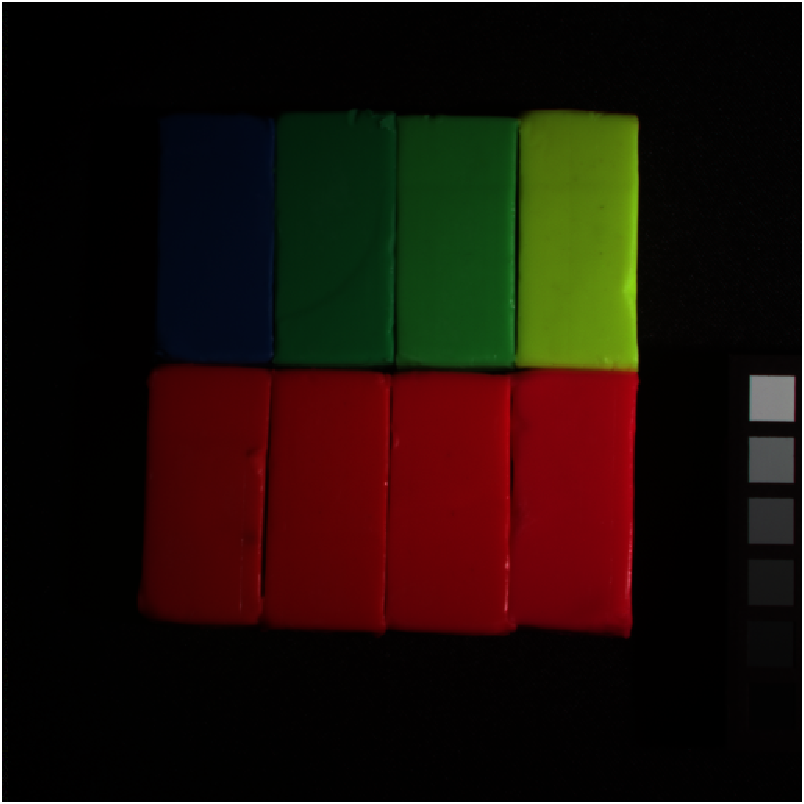}
                        \subcaption{\scriptsize \scriptsize Original}
                	\end{minipage}
                    \caption{\scriptsize Pseudo-RGB (R-28, G-15, B-9) image for performance comparison on MSI ``super balls'', ``cloth'', ``stuffed toys'', ``photo and face'', ``oil painting'' and ``clay'' (top to bottom) with SR $= 0.05$.}
                    \label{fig:MSI_image}
                    \vskip -0.12 in
                \end{figure}

                The visual results are shown by pseudo-RGB image in Figure \ref{fig:MSI_image}. It can be observed that the texture and contour are clearer for DCT2, DFT-DCT and Framelet recovered MSIs. Also, the color fidelity is maintained even under a small sampling rate of $0.05$. \vspace{-0.2cm}

                \subsubsection{Video Data}

                In this experiment, the application of video data recovery is considered. The momentum of applying tensor-based modeling is that the frames of video data are correlated and can be viewed as the third dimension of a tensor. The test videos include ``bus'', ``carphone'', ``foreman'', ``miss'', ``coastguard'', ``news'', ``soccer'', ``suzie'' and ``trevor''. The contents of these videos contain cars, boats, rivers, and humans of different resolutions. Also, both static and dynamic scenes occur in these test videos. The assessment criteria are the mean PSNR (mPSNR) and the mean SSIM (mSSIM) of all frames.
                
                \begin{figure}[ht]
                \vskip -0.1 in
                \centering
                \begin{minipage}{0.102\textwidth}
                    \centering
                    \includegraphics[width=1\textwidth]{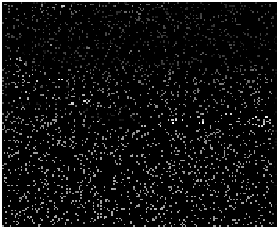}
                \end{minipage}
                \begin{minipage}{0.102\textwidth}
                    \centering
                    \includegraphics[width=1\textwidth]{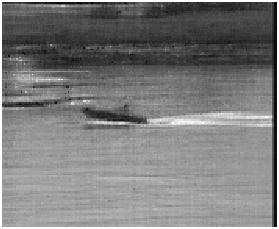}
                \end{minipage}
                \begin{minipage}{0.102\textwidth}
                    \centering
                    \includegraphics[width=1\textwidth]{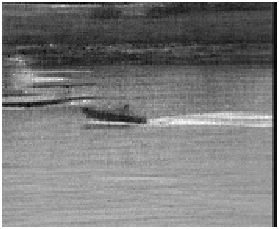}
                \end{minipage}
                \begin{minipage}{0.102\textwidth}
                    \centering
                    \includegraphics[width=1\textwidth]{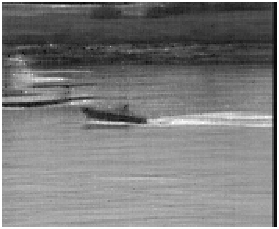}
                \end{minipage}
                \begin{minipage}{0.102\textwidth}
                    \centering
                    \includegraphics[width=1\textwidth]{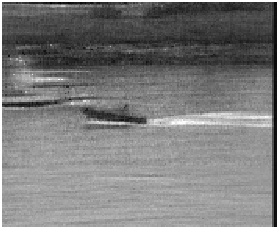}
                \end{minipage}
                \begin{minipage}{0.102\textwidth}
                    \centering
                    \includegraphics[width=1\textwidth]{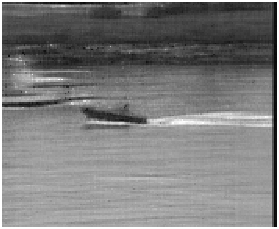}
                \end{minipage}
                \begin{minipage}{0.102\textwidth}
                    \centering
                    \includegraphics[width=1\textwidth]{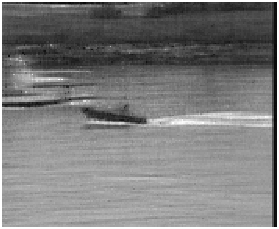}
                \end{minipage}
                \begin{minipage}{0.102\textwidth}
                    \centering
                    \includegraphics[width=1\textwidth]{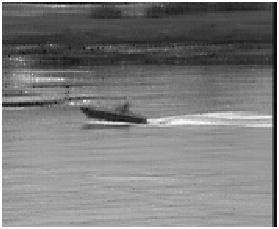}
                \end{minipage}
                \begin{minipage}{0.102\textwidth}
                    \centering
                    \includegraphics[width=1\textwidth]{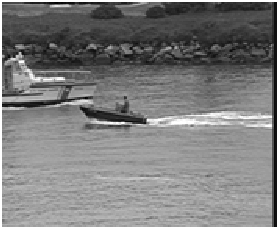}
                \end{minipage}
                \begin{minipage}{0.102\textwidth}
                    \centering
                    \includegraphics[width=1\textwidth]{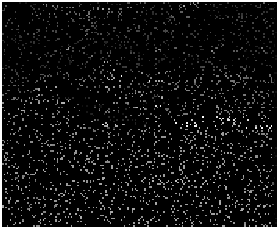}
                        \subcaption{\scriptsize \scriptsize Observed}
                \end{minipage}
                \begin{minipage}{0.102\textwidth}
                    \centering
                    \includegraphics[width=1\textwidth]{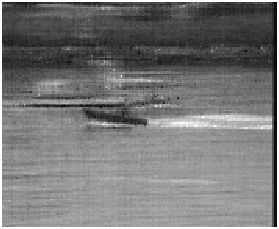}
                        \subcaption{\scriptsize \scriptsize DWHT}
                \end{minipage}
                \begin{minipage}{0.102\textwidth}
                    \centering
                    \includegraphics[width=1\textwidth]{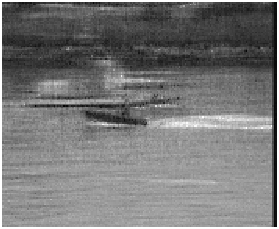}
                        \subcaption{\scriptsize \scriptsize DFT}
                \end{minipage}
                \begin{minipage}{0.102\textwidth}
                    \centering
                    \includegraphics[width=1\textwidth]{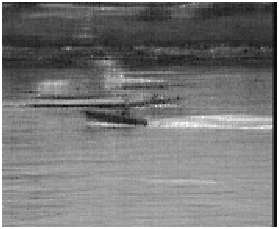}
                        \subcaption{\scriptsize \scriptsize DFT2}
                \end{minipage}
                \begin{minipage}{0.102\textwidth}
                    \centering
                    \includegraphics[width=1\textwidth]{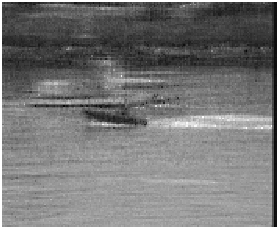}
                        \subcaption{\scriptsize \scriptsize DCT}
                \end{minipage}
                \begin{minipage}{0.102\textwidth}
                    \centering
                    \includegraphics[width=1\textwidth]{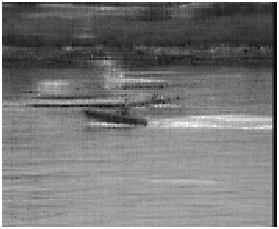}
                        \subcaption{\scriptsize \scriptsize DCT2}
                \end{minipage}
                \begin{minipage}{0.102\textwidth}
                    \centering
                    \includegraphics[width=1\textwidth]{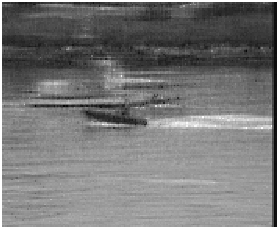}
                        \subcaption{\scriptsize \tiny DFT-DCT}
                \end{minipage}
                \begin{minipage}{0.102\textwidth}
                    \centering
                    \includegraphics[width=1\textwidth]{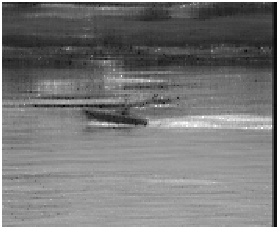}
                        \subcaption{\scriptsize \scriptsize Framelet}
                \end{minipage}
                \begin{minipage}{0.102\textwidth}
                    \centering
                    \includegraphics[width=1\textwidth]{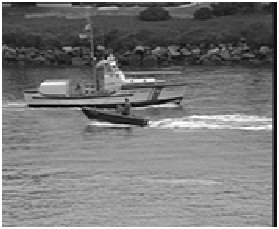}
                    \subcaption{\scriptsize \scriptsize Original}
                \end{minipage}
                \caption{\scriptsize Performance comparison on video ``coastguard'' with SR$=0.1$. The first row shows the $30$th frame and the second row shows the $60$th frame.}
                \label{fig:yuv_p=0.1}
                \vskip -0.12 in
             \end{figure}

            The average mPSNR and mSSIM values are given by the bottom half of Table \ref{tb:MSI_Video} and specifically, the performance indicators of each test video for SR $=0.05$ are shown in Figure \ref{fig:yuv_ind_p=0.05} in Appendix \ref{section:exp_vid}. As can be observed, the non-invertible slim transforms exhibit superior performance among the videos containing different contents and scenes, with a more obvious advantage in mSSIM. The visual recovery results of the $30$th frame and $60$th frame of the video ``coastguard'' with SR$=0.1$ are exhibited in Figure \ref{fig:yuv_p=0.1}. The frames recovered by DCT2, DFT-DCT, Framelet contain fewer noise artifacts, and the margins of the river and the coast are more obvious compared to other methods.\vspace{-0.2cm} 
    
    \section{Conclusion}
        In this paper, the theoretical guarantee of exact tensor completion with arbitrary linear transforms is established by directly operating the tensors in the transform domain. Breaking the constraints of constraints of isotropy and self-adjointness, our model and proof greatly enhance the flexibility of tensor completion. Also, a new analysis obtained by the proof discloses why slim transforms outperform their square counterparts from a theoretical level. Extensive experiments on random data and visual data validate the superiority of the proposed method.

\bibliographystyle{unsrt}
\bibliography{bibfile}

\appendix
\vskip 0.2 in
\textbf{\Large Appendices}

The appendices are organized as follows: Appendix \ref{section:proof_of_main_theorem} contains the proof of the main theorem. Appendix \ref{section:proofs_of_lemmas} proves three crucial lemmas and the energy terms are bounded in Appendix \ref{section:bound_F_norm}. Additional experiment results on video data are in Appendix \ref{section:exp_vid}. The complete analysis of the sampling rate is given in Appendix \ref{section:ananlyzing_sampling_rate}. The algorithm for solving Program \ref{eq:program:min_transformed_TNN} is provided in Appendix \ref{section:solving_main_program}. Methods of generating random tensors in Section \ref{section:exp_random_tensor} are given in Appendix \ref{section:generate_M}. Limitations are in Appendix \ref{section:limitations}.

\section{Proof of Theorem \ref{theorem:exact_completion}}
    \label{section:proof_of_main_theorem}
    The proof of Theorem \ref{theorem:exact_completion} consists of the proofs of two propositions: the optimality condition (Proposition \ref{proposition:optimality_condition}) and construct dual certificate (Proposition \ref{proposition:constructing_dual_certificate}). The latter further requires three norm bounding lemmas and bounding the energy terms, which are contained in Appendix \ref{section:proofs_of_lemmas} and Appendix \ref{section:bound_F_norm}.
    \subsection{Optimality Condition}
        \begin{proposition}
            Tensor $\mathcal{X}$ is the unique optimal solution to \eqref{eq:program:min_transformed_TNN} if the following conditions hold: 
            \begin{enumerate}
                \item 
                \begin{equation}
                    \Vert \mathcal{P}_{\mathbb{S}} \mathit{T} \mathcal{R}_{\Omega} \mathit{T}^{\dag} \mathcal{P}_{\mathbb{S}} - \mathcal{P}_{\mathbb{S}} \mathit{T} \mathit{T}^{\dag} \mathcal{P}_{\mathbb{S}} \Vert \leq \frac{1}{2}
                    \label{eq:cond:approximate_isometry}
                \end{equation}
                where $\mathcal{R}_{\Omega} = \frac{1}{p} \mathcal{S}_{\Omega}$, $\mathbb{S}$ denotes the subspace spanned by singular tensors of $\mathit{T}(\mathcal{X}) = \mathcal{U} \star \mathcal{S} \star \mathcal{V}^{\text{H}}$:
                \begin{equation}\nonumber
                    \begin{aligned}
                        & \mathbb{S} = \{\mathcal{U} \star {\Gamma_1}^{\text{H}} + \Gamma_2 \star \mathcal{V}^{\text{H}}~\vert~\forall~\Gamma_1 \in \mathbb{C}^{n_1 \times r \times N_3}, \Gamma_2 \in \mathbb{C}^{n_2 \times r \times N_3}\} \\
                        & \mathcal{P}_{\mathbb{S}}(\mathcal{Z}) = \mathcal{U} \star \mathcal{U}^{\text{H}} \star \mathcal{Z} + \mathcal{Z} \star \mathcal{V} \star \mathcal{V}^{\text{H}} - \mathcal{U} \star \mathcal{U}^{\text{H}} \star \mathcal{Z} \star \mathcal{V} \star \mathcal{V}^{\text{H}} \\ 
                        & \mathcal{P}_{\mathbb{S}^{\perp}}(\mathcal{Z}) = \mathcal{Z} - \mathcal{P}_{\mathbb{S}}(\mathcal{Z}) = (\mathcal{I} - \mathcal{U} \star \mathcal{U}^{\text{H}}) \star \mathcal{Z} \star (\mathcal{I} - \mathcal{V} \star \mathcal{V}^{\text{H}})
                    \end{aligned}
                \end{equation}
                    
                \item
                $\exists~\mathcal{Y} \in \mathbb{C}^{n_1 \times n_2 \times N_3}, \text{s.t.}$
                \begin{enumerate}[label={(\alph*)}]
                    \item                
                    \begin{equation}
                        (\mathit{T} \mathit{T}^{\dag} - \mathit{T} \mathcal{S}_{\Omega} \mathit{T}^{\dag})^{\text{H}} \mathcal{Y} = \mathbf{0}
                        \label{eq:cond:inner_product_condition}
                    \end{equation}
                    
                    \item
                    \begin{equation}
                        \Vert \mathcal{P}_{\mathbb{S}}(\mathcal{Y}) - \mathcal{U} \star \mathcal{V} \Vert_{\text{F}} \leq \frac{p}{8\kappa(\mathbf{T})}
                        \label{eq:cond:low_distortion}
                    \end{equation}
    
                    \item 
                    \begin{equation}
                        \Vert \mathcal{P}_{\mathbb{S}^{\perp}}(\mathcal{Y}) \Vert \leq \frac{1}{2}
                        \label{eq:cond:spectral_norm_cond}
                    \end{equation}
                \end{enumerate}
            \end{enumerate}
            \label{proposition:optimality_condition}
        \end{proposition}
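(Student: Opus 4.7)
The plan is to follow the standard dual-certificate template, adapted to the transform-enriched setting: for any feasible non-zero perturbation $\mathcal{H}$ (i.e.\ $\mathcal{S}_{\Omega}\mathcal{H}=0$), I aim to show strictly $\Vert \mathit{T}(\mathcal{X}+\mathcal{H})\Vert_* > \Vert \mathit{T}(\mathcal{X})\Vert_*$.

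First I would invoke the subgradient inequality for the tensor nuclear norm at $\mathit{T}(\mathcal{X}) = \mathcal{U}\star\mathcal{S}\star\mathcal{V}^{\text{H}}$: picking a subgradient $\mathcal{U}\star\mathcal{V}^{\text{H}}+\mathcal{W}$ with $\mathcal{W}\in\mathbb{S}^{\perp}$, $\Vert\mathcal{W}\Vert\leq 1$, and $\langle\mathcal{W}, \mathit{T}(\mathcal{H})\rangle = \Vert\mathcal{P}_{\mathbb{S}^{\perp}}\mathit{T}(\mathcal{H})\Vert_*$ (achievable by the spectral-nuclear duality derived in Section~2) gives $\Vert \mathit{T}(\mathcal{X}+\mathcal{H})\Vert_* \geq \Vert \mathit{T}(\mathcal{X})\Vert_* + \langle\mathcal{U}\star\mathcal{V}^{\text{H}}, \mathit{T}(\mathcal{H})\rangle + \Vert\mathcal{P}_{\mathbb{S}^{\perp}}\mathit{T}(\mathcal{H})\Vert_*$. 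The remaining inner product is eliminated via the dual certificate: I would first verify that \eqref{eq:cond:inner_product_condition} is equivalent to $\langle\mathcal{Y}, \mathit{T}(\mathcal{H})\rangle = 0$ for all feasible $\mathcal{H}$ by writing $\mathit{T}(\mathcal{H}) = (\mathit{T}\mathit{T}^{\dag}-\mathit{T}\mathcal{S}_{\Omega}\mathit{T}^{\dag})\mathit{T}(\mathcal{H})$ (using $\mathit{T}^{\dag}\mathit{T} = \mathcal{I}$ on the original domain and $\mathcal{S}_{\Omega}\mathcal{H}=0$) and passing to adjoints. Decomposing $\mathcal{Y}=\mathcal{P}_{\mathbb{S}}\mathcal{Y}+\mathcal{P}_{\mathbb{S}^{\perp}}\mathcal{Y}$ and combining Cauchy--Schwarz with \eqref{eq:cond:low_distortion} on the $\mathbb{S}$-component and spectral-nuclear duality with \eqref{eq:cond:spectral_norm_cond} on the $\mathbb{S}^{\perp}$-component then lower-bounds $\langle\mathcal{U}\star\mathcal{V}^{\text{H}}, \mathit{T}(\mathcal{H})\rangle$ by $-\tfrac{p}{8\kappa(\mathbf{T})}\Vert\mathcal{P}_{\mathbb{S}}\mathit{T}(\mathcal{H})\Vert_{\text{F}}-\tfrac{1}{2}\Vert\mathcal{P}_{\mathbb{S}^{\perp}}\mathit{T}(\mathcal{H})\Vert_*$.

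Second, I would exploit the approximate isometry \eqref{eq:cond:approximate_isometry} to control $\Vert\mathcal{P}_{\mathbb{S}}\mathit{T}(\mathcal{H})\Vert_{\text{F}}$ by $\Vert\mathcal{P}_{\mathbb{S}^{\perp}}\mathit{T}(\mathcal{H})\Vert_{\text{F}}$. Because $\mathcal{R}_{\Omega}\mathcal{H}=0$ one has $\mathcal{P}_{\mathbb{S}}\mathit{T}\mathcal{R}_{\Omega}\mathit{T}^{\dag}\mathcal{P}_{\mathbb{S}}\mathit{T}(\mathcal{H}) = -\mathcal{P}_{\mathbb{S}}\mathit{T}\mathcal{R}_{\Omega}\mathit{T}^{\dag}\mathcal{P}_{\mathbb{S}^{\perp}}\mathit{T}(\mathcal{H})$, and since $\mathit{T}\mathit{T}^{\dag}$ acts as the identity on $\mathrm{range}(\mathit{T})$ one also has $\mathcal{P}_{\mathbb{S}}\mathit{T}\mathit{T}^{\dag}\mathcal{P}_{\mathbb{S}}\mathit{T}(\mathcal{H}) = \mathcal{P}_{\mathbb{S}}\mathit{T}(\mathcal{H})-\mathcal{P}_{\mathbb{S}}\mathit{T}\mathit{T}^{\dag}\mathcal{P}_{\mathbb{S}^{\perp}}\mathit{T}(\mathcal{H})$. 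Substituting both identities into \eqref{eq:cond:approximate_isometry} applied at $\mathcal{Z}=\mathcal{P}_{\mathbb{S}}\mathit{T}(\mathcal{H})$ and invoking the crude operator-norm bounds $\Vert \mathit{T}\mathcal{R}_{\Omega}\mathit{T}^{\dag}\Vert \leq \kappa(\mathbf{T})/p$ and $\Vert \mathit{T}\mathit{T}^{\dag}\Vert \leq 1$ will yield $\Vert\mathcal{P}_{\mathbb{S}}\mathit{T}(\mathcal{H})\Vert_{\text{F}} \leq \tfrac{2(\kappa(\mathbf{T})+p)}{p}\Vert\mathcal{P}_{\mathbb{S}^{\perp}}\mathit{T}(\mathcal{H})\Vert_{\text{F}}$. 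Chaining this with the first step and using $\Vert\cdot\Vert_* \geq \Vert\cdot\Vert_{\text{F}}$ produces $\Vert \mathit{T}(\mathcal{X}+\mathcal{H})\Vert_*-\Vert \mathit{T}(\mathcal{X})\Vert_* \geq \tfrac{\kappa(\mathbf{T})-p}{4\kappa(\mathbf{T})}\Vert\mathcal{P}_{\mathbb{S}^{\perp}}\mathit{T}(\mathcal{H})\Vert_{\text{F}}$, which is strictly positive whenever $\mathcal{P}_{\mathbb{S}^{\perp}}\mathit{T}(\mathcal{H}) \neq 0$; if that projection vanishes, the same isometry bound forces $\mathcal{P}_{\mathbb{S}}\mathit{T}(\mathcal{H}) = 0$ too, so $\mathit{T}(\mathcal{H})=0$ and hence $\mathcal{H}=0$ by the injectivity of $\mathit{T}$.

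The hard part, compared with the orthogonal-transform arguments of \cite{original_TNN,Lu_CVPR}, is the fact that $\mathit{T}\mathit{T}^{\dag} \neq \mathcal{I}$ on the transform domain. In the isometric setting that operator collapses to the identity and the cross-term $\mathcal{P}_{\mathbb{S}}\mathit{T}\mathit{T}^{\dag}\mathcal{P}_{\mathbb{S}^{\perp}}\mathit{T}(\mathcal{H})$ in the Step-2 manipulation vanishes outright; here it has to be retained and handled through coarse operator-norm bounds, which is precisely what forces the factor $\kappa(\mathbf{T})$ into the right-hand side of \eqref{eq:cond:low_distortion} and ultimately into the sampling-rate estimate \eqref{eq:sampling_rate_p}. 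Aligning the numerical constants so that the final difference has a strictly positive coefficient, across the non-isometric transform $\mathit{T}$, is the delicate point of the whole argument.
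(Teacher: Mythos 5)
Your proposal is correct and follows the same dual-certificate skeleton as the paper's proof: the subgradient inequality at $\mathit{T}(\mathcal{X})$, the elimination of $\langle\mathcal{Y},\mathit{T}(\mathcal{H})\rangle$ via condition \eqref{eq:cond:inner_product_condition} (your one-line identity $\mathit{T}(\mathcal{H})=(\mathit{T}\mathit{T}^{\dag}-\mathit{T}\mathcal{S}_{\Omega}\mathit{T}^{\dag})\mathit{T}(\mathcal{H})$ is a tidier version of the paper's trisection into $A_1,A_2,A_3$), and the Cauchy--Schwarz/duality bounds from \eqref{eq:cond:low_distortion} and \eqref{eq:cond:spectral_norm_cond} all match and lead to the same intermediate lower bound $\tfrac12\Vert\mathcal{P}_{\mathbb{S}^\perp}\mathit{T}(\mathcal{H})\Vert_*-\tfrac{p}{8\kappa(\mathbf{T})}\Vert\mathcal{P}_{\mathbb{S}}\mathit{T}(\mathcal{H})\Vert_{\text{F}}$. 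Where you genuinely diverge is the closing step: the paper runs a two-case analysis and invokes \eqref{eq:cond:approximate_isometry} only in the complementary case, through the quadratic-form identity $\langle\mathcal{P}_{\mathbb{S}}(\mathit{T}(\mathcal{H})),(\mathit{T}\mathcal{R}_{\Omega}\mathit{T}^{\dag}+\mathcal{I}-\mathit{T}\mathit{T}^{\dag})\mathit{T}(\mathcal{H})\rangle=0$, to force $\mathcal{H}=\mathbf{0}$; you instead extract the unconditional linear bound $\Vert\mathcal{P}_{\mathbb{S}}\mathit{T}(\mathcal{H})\Vert_{\text{F}}\le\tfrac{2(\kappa(\mathbf{T})+p)}{p}\Vert\mathcal{P}_{\mathbb{S}^\perp}\mathit{T}(\mathcal{H})\Vert_{\text{F}}$ directly from \eqref{eq:cond:approximate_isometry} via the two substitution identities and a reverse triangle inequality, then substitute it once. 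I checked the algebra: both identities, the bounds $\Vert\mathit{T}\mathcal{R}_{\Omega}\mathit{T}^{\dag}\Vert\le\kappa(\mathbf{T})/p$ and $\Vert\mathit{T}\mathit{T}^{\dag}\Vert\le 1$, and the resulting coefficient $\tfrac{\kappa(\mathbf{T})-p}{4\kappa(\mathbf{T})}$ are all right, and the vanishing of $\mathcal{P}_{\mathbb{S}^\perp}\mathit{T}(\mathcal{H})$ correctly forces $\mathcal{H}=\mathbf{0}$ through injectivity. Your route buys a single-pass argument with no case split (at the cost of a marginally looser constant than the paper's implicit $\tfrac{2\kappa(\mathbf{T})}{p}$); the paper's route keeps the approximate-isometry condition out of the main chain of inequalities, which some readers find easier to audit. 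The only point worth a remark is the boundary $p=1$, $\kappa(\mathbf{T})=1$, where your final coefficient degenerates to zero --- harmless, since full sampling makes uniqueness trivial, but you should state it.
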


        \begin{proof}
            The proof of this proposition is inspired by \cite{proof_6}, which aims to recover 1-D data sparse in the transform domain. By noticing the analogy among 1-D sparse signal, 2-D low rank matrices and 3-D low rank tensors, the analysis in \cite{proof_6} can serve as guidance. 
            
            First let us revisit the definition of subgradient in convex optimization and give the subgradient of the natural tensor nuclear norm:
            \begin{recall}[Subgradient]
                \qquad\qquad\qquad\qquad\qquad\qquad
                \label{recall:subgradient}
                \begin{enumerate}
                    \item By convex optimization theory, $\mathcal{A}$ is in the subgradient set of $\Vert \cdot \Vert_*$ defined upon $\mathcal{X}_0$ iff $\forall \mathcal{X}$, 
                    \begin{equation}
                        \Vert \mathcal{X} \Vert_* \geq \Vert \mathcal{X}_0 \Vert_* + \langle \mathcal{A}, \mathcal{X} - \mathcal{X}_0 \rangle. \nonumber
                        \label{eq:applying_definition_of_subgradient}
                    \end{equation}
    
                    \item the subgradient formulation of $\Vert \cdot \Vert_*$ is given as
                    \begin{equation}
                        \mathcal{A} \in \{ \mathcal{U} \star \mathcal{V}^{\text{H}} + \mathcal{W}~\vert~\mathcal{U}^{\text{H}} \star \mathcal{W} = \mathbf{0}, \mathcal{W} \star \mathcal{V} = \mathbf{0}, \Vert \mathcal{W} \Vert \leq 1\}, \nonumber
                        \label{eq:subgradient_of_TNN}
                    \end{equation}
                    where $\mathcal{U}, \mathcal{V}$ are the singular tensors of $\mathit{T}(\mathcal{X})$.
                \end{enumerate}
            \end{recall}

            Suppose $\mathcal{H}$ is a perturbation satisfying $\mathcal{S}_{\Omega}(\mathcal{H}) = \mathbf{0}$, according to Recall \ref{recall:subgradient} and the fact that $\Vert \mathbf{U}_{\perp} \star \mathbf{V}_{\perp}^{\text{H}} \Vert = 1$, we have
            \begin{align}
                \Vert \mathit{T}(\mathcal{X} + \mathcal{H}) \Vert_* \geq \Vert \mathit{T}(\mathcal{X}) \Vert_* + \langle \mathcal{U} \star  \mathcal{V}^{\text{H}} + \mathcal{U}_{\perp} \star \mathcal{V}_{\perp}^{\text{H}}, \mathit{T}(\mathcal{H})\rangle, 
                \label{eq:by_subgradient_condition}
            \end{align}
            where $\mathcal{U}, \mathcal{V}$ are the singular tensors of $\mathit{T}(\mathcal{X}_0)$, $\mathbb{S}$ is spanned by $\mathcal{U}, \mathcal{V}$ and $\mathcal{U}_{\perp}, \mathcal{V}_{\perp}$ are the singular tensors of $\mathcal{P}_{\mathbb{S}^{\perp}}(\mathit{T}(\mathcal{H}))$.
    
            Now continue to analyze the second term in the RHS of \eqref{eq:by_subgradient_condition} and decompose it into three parts,
            \begin{align}
                \langle \mathcal{U} \star \mathcal{V}^{\text{H}} + \mathcal{U}_{\perp} \star \mathcal{V}_{\perp}^{\text{H}}, \mathit{T}(\mathcal{H})\rangle = \langle \mathcal{Y}, \mathit{T}(\mathcal{H}) \rangle + \langle \mathcal{U}_{\perp} \star \mathcal{V}_{\perp}^{\text{H}}, \mathit{T}(\mathcal{H}) \rangle - \langle \mathcal{Y} - \mathcal{U} \star \mathcal{V}^{\text{H}}, \mathit{T}(\mathcal{H})\rangle. \label{eq:main_analysis_term}
            \end{align}
        
            \begin{itemize}
                \item First look into $\langle \mathcal{Y}, \mathit{T}(\mathcal{H}) \rangle$.
                \begin{equation}
                    \begin{aligned}
                        & \langle \mathcal{Y}, \mathit{T}(\mathcal{H}) \rangle \\
                        & = \underbrace{\langle \mathcal{Y}, (\mathcal{I} - \mathit{T}\mathit{T}^{\dag})\mathit{T}(\mathcal{H}) \rangle}_{A_1} + \underbrace{\langle \mathcal{Y}, (\mathit{T}\mathit{T}^{\dag} - \mathit{T}\mathcal{S}_{\Omega}\mathit{T}^{\dag})\mathit{T}(\mathcal{H}) \rangle}_{A_2} + \underbrace{\langle \mathcal{Y}, \mathit{T}\mathcal{S}_{\Omega}\mathit{T}^{\dag}\mathit{T}(\mathcal{H}) \rangle}_{A_3},
                    \end{aligned}
                    \label{eq:3_term_of_<Y,TH>}
                \end{equation}
                which can also be trisected as:
                \begin{align}
                    A_1 & = \langle \mathcal{Y}, (\mathcal{I} - \mathit{T}\mathit{T}^{\dag})\mathit{T}(\mathcal{H}) \rangle = \langle \mathcal{Y}, \mathit{T}(\mathcal{H})- \mathit{T}(\mathcal{H}) \rangle = 0~(\text{by}~\mathbf{T}^{\dag}\mathbf{T} = \mathbf{I}), \\
                    A_2 & = \langle \mathcal{Y}, (\mathit{T}\mathit{T}^{\dag} - \mathit{T}\mathcal{S}_{\Omega}\mathit{T}^{\dag})\mathit{T}(\mathcal{H}) \rangle = \langle (\mathit{T}\mathit{T}^{\dag} - \mathit{T}\mathcal{S}_{\Omega}\mathit{T}^{\dag})^{\text{H}}\mathcal{Y}, \mathit{T}(\mathcal{H}) \rangle = 0~(\text{by}~\eqref{eq:cond:inner_product_condition}), \\
                    A_3 & = \langle \mathcal{Y}, \mathit{T}\mathcal{S}_{\Omega}\mathit{T}^{\dag}\mathit{T}(\mathcal{H}) \rangle = \langle \mathcal{Y}, \mathit{T}\mathcal{S}_{\Omega}(\mathcal{H}) \rangle = 0~(\text{by}~\mathcal{S}_{\Omega}(\mathcal{H}) = 0), 
                \end{align}
                which combines for $\langle \mathcal{Y}, \mathit{T}(\mathcal{H}) \rangle = 0$.
        
                \item For $\langle \mathcal{U}_{\perp} \star \mathcal{V}_{\perp}^{\text{H}}, \mathit{T}(\mathcal{H}) \rangle$, we have         
                \begin{align}
                    \langle \mathcal{U}_{\perp} \star \mathcal{V}_{\perp}^{\text{H}}, \mathit{T}(\mathcal{H}) \rangle = \langle \mathcal{U}_{\perp} \star \mathcal{V}_{\perp}^{\text{H}}, \mathcal{P}_{\mathbb{S}^{\perp}}(\mathit{T}(\mathcal{H})) \rangle = \Vert \mathcal{P}_{\mathbb{S}^{\perp}}(\mathit{T}(\mathcal{H})) \Vert_* 
                \end{align}
        
                \item For $\langle \mathcal{Y} - \mathcal{U} \star \mathcal{V}^{\text{H}}, \mathit{T}(\mathcal{H})\rangle$, an upper bound can be given:
                \begin{equation}
                    \begin{aligned}
                        & \langle \mathcal{Y} - \mathcal{U} \star \mathcal{V}^{\text{H}}, \mathit{T}(\mathcal{H})\rangle \\
                        = & \langle \mathcal{P}_{\mathbb{S}}(\mathcal{Y} - \mathcal{U} \star \mathcal{V}^{\text{H}}), \mathit{T}(\mathcal{H})\rangle  + \langle \mathcal{P}_{\mathbb{S}^{\perp}}(\mathcal{Y}), \mathit{T}(\mathcal{H})\rangle - \langle \mathcal{P}_{\mathbb{S}^{\perp}}(\mathcal{U} \star \mathcal{V}^{\text{H}}), \mathit{T}(\mathcal{H})\rangle \\
                        \leq & \Vert \mathcal{P}_{\mathbb{S}}(\mathcal{Y} - \mathcal{U} \star \mathcal{V}^{\text{H}}) \Vert_{\text{F}} \cdot \Vert \mathcal{P}_{\mathbb{S}}(\mathit{T}(\mathcal{H})) \Vert_{\text{F}} + \Vert \mathcal{P}_{\mathbb{S}^{\perp}}(\mathcal{Y}) \Vert \cdot \Vert \mathcal{P}_{\mathbb{S}^{\perp}}(\mathit{T}(\mathcal{H})) \Vert_*. 
                    \end{aligned}
                    \label{eq:applying_CR_and_dual_norm}
                \end{equation}
                
                In the last equation of \eqref{eq:applying_CR_and_dual_norm}, we use the Cauchy–Schwarz inequality and the dual definition of tensor spectral norm \ref{def:tensor_spectral_norm} and tensor nuclear norm \ref{def:tensor_nuclear_norm}. 
        
                Plug \eqref{eq:cond:low_distortion} and \eqref{eq:cond:spectral_norm_cond} into \eqref{eq:applying_CR_and_dual_norm}, which yields
                    \begin{align}
                        \langle \mathcal{Y} - \mathcal{U} \star \mathcal{V}^{\text{H}}, \mathit{T}(\mathcal{H})\rangle \leq \frac{p}{8\kappa(\mathbf{T})} \Vert \mathcal{P}_{\mathbb{S}}(\mathit{T}(\mathcal{H})) \Vert_{\text{F}} + \frac{1}{2} \Vert \mathcal{P}_{\mathbb{S}^{\perp}}(\mathit{T}(\mathcal{H})) \Vert_*
                    \end{align}
                \end{itemize}
        
            Now \eqref{eq:main_analysis_term} can be bounded as 
            \begin{align}
                & \langle \mathcal{U} \star \mathcal{V}^{\text{H}} + \mathcal{U}_{\perp} \star \mathcal{V}_{\perp}^{\text{H}}, \mathit{T}(\mathcal{H})\rangle \geq \frac{1}{2} \Vert \mathcal{P}_{\mathbb{S}^{\perp}}(\mathit{T}(\mathcal{H})) \Vert_* - \frac{p}{8\kappa(\mathbf{T})} \Vert \mathcal{P}_{\mathbb{S}}(\mathit{T}(\mathcal{H})) \Vert_{\text{F}}.
            \end{align}
        
            Here the proof is furthered by a case-by-case analysis:
            \begin{enumerate}
                \item case 1: $\frac{1}{2} \Vert \mathcal{P}_{\mathbb{S}^{\perp}}(\mathit{T}(\mathcal{H})) \Vert_* > \frac{p}{8\kappa(\mathbf{T})} \Vert \mathcal{P}_{\mathbb{S}}(\mathit{T}(\mathcal{H})) \Vert_{\text{F}}$ 
                
                In this case, 
                \begin{align*}
                    \Vert \mathit{T}(\mathcal{X} + \mathcal{H}) \Vert_* & - \Vert \mathit{T}(\mathcal{X}) \Vert_* \nonumber
                    \geq \langle \mathcal{U} \star \mathcal{V}^{\text{H}} + \mathcal{U}_{\perp} \star \mathcal{V}_{\perp}^{\text{H}}, \mathit{T}(\mathcal{H})\rangle \\
                    & \geq \frac{1}{2} \Vert \mathcal{P}_{\mathbb{S}^{\perp}}(\mathit{T}(\mathcal{H})) \Vert_* - \frac{p}{8\kappa(\mathbf{T})} \Vert \mathcal{P}_{\mathbb{S}}(\mathit{T}(\mathcal{H})) \Vert_{\text{F}} > 0
                \end{align*}
        
                \item case 2: $\frac{1}{2} \Vert \mathcal{P}_{\mathbb{S}^{\perp}}(\mathit{T}(\mathcal{H})) \Vert_* \leq \frac{p}{8\kappa(\mathbf{T})} \Vert \mathcal{P}_{\mathbb{S}}(\mathit{T}(\mathcal{H})) \Vert_{\text{F}}$
        
                We show that in this complementary case, $\mathcal{H} = \mathbf{0}$. First, $\mathcal{S}_{\Omega}(\mathcal{H}) = 0$ $\Rightarrow$ $(\mathcal{S}_{\Omega} \mathit{T}^{\dag} \mathit{T}) (\mathcal{H}) = 0$ $\Rightarrow$ $(\mathit{T} \mathcal{R}_{\Omega} \mathit{T}^{\dag} + \mathcal{I} - \mathit{T}\mathit{T}^{\dag}) \mathit{T} (\mathcal{H}) = 0$. Thus we have 
                \begin{align*}
                    0 
                    & = \underbrace{\langle \mathcal{P}_{\mathbb{S}}(\mathit{T}(\mathcal{H})), (\mathit{T} \mathcal{R}_{\Omega} \mathit{T}^{\dag} + \mathcal{I} - \mathit{T}\mathit{T}^{\dag}) \mathcal{P}_{\mathbb{S}}(\mathit{T} (\mathcal{H})) \rangle}_{B_1} \\ 
                    & + \underbrace{\langle \mathcal{P}_{\mathbb{S}}(\mathit{T}(\mathcal{H})), (\mathit{T} \mathcal{R}_{\Omega} \mathit{T}^{\dag} - \mathit{T}\mathit{T}^{\dag}) \mathcal{P}_{\mathbb{S}^{\perp}}(\mathit{T} (\mathcal{H})) \rangle}_{B_2}. \nonumber
                \end{align*}
    
                \begin{equation}
                    \begin{aligned}
                        \vert B_1 \vert & = \vert \langle \mathcal{P}_{\mathbb{S}}(\mathit{T}(\mathcal{H})), \mathcal{P}_{\mathbb{S}}(\mathit{T}(\mathcal{H})) \rangle + \langle \mathcal{P}_{\mathbb{S}}(\mathit{T}(\mathcal{H})), (\mathit{T} \mathcal{R}_{\Omega} \mathit{T}^{\dag} - \mathit{T}\mathit{T}^{\dag}) \mathcal{P}_{\mathbb{S}}(\mathit{T}(\mathcal{H})) \vert\\
                        & \geq (1 - \Vert \mathcal{P}_{\mathbb{S}} \mathit{T} \mathcal{R}_{\Omega} \mathit{T}^{\dag} \mathcal{P}_{\mathbb{S}} - \mathcal{P}_{\mathbb{S}} \mathit{T} \mathit{T}^{\dag} \mathcal{P}_{\mathbb{S}} \Vert) \cdot \Vert \mathcal{P}_{\mathbb{S}}(\mathit{T}(\mathcal{H})) \Vert_{\text{F}}^2 \\
                        & \geq \frac{1}{2} \Vert \mathcal{P}_{\mathbb{S}}(\mathit{T}(\mathcal{H})) \Vert_{\text{F}}^2~(\text{by \eqref{eq:cond:approximate_isometry}})
                    \end{aligned}
                \end{equation}
            
                Using the submultiplicative property of operator norm, 
                \begin{align}
                    \Vert \mathit{T} \mathcal{R}_{\Omega} \mathit{T}^{\dag} - \mathit{T}\mathit{T}^{\dag} \Vert \leq \Vert \mathbf{T} \Vert \cdot \Vert \mathcal{R}_{\Omega} - \mathcal{I} \Vert \cdot \Vert \mathbf{T}^{\dag} \Vert \leq \frac{\Vert \mathbf{T} \Vert \cdot \Vert \mathbf{T}^{\dag} \Vert}{p},
                \end{align}
                which gives  
                \begin{align}
                    \vert B_2 \vert \leq \frac{\kappa(\mathbf{T})}{p} \Vert \mathcal{P}_{\mathbb{S}}(\mathit{T}(\mathcal{H})) \Vert_{\text{F}} \Vert \mathcal{P}_{\mathbb{S}^{\perp}}(\mathit{T}(\mathcal{H})) \Vert_{\text{F}}.
                \end{align}

                Thus we have 
                \begin{align}
                    0 & = \vert B_1 + B_2 \vert \nonumber \geq \vert B_1 \vert - \vert B_2 \vert \\
                    & \geq \frac{1}{2} \Vert \mathcal{P}_{\mathbb{S}}(\mathit{T}(\mathcal{H})) \Vert_{\text{F}}^2 - \frac{\Vert \mathbf{T} \Vert \cdot \Vert \mathbf{T}^{\dag} \Vert}{p} \Vert \mathcal{P}_{\mathbb{S}}(\mathit{T}(\mathcal{H})) \Vert_{\text{F}} \Vert \mathcal{P}_{\mathbb{S}^{\perp}}(\mathit{T}(\mathcal{H})) \Vert_{\text{F}} \\
                    & \geq (\frac{1}{2} - 2 \frac{p}{8\kappa(\mathbf{T})} \frac{\kappa(\mathbf{T})}{p})\Vert \mathcal{P}_{\mathbb{S}}(\mathit{T}(\mathcal{H})) \Vert_{\text{F}}^2 = \frac{1}{4} \Vert \mathcal{P}_{\mathbb{S}}(\mathit{T}(\mathcal{H}))\Vert_{\text{F}}^2 \label{eq:by_case2_assump_and_F_norm<nuclear_norm},  
                \end{align}
                where \eqref{eq:by_case2_assump_and_F_norm<nuclear_norm} is followed by case 2 assumption and the inequality between Frobenius norm and nuclear norm that $\Vert \mathcal{P}_{\mathbb{S}^{\perp}}(\mathit{T}(\mathcal{H})) \Vert_{\text{F}} \leq \Vert \mathcal{P}_{\mathbb{S}^{\perp}}(\mathit{T}(\mathcal{H})) \Vert_{*}$. Thus, 
                \begin{align}
                    \Vert \mathcal{P}_{\mathbb{S}}(\mathit{T}(\mathcal{H})) \Vert_{\text{F}}^2 \leq 0 \Rightarrow \mathcal{P}_{\mathbb{S}}(\mathit{T}(\mathcal{H})) = \mathbf{0}. \label{eq:P_S_T_H=0}
                \end{align}
                
                If $\mathcal{H} \neq \mathbf{0}$, then $\mathit{T}(\mathcal{H}) \neq \mathbf{0}$ due to the injectivity of $\mathit{T}(\cdot)$. Thus $\mathcal{P}_{\mathbb{S}}(\mathit{T}(\mathcal{H}))$ and $\mathcal{P}_{\mathbb{S}^{\perp}}(\mathit{T}(\mathcal{H}))$ cannot be $\mathbf{0}$ at the same time following the fact that $\mathcal{P}_{\mathbb{S}}(\mathit{T}(\mathcal{H})) + \mathcal{P}_{\mathbb{S}^{\perp}}(\mathit{T}(\mathcal{H})) = \mathit{T}(\mathcal{H})$. By the assumption in case 2: $\mathcal{P}_{\mathbb{S}}(\mathit{T}(\mathcal{H})) \geq \frac{1}{2 c_{\epsilon}} \mathcal{P}_{\mathbb{S}^{\perp}}(\mathit{T}(\mathcal{H}))$, $\mathcal{P}_{\mathbb{S}}(\mathit{T}(\mathcal{H})) \neq 0$, which contradicts \eqref{eq:P_S_T_H=0}. Thus $\mathcal{H} = \mathbf{0}$.
            \end{enumerate}
            
            Combining case 1 and case 2, we obtain that: if $\mathcal{H} \neq \mathbf{0}$, then $\mathit{T}(\mathcal{X} + \mathcal{H}) > \mathit{T}(\mathcal{X})$, which shows that $\mathcal{X}$ is the unique optimal solution to \eqref{eq:program:min_transformed_TNN}. 
            \end{proof}
        
            \subsection{Construct Dual Certificate}
            \label{section:constructing_dual_certificate}
            We need to find a construction of dual certificate to prove that under certain conditions, the requirements in Proposition \ref{proposition:optimality_condition} can be satisfied.
    
            \begin{proposition}
                \label{proposition:constructing_dual_certificate}
                If $(\mathcal{X}, \mathit{T})$ satisfies \eqref{eq:inc_cond_U_V}-\eqref{eq:inc_cond_U_V_T} and \eqref{eq:sampling_rate_p} holds, then \eqref{eq:cond:approximate_isometry}\eqref{eq:cond:inner_product_condition}\eqref{eq:cond:low_distortion} and \eqref{eq:cond:spectral_norm_cond} can be satisfied. 
            \end{proposition}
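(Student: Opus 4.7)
The plan is to prove Proposition \ref{proposition:constructing_dual_certificate} via a golfing-style construction adapted to handle the non-unitary transform $\mathit{T}$. Split the sample set into $K=\lceil c_0\log(\kappa(\mathbf{T})(n_1+n_2)N_3)\rceil$ disjoint Bernoulli sub-samples $\Omega=\bigsqcup_{k=1}^K \Omega_k$, each at rate $q=\Theta(p/K)$; this allows the same concentration inequality to be invoked $K$ times with a union bound that still matches the stated failure probability $c_1((n_1+n_2)N_3)^{-c_2}$.

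For condition \eqref{eq:cond:approximate_isometry}, expand the centered operator as a sum of independent rank-one contributions indexed by the canonical basis tensors $\mathdutchcal{e}_{ijk}$,
\begin{equation}
    \mathcal{P}_{\mathbb{S}}\mathit{T}\mathcal{R}_{\Omega}\mathit{T}^{\dag}\mathcal{P}_{\mathbb{S}}-\mathcal{P}_{\mathbb{S}}\mathit{T}\mathit{T}^{\dag}\mathcal{P}_{\mathbb{S}}=\sum_{i,j,k}\bigl(p^{-1}\delta_{ijk}-1\bigr)\mathcal{A}_{ijk},\nonumber
\end{equation}
where each $\mathcal{A}_{ijk}$ is the rank-one operator built from $\mathcal{P}_{\mathbb{S}}\mathit{T}(\mathdutchcal{e}_{ijk})$ and $(\mathit{T}^{\dag})^{\text{H}}(\mathdutchcal{e}_{ijk})$. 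Bound the operator norm of each summand using the incoherence inequalities \eqref{eq:inc_cond_U_T}--\eqref{eq:inc_cond_V_T}, which control $\Vert\mathcal{P}_{\mathbb{S}}\mathit{T}(\mathdutchcal{e}_{ijk})\Vert_{\text{F}}$ via $\Vert\mathbf{T}\Vert_{1\mapsto 2}$, together with an entry-wise bound on $\mathit{T}^{\dag}$ absorbed into $\rho(\mathbf{T})$; bound the variance analogously. The operator Bernstein inequality of \cite{tropp2012user} then produces \eqref{eq:cond:approximate_isometry} as soon as the sampling rate satisfies \eqref{eq:sampling_rate_p}, the $\kappa^2(\mathbf{T})+\rho^2(\mathbf{T})$ factor being precisely the price of $\mathit{T}$ being non-unitary (the $\kappa$ piece arises from the $\mathit{T}\cdots\mathit{T}^{\dag}$ composition, the $\rho$ piece from the per-summand operator norm).

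To construct $\mathcal{Y}$, first observe that \eqref{eq:cond:inner_product_condition} is equivalent, after taking adjoints and using the injectivity of $(\mathit{T}^{\dag})^{\text{H}}$, to the support condition $\mathcal{S}_{\Omega^{\text{C}}}\mathit{T}^{\text{H}}\mathcal{Y}=\mathbf{0}$. Parameterizing $\mathcal{Y}$ through a tensor whose $\mathit{T}^{\text{H}}$-image is supported on $\Omega$ therefore makes (a) automatic. Build $\mathcal{Y}=\sum_{k=1}^{K}\Delta\mathcal{Y}_k$ as a golfing telescope in which each increment $\Delta\mathcal{Y}_k$ has $\mathit{T}^{\text{H}}$-image supported on $\Omega_k$ and depends linearly on the residual $\mathcal{W}_{k-1}:=\mathcal{U}\star\mathcal{V}^{\text{H}}-\mathcal{P}_{\mathbb{S}}(\sum_{j<k}\Delta\mathcal{Y}_j)$; the increments are calibrated (with an explicit pre-conditioner that inverts the bias $\mathcal{P}_{\mathbb{S}}\mathit{T}\mathit{T}^{\dag}\mathcal{P}_{\mathbb{S}}$ on $\mathbb{S}$) so that the random fluctuation of $\mathcal{W}_k$ around $\mathbf{0}$ is exactly the operator controlled in the previous step. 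The resulting recursion $\Vert\mathcal{W}_k\Vert_{\text{F}}\leq\tfrac{1}{2}\Vert\mathcal{W}_{k-1}\Vert_{\text{F}}$ delivers \eqref{eq:cond:low_distortion} after $K$ iterations. Telescoping $\mathcal{P}_{\mathbb{S}^{\perp}}(\mathcal{Y})=\sum_k \mathcal{P}_{\mathbb{S}^{\perp}}(\Delta\mathcal{Y}_k)$, bounding each term via a dual Bernstein bound on $\Vert\mathcal{P}_{\mathbb{S}^{\perp}}\mathit{T}\mathcal{R}_{\Omega_k}\mathit{T}^{\dag}\mathcal{P}_{\mathbb{S}}(\mathcal{W}_{k-1})\Vert$, and combining with an inductively-maintained $\Vert\mathcal{W}_{k-1}\Vert_{\infty,2}$ contraction, gives \eqref{eq:cond:spectral_norm_cond} by summing a geometric series.

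The main obstacle is that $\mathit{T}\mathit{T}^{\dag}$ is not the identity and $\mathit{T}\mathcal{R}_{\Omega}\mathit{T}^{\dag}$ is neither self-adjoint nor a projector, so the clean algebraic identities driving the Cand\`es--Recht and Lu--CVPR golfings no longer hold. In particular, the natural golfing operator has conditional mean $\mathcal{P}_{\mathbb{S}}\mathit{T}\mathit{T}^{\dag}\mathcal{P}_{\mathbb{S}}$ rather than $\mathcal{P}_{\mathbb{S}}$, which generically has nontrivial kernel on $\mathbb{S}$, so designing the pre-conditioning step that restores a centered contraction is delicate. All the new composite norms that arise in this program --- $\Vert\mathcal{P}_{\mathbb{S}}\mathit{T}\mathcal{P}_{\Omega}\mathit{T}^{\dag}\mathcal{P}_{\mathbb{S}}\Vert$, $\Vert\mathit{T}^{\dag}\mathcal{P}_{\mathbb{S}}(\mathcal{Z})\Vert_{\infty}$, $\Vert\mathcal{P}_{\mathbb{S}}\mathit{T}(\mathdutchcal{e}_{ijk})\Vert_{\text{F}}$ and the corresponding $\mathcal{P}_{\mathbb{S}^{\perp}}$ variants --- must be bounded by fresh lemmas that each absorb the appropriate combination of $\Vert\mathbf{T}\Vert_{1\mapsto 2}$, $\kappa(\mathbf{T})$ and $\rho(\mathbf{T})$ so that the constants in \eqref{eq:sampling_rate_p} come out matching. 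Carrying the Frobenius and $\ell_{\infty,2}$ contractions of $\mathcal{W}_k$ in parallel through all $K$ golfing steps, without letting the non-unitarity inflate constants uncontrollably, is where the bulk of the technical work lies.
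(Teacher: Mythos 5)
Your proposal follows essentially the same route as the paper: the same golfing scheme over $O(\log(\kappa(\mathbf{T})(n_1+n_2)N_3))$ independent Bernoulli batches, the same rank-one decomposition plus operator Bernstein argument for \eqref{eq:cond:approximate_isometry}, the same reduction of \eqref{eq:cond:inner_product_condition} to increments whose $\mathit{T}^{\text{H}}$-image is supported on $\Omega_k$ (the paper's correction term $\mathcal{I}-\mathit{\tilde{T}}\mathit{T}^{\text{H}}$ is exactly your ``pre-conditioner'' centering the residual recursion), and the same parallel Frobenius, $\ell_\infty$ and $\ell_{\infty,2}$ contractions feeding a geometric series for \eqref{eq:cond:low_distortion} and \eqref{eq:cond:spectral_norm_cond}. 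The plan is sound and matches the paper's proof in structure and in where the $\kappa(\mathbf{T})$, $\rho(\mathbf{T})$ and $\Vert\mathbf{T}\Vert_{1\mapsto2}$ factors enter.
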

    
            \begin{proof}
            We show these conditions can be satisfied in turn.
            \subsection{Show that \eqref{eq:cond:approximate_isometry} Can Be Satisfied}
            We show \eqref{eq:cond:approximate_isometry} can be satisfied in steps:
            \begin{enumerate}
                \item Express the operator $\mathcal{P}_{\mathbb{S}} \mathit{T} \mathcal{R}_{\Omega} \mathit{T}^{\dag} \mathcal{P}_{\mathbb{S}}$ as sum of inner product operators:
                \begin{equation}
                    \begin{aligned}
                        &\mathit{T}^{\dag} \mathcal{P}_{\mathbb{S}} (\mathcal{A}) = \sum_{i, j, k}\langle\mathit{T}^{\dag} \mathcal{P}_{\mathbb{S}} (\mathcal{A}), \mathdutchcal{e}_{ijk}\rangle \mathdutchcal{e}_{ijk} = \sum_{i, j, k}\langle\mathcal{A}, \mathcal{P}_\mathbb{S}\mathit{\tilde{T}}(\mathdutchcal{e}_{ijk})\rangle~\mathdutchcal{e}_{ijk}, \\
                        & \mathcal{R}_{\Omega} \mathit{T}^{\dag} \mathcal{P}_{\mathbb{S}} (\mathcal{A}) = \frac{1}{p} \sum_{i, j, k}\delta_{ijk}\langle\mathcal{A}, \mathcal{P}_\mathbb{S}\mathit{\tilde{T}}(\mathdutchcal{e}_{ijk})\rangle~\mathdutchcal{e}_{ijk}, \\
                        &\mathcal{P}_{\mathbb{S}} \mathit{T} \mathcal{R}_{\Omega} \mathit{T}^{\dag} \mathcal{P}_{\mathbb{S}} (\mathcal{A}) = \frac{1}{p} \sum_{i, j, k}\delta_{ijk}\langle\mathcal{A}, \mathcal{P}_\mathbb{S}\mathit{\tilde{T}}(\mathdutchcal{e}_{ijk})\rangle~\mathcal{P}_{\mathbb{S}} \mathit{T}(\mathdutchcal{e}_{ijk}), 
                    \end{aligned}
                \end{equation}
                where $\{\delta_{ijk}\}_{ijk}$ is a list of Bernoulli random variables: if $(i, j, k) \in \Omega$, $\delta_{ijk} = 1$ and equaling $0$ elsewhere. $\mathdutchcal{e}_{ijk} \in \mathbb{C}^{n_1 \times n_2 \times n_3}$ is a tensor with $\mathdutchcal{e}_{ijk}(i, j, k) = 1$ the rest equaling $0$.
    
                \item Apply the non-commutative Bernstein inequality:
                \begin{theorem}[Non-commutative Bernstein inequality]\cite{tropp2012user}
                    $\{\mathbf{Z}_K\}$ is a list of independent random $n_1 \times n_2$ matrices, satisfying $\mathbb{E}(\mathbf{Z}_k) = \mathbf{0}, \Vert \mathbf{Z}_k \Vert \leq R$ almost surely. Let $\sigma^2 = \max(\Vert \sum_k \mathbb{E}\left[\mathbf{Z}_k\mathbf{Z}_k^{\text{H}}\right]\Vert, \Vert\sum_k\mathbb{E}\left[\mathbf{Z}_k\mathbf{Z}_k^{\text{H}}\right]\Vert)$, then $\forall t > 0$, 
                    \begin{align}
                        \mathbb{P}\left[\Vert \sum_k\mathbf{Z}_k \Vert \geq t\right] &\leq (n_1 + n_2) \exp(-\frac{t^2}{2\sigma^2 + \frac{2}{3}Rt})\nonumber \\
                        & \leq (n_1 + n_2)\exp(-\frac{3t^2}{8\sigma^2})~(\text{if } t \leq \frac{\sigma^2}{R})\nonumber, 
                    \end{align}
                    or $\forall c > 0$, with probability at least $1 - (n_1 + n_2)^{1-c}$,
                    \begin{align}
                        \Vert \sum_k \mathbf{Z}_k \Vert \leq 2\sqrt{c\sigma^2\log(n_1 + n_2)} + c R \log(n_1 + n_2)\nonumber
                    \end{align}
                    \label{theorem:Bernstein}
                \end{theorem}
    
                Now we apply Theorem \ref{theorem:Bernstein}:
                let
                \begin{align*}
                    \mathcal{H}_{ijk}: \mathcal{A} \mapsto (\frac{\delta_{ijk}}{p} - 1)\langle\mathcal{A}, \mathcal{P}_\mathbb{S}\mathit{\tilde{T}}(\mathdutchcal{e}_{ijk})\rangle\mathcal{P}_{\mathbb{S}} \mathit{T}(\mathdutchcal{e}_{ijk}), \mathbb{E}(\mathcal{H}_{ijk}) = \mathbf{0},
                \end{align*}
                and we have
                \begin{align*}
                    \Vert\mathcal{H}_{ijk}(\mathcal{A})\Vert_{\text{F}}
                    & = \Vert (\frac{\delta_{ijk}}{p} - 1)\langle\mathcal{A}, \mathcal{P}_\mathbb{S}\mathit{\tilde{T}}(\mathdutchcal{e}_{ijk})\rangle\mathcal{P}_{\mathbb{S}} \mathit{T}(\mathdutchcal{e}_{ijk}) \Vert_{\text{F}} \\
                    & \leq \frac{1}{p}\Vert\mathcal{P}_\mathbb{S}\mathit{\tilde{T}}(\mathdutchcal{e}_{ijk})\Vert_{\text{F}}\cdot\Vert\mathcal{P}_\mathbb{S}\mathit{T}(\mathdutchcal{e}_{ijk})\Vert_{\text{F}}\cdot\Vert\mathcal{A}\Vert_{\text{F}}. 
                \end{align*}
                Define the matrix operator $\mathbf{H}_{ijk}(\cdot) = \texttt{bdiag}(\mathcal{H}_{ijk}(\texttt{fold}(\cdot)))$
                \begin{align}
                    \Vert \mathbf{H}_{ijk} \Vert \leq \frac{1}{p}\max_{ijk}\Vert\mathcal{P}_\mathbb{S}\mathit{\tilde{T}}(\mathdutchcal{e}_{ijk})\Vert_{\text{F}}\cdot\max_{ijk}\Vert\mathcal{P}_\mathbb{S}\mathit{T}(\mathdutchcal{e}_{ijk})\Vert_{\text{F}}.
                \end{align}
    
                Also, 
                \begin{align}
                    \mathcal{H}_{ijk}^{\text{H}}\mathcal{H}_{ijk}(\mathcal{A})
                    & = \mathcal{H}_{ijk}^{\text{H}}\left((\frac{\delta_{ijk}}{p} - 1)\langle\mathcal{A}, \mathcal{P}_\mathbb{S}\mathit{\tilde{T}}(\mathdutchcal{e}_{ijk})\rangle\mathcal{P}_{\mathbb{S}} \mathit{T}(\mathdutchcal{e}_{ijk})\right)\nonumber\\
                    & = (\frac{\delta_{ijk}}{p} - 1)^2\langle\mathcal{P}_\mathbb{S}\mathit{T}(\mathdutchcal{e}_{ijk}), \mathcal{P}_\mathbb{S}\mathit{T}(\mathdutchcal{e}_{ijk})\rangle\langle\mathcal{P}_\mathbb{S}\mathit{\tilde{T}}(\mathdutchcal{e}_{ijk}), \mathcal{A}\rangle\mathcal{P}_{\mathbb{S}}\mathit{\tilde{T}}(\mathdutchcal{e}_{ijk}).
                \end{align}
    
                Thus 
                \begin{align}
                    & \Vert \sum_{ijk} \mathbb{E}[\mathbf{H}_{ijk}^{\text{H}}\mathbf{H}_{ijk}(\mathcal{A})]\Vert_{\text{F}}\nonumber\\
                    & = \Vert \sum_{ijk} \mathbb{E}[(\frac{\delta_{ijk}}{p} - 1)^2]\langle\mathcal{P}_\mathbb{S}\mathit{T}(\mathdutchcal{e}_{ijk}), \mathcal{P}_\mathbb{S}\mathit{T}(\mathdutchcal{e}_{ijk})\rangle\langle\mathcal{P}_\mathbb{S}\mathit{\tilde{T}}(\mathdutchcal{e}_{ijk}), \mathcal{A}\rangle\mathcal{P}_{\mathbb{S}}\mathit{\tilde{T}}(\mathdutchcal{e}_{ijk})\Vert_{\text{F}}\nonumber\\
                    &\leq\frac{1-p}{p}\max_{ijk}\Vert\mathcal{P}_\mathbb{S}\mathit{T}(\mathdutchcal{e}_{ijk})\Vert_{\text{F}}^2\cdot\Vert\sum_{ijk}\langle\mathcal{P}_\mathbb{S}\mathit{\tilde{T}}(\mathdutchcal{e}_{ijk}), \mathcal{A}\rangle\mathcal{P}_{\mathbb{S}}\mathit{\tilde{T}}(\mathdutchcal{e}_{ijk})\Vert_{\text{F}}\nonumber\\
                    &=\frac{1-p}{p}\max_{ijk}\Vert\mathcal{P}_\mathbb{S}\mathit{T}(\mathdutchcal{e}_{ijk})\Vert_{\text{F}}^2\cdot\Vert\mathcal{P}_{\mathbb{S}}\mathit{\tilde{T}}(\mathit{T}^{\dag}\mathcal{P}_{\mathbb{S}}(\mathcal{A}))^*\Vert_{\text{F}}\nonumber\\
                    &\leq\frac{1-p}{p}\max_{ijk}\Vert\mathcal{P}_\mathbb{S}\mathit{T}(\mathdutchcal{e}_{ijk})\Vert_{\text{F}}^2\cdot\Vert\mathcal{P}_{\mathbb{S}}\mathit{\tilde{T}}\Vert\cdot\Vert\mathit{T}^{\dag}\mathcal{P}_{\mathbb{S}}\Vert\cdot\Vert\mathcal{A}\Vert_{\text{F}}\label{eq:by_submul_op_1}\\
                    &\leq\frac{1-p}{p}\max_{ijk}\Vert\mathcal{P}_\mathbb{S}\mathit{T}(\mathdutchcal{e}_{ijk})\Vert_{\text{F}}^2\cdot\kappa(\mathbf{T})\cdot\Vert\mathcal{A}\Vert_{\text{F}},\label{eq:by_submul_op_2} 
                \end{align}
                where $\kappa(\mathbf{T})$ condition number of $\mathbf{T}$ and \eqref{eq:by_submul_op_1}, \eqref{eq:by_submul_op_2} follow from the submultiplicative property of operator norm and the fact that $\mathcal{P}_{\mathbb{S}}$ is a constrictive mapping, respectively.
                
                In the same fashion:
                \begin{align}
                    \Vert \sum_{ijk} \mathbb{E}[\mathbf{H}_{ijk}^{\text{H}}\mathbf{H}_{ijk}]\Vert&\leq\frac{\kappa(\mathbf{T})}{p}\max_{ijk}\Vert\mathcal{P}_\mathbb{S}\mathit{T}(\mathdutchcal{e}_{ijk})\Vert_{\text{F}}^2\\
                    \Vert \sum_{ijk} \mathbb{E}[\mathbf{H}_{ijk}\mathbf{H}_{ijk}^{\text{H}}]\Vert&\leq\frac{\kappa(\mathbf{T})}{p}\max_{ijk}\Vert\mathcal{P}_\mathbb{S}\mathit{\tilde{T}}(\mathdutchcal{e}_{ijk})\Vert_{\text{F}}^2
                \end{align}

                Now we apply Theorem \ref{theorem:Bernstein}:
                \begin{equation}
                    \begin{aligned}
                        & \mathbb{P}\left[ \Vert \mathcal{P}_{\mathbb{S}} \mathit{T} \mathcal{R}_{\Omega} \mathit{T}^{\dag} \mathcal{P}_{\mathbb{S}} - \mathcal{P}_{\mathbb{S}} \mathit{T} \mathit{T}^{\dag} \mathcal{P}_{\mathbb{S}} \Vert \geq t \right] \\
                        = & \mathbb{P}\left[ \Vert \sum_{ijk} \mathcal{H}_{ijk} \Vert \geq t \right] = \mathbb{P}\left[ \Vert \sum_{ijk} \mathbf{H}_{ijk} \Vert \geq t \right]\\
                        \leq & (n_1 + n_2)N_3 \cdot \\
                        & \exp\big(\frac{- t^2}{
                        \begin{matrix}
                            & \frac{2}{p}\kappa(\mathbf{T})\cdot\max\left\{\max_{ijk}\Vert\mathcal{P}_\mathbb{S}\mathit{T}(\mathdutchcal{e}_{ijk})\Vert_{\text{F}}^2, \max_{ijk}\Vert\mathcal{P}_\mathbb{S}\mathit{\tilde{T}}(\mathdutchcal{e}_{ijk})\Vert_{\text{F}}^2\right\} + \\ & \frac{2}{3p}\Vert\mathcal{P}_\mathbb{S}\mathit{\tilde{T}}(\mathdutchcal{e}_{ijk})\Vert_{\text{F}}\cdot\Vert\mathcal{P}_\mathbb{S}\mathit{T}(\mathdutchcal{e}_{ijk})\Vert_{\text{F}}
                        \end{matrix}}\big)
                    \end{aligned}
                    \label{eq:prob_approximate_isometry}
                \end{equation}
        
                Let $t = \frac{1}{2}$, by Appendix \ref{section:bound_F_norm} $\Vert \mathcal{P}_\mathbb{S}\mathit{T}(\mathdutchcal{e}_{ijk}) \Vert_{\text{F}}^2 \leq \frac{\nu r (n_1 + n_2)}{n_1 n_2} \Vert \mathbf{T} \Vert_{1\mapsto2}^2, \Vert \mathcal{P}_\mathbb{S}\mathit{\tilde{T}}(\mathdutchcal{e}_{ijk}) \Vert_{\text{F}}^2 \leq \frac{\nu r (n_1 + n_2)}{n_1 n_2} \Vert \tilde{\mathbf{T}} \Vert_{1\mapsto2}^2$ and use the fact that for a matrix $\mathbf{A}$, $\Vert \mathbf{A} \Vert_{\infty, 2} \leq \Vert \mathbf{A} \Vert$ holds, we can obtain that given the sampling rate in \eqref{eq:sampling_rate_p}, $\exists~c_3 > 0, \beta > 1$ s.t.
                \begin{align}
                    \mathbb{P}\left[ \Vert \mathcal{P}_{\mathbb{S}} \mathit{T} \mathcal{R}_{\Omega} \mathit{T}^{\dag} \mathcal{P}_{\mathbb{S}} - \mathcal{P}_{\mathbb{S}} \mathit{T} \mathit{T}^{\dag} \mathcal{P}_{\mathbb{S}} \Vert \geq \frac{1}{2} \right] \leq c_3^{-\beta}((n_1 + n_2)N_3)^{1 - \beta},
                \end{align}
                which means \eqref{eq:cond:approximate_isometry} is satisfied w.h.p..
            \end{enumerate}

            \subsection{Prove that \eqref{eq:cond:inner_product_condition}, \eqref{eq:cond:low_distortion}, \eqref{eq:cond:spectral_norm_cond} Can Be Satisfied}
                Use Golfing Scheme \cite{Golfing_Scheme} and construct $\mathcal{Y}$ as
                \begin{align}
                    & \mathcal{Y} = \sum_{i=1}^l (\frac{1}{p_i} \mathit{\tilde{T}} \mathcal{S}_{\Omega_i} \mathit{T}^{\text{H}} + \mathcal{I} - \mathit{\tilde{T}}\mathit{T}^{\text{H}}) \mathcal{M}_{i-1} \label{eq:constructing_Y}, \\
                    & \mathcal{M}_i = \mathcal{P}_{\mathbb{S}}(\mathit{\tilde{T}} \mathit{T}^{\text{H}} - \frac{1}{p_i} \mathit{\tilde{T}}\mathcal{S}_{\Omega_i}{\mathit{T}^{\text{H}}})\mathcal{M}_{i-1} \label{eq:constructing_M}, \\
                    & \mathcal{M}_0 = \mathcal{U} \star \mathcal{V}^{\text{H}}, \label{eq:M_0}
                \end{align}
                and $\{\Omega_i\}_{i=1}^{l}$ are $l$ independent sets of indices sampled by $\text{Ber}(p_i), p_i = 1 - (1 - p)^{\frac{1}{l}}$, $\bigcup_{i=1}^l \Omega_i = \Omega$, $l = 10\log(\kappa(\mathbf{T})(n_1 + n_2) N_3)$.
    
            \subsubsection{Prove that \eqref{eq:cond:inner_product_condition} Can Be Satisfied}
                First we need to prove the construction \eqref{eq:constructing_Y}, \eqref{eq:constructing_M}, \eqref{eq:M_0} satisfies \eqref{eq:cond:inner_product_condition}:
                \begin{equation}
                    \begin{aligned}
                        & (\mathit{T} \mathit{T}^{\dag} - \mathit{T} \mathcal{S}_{\Omega} \mathit{T}^{\dag})^{\text{H}} (\frac{1}{p_i} \mathit{\tilde{T}} \mathcal{S}_{\Omega_i} \mathit{T}^{\text{H}} + \mathcal{I} - \mathit{\tilde{T}}\mathit{T}^{\text{H}}) \\
                        = & \frac{1}{p_i} (\mathit{T} \mathit{T}^{\dag} - \mathit{T} \mathcal{S}_{\Omega} \mathit{T}^{\dag})^{\text{H}} \mathit{\tilde{T}} \mathcal{S}_{\Omega_i} \mathit{T}^{\text{H}} + (\mathit{T} \mathit{T}^{\dag} - \mathit{T} \mathcal{S}_{\Omega} \mathit{T}^{\dag})^{\text{H}} (\mathcal{I} - \mathit{\tilde{T}}\mathit{T}^{\text{H}}) \\
                        = & \mathbf{0}~(\text{by}~\mathbf{T}^{\text{H}}\mathbf{\tilde{T}} = (\mathbf{T}^{\dag}\mathbf{T})^{\text{H}} = \mathbf{I}, \mathcal{S}_{\Omega}\mathcal{S}_{\Omega_i} = \mathcal{S}_{\Omega_i}), 
                    \end{aligned}
                \end{equation}
                thus \eqref{eq:cond:inner_product_condition} is satisfied.

            \subsubsection{Prove that \eqref{eq:cond:low_distortion} Can Be Satisfied}
                Following the construction, 
                \begin{equation}
                    \begin{aligned}
                        \mathcal{P}_{\mathbb{S}}(\mathcal{Y}) & = \sum_{i=1}^l \mathcal{P}_{\mathbb{S}}(\frac{1}{p_i}\mathit{\tilde{T}}\mathcal{S}_{\Omega_i}\mathit{T}^{\text{H}} + \mathcal{I} - \mathit{\tilde{T}}\mathit{T}^{\text{H}})(\mathcal{M}_{i-1}) \\
                        & = \sum_{i=1}^l \mathcal{P}_{\mathbb{S}} \mathcal{M}_{i-1} - \mathcal{P}_{\mathbb{S}}(\mathit{\tilde{T}}\mathit{T}^{\text{H}} - \frac{1}{p_i}\mathit{\tilde{T}}\mathcal{S}_{\Omega_i}\mathit{T}^{\text{H}}) \mathcal{M}_{i-1} \\
                        & = \sum_{i=1}^l \mathcal{M}_{i-1} - \mathcal{M}_i = \mathcal{U} \star \mathcal{V}^{\text{H}} - \mathcal{M}_l.
                    \end{aligned}
                \end{equation}
                Therefore, $\Vert \mathcal{P}_{\mathbb{S}}(\mathcal{Y}) - \mathcal{U} \star \mathcal{V}^{\text{H}} \Vert_{\text{F}} = \Vert \mathcal{M}_l \Vert_{\text{F}}$.
        
                Since $\mathcal{M}_0 \in \mathbb{S}$, we desire that
                \begin{equation}
                    \begin{aligned}
                        \Vert \mathcal{M}_i \Vert_{\text{F}}& = \Vert \mathcal{P}_{\mathbb{S}}(\mathit{\tilde{T}} \mathit{T}^{\text{H}} - \frac{1}{p_i} \mathit{\tilde{T}}\mathcal{S}_{\Omega_i}{\mathit{T}^{\text{H}}})\mathcal{M}_{i-1} \Vert_{\text{F}} \\
                        & \leq \Vert\mathcal{P}_{\mathbb{S}}\mathit{\tilde{T}} \mathit{T}^{\text{H}}\mathcal{P}_{\mathbb{S}} - \frac{1}{p_i}\mathcal{P}_{\mathbb{S}}\mathit{\tilde{T}}\mathcal{S}_{\Omega_i}{\mathit{T}^{\text{H}}}\mathcal{P}_{\mathbb{S}}\Vert\cdot\Vert\mathcal{M}_{i-1}\Vert_{\text{F}} \leq \frac{1}{2} \Vert\mathcal{M}_{i-1}\Vert_{\text{F}} .
                    \end{aligned}
                    \label{eq:step_p_i}
                \end{equation}
        
                Note that \eqref{eq:step_p_i} puts a requirement similar to $\eqref{eq:cond:approximate_isometry}$ on the per-step sampling rate $p_i$. Also, $p_i = 1 - (1 - p)^{\frac{1}{l}} \geq \frac{p}{l}, l = 10\log(\kappa(\mathbf{T})(n_1 + n_2) N_3)$, thus $\exists~c_4 > 0$, 
                \begin{align}
                    p_i \geq c_4 (\kappa^2(\mathbf{T}) + \rho^2(\mathbf{T}))\frac{\lambda r (n_1 + n_2)}{n_1 n_2} \log(\kappa(\mathbf{T})(n_1 + n_2)N_3).
                    \label{eq:sampling_rate_p_i}
                \end{align}
                Using \eqref{eq:prob_approximate_isometry}, $\exists~c_5 > 0, \beta^{'} > 1$ by union bound $\Vert \mathcal{M}_i \Vert_{\text{F}} \leq \frac{1}{2}\Vert \mathcal{M}_{i-1} \Vert_{\text{F}}$ holds for all $i = 1, 2, \hdots, l$ with probability at least $1 - c_5((n_1 + n_2)N_3)^{1-\beta^{'}}$, which ensures \eqref{eq:step_p_i} holds. 
    
                As $l = 10\log(\kappa(\mathbf{T})(n_1 + n_2) N_3)$, then w.h.p.
                \begin{align}
                    \Vert \mathcal{P}_{\mathbb{S}}(\mathcal{Y}) - \mathcal{U} \star \mathcal{V}^{\text{H}} \Vert_{\text{F}} = \Vert \mathcal{M}_l \Vert_{\text{F}} \leq (\frac{1}{2})^l \Vert \mathcal{U} \star \mathcal{V} \Vert_{\text{F}} \leq (\frac{1}{2})^l \sqrt{r N_3} \leq \frac{p}{8\kappa(\mathbf{T})}.
                \end{align}
    
             \subsubsection{Prove that \eqref{eq:cond:spectral_norm_cond} Can Be Satisfied}
            
                Now we prove $\eqref{eq:cond:spectral_norm_cond}$. 
    
                First we use the construction \eqref{eq:constructing_Y} and the triangle inequality:
                \begin{equation}
                    \begin{aligned}
                        \Vert \mathcal{P}_{\mathbb{S}^{\perp}}(\mathcal{Y}) \Vert 
                        = & \Vert \sum_{i=1}^l \mathcal{P}_{\mathbb{S}^{\perp}}(\frac{1}{p_i}\mathit{\tilde{T}}\mathcal{S}_{\Omega_i}\mathit{T}^{\text{H}} - \mathit{\tilde{T}}\mathit{T}^{\text{H}})(\mathcal{M}_{i-1}) \Vert\\
                        \leq & \sum_{i=1}^l \Vert \mathcal{P}_{\mathbb{S}^{\perp}}(\frac{1}{p_i}\mathit{\tilde{T}}\mathcal{S}_{\Omega_i}\mathit{T}^{\text{H}} - \mathit{\tilde{T}}\mathit{T}^{\text{H}})(\mathcal{M}_{i-1}) \Vert\\
                        \leq & \sum_{i=1}^l \Vert (\frac{1}{p_i}\mathit{\tilde{T}}\mathcal{S}_{\Omega_i}\mathit{T}^{\text{H}} - \mathit{\tilde{T}}\mathit{T}^{\text{H}})(\mathcal{M}_{i-1}) \Vert. 
                    \end{aligned}
                \end{equation}
        
                Here we need a lemma for the initial control of the spectral norm and the lemma is proved in Appendix~\ref{proof:bounding_spectral_norm}.
                \begin{lemma}
                    Assuming that the Bernoulli sampling scheme is employed and $\forall c > 0$, 
                    \begin{align}
                        \Vert \mathit{\tilde{T}}(\frac{1}{p} \mathcal{S}_{\Omega} - \mathcal{I})\mathcal{Z} \Vert \leq 2\sqrt{\frac{c}{p} \log((n_1 + n_2)N_3)} \Vert \tilde{\mathbf{T}} \Vert_{\infty} \Vert \mathcal{Z} \Vert_{\infty, 2} +  \frac{c}{p} \log((n_1 + n_2)N_3) \Vert \tilde{\mathbf{T}} \Vert_{\infty} \Vert \mathcal{Z} \Vert_{\infty} \nonumber
                    \end{align}
                    holds with probability at least $1 - (n_1 + n_2)^{1 - c}$.
                    \label{lemma:bounding_spectral_norm}
                \end{lemma}
    
                By Lemma~\ref{lemma:bounding_spectral_norm}, for $c_5 > 1$, w.h.p.
                \begin{equation}
                    \begin{aligned}
                        & \Vert \mathcal{P}_{\mathbb{S}^{\perp}}(\mathcal{Y}) \Vert \leq \sum_{i=1}^l \Vert (\frac{1}{p_i}\mathit{\tilde{T}}\mathcal{S}_{\Omega_i}\mathit{T}^{\text{H}} - \mathit{\tilde{T}}\mathit{T}^{\text{H}})(\mathcal{M}_{i-1}) \Vert \\
                        \leq & c_5\sum_{i=1}^{l} \frac{1}{p_i} \log((n_1 + n_2)N_3) \Vert \tilde{\mathbf{T}} \Vert_{\infty} \Vert \mathit{T}^{\text{H}}(\mathcal{M}_{i-1}) \Vert_{\infty} \\ + & 2\sqrt{\frac{1}{p_i} \log((n_1 + n_2)N_3)} \Vert \tilde{\mathbf{T}} \Vert_{\infty} \Vert \mathit{T}^{\text{H}}(\mathcal{M}_{i-1}) \Vert_{\infty, 2}.
                    \end{aligned}
                    \label{eq:bounding_spectral_norm}
                \end{equation}
    
                Now another lemma is introduced to control the $\Vert\cdot\Vert_\infty$ norm and we prove it in Appendix~\ref{proof:bounding_infinity_norm}.
                \begin{lemma}
                    Assuming that the Bernoulli sampling scheme is employed, if $p \geq c_0 \kappa^2(\mathbf{T})\frac{\nu r (n_1 + n_2)}{n_1 n_2}\log((n_1 + n_2)N_3)$, then 
                    \begin{align}
                        \Vert \mathit{T}^{\text{H}}\mathcal{P}_{\mathbb{S}}\mathit{\tilde{T}}(\mathcal{I} - \frac{1}{p}\mathcal{S}_{\Omega})\mathcal{Z} \Vert_{\infty} \leq \frac{1}{2} \Vert \mathcal{Z} \Vert_{\infty} \nonumber                    \end{align}
                    holds with probability at least $1 - 2(n_1 + n_2)^{2-\frac{3c_0}{32}}N_3^{1-\frac{3c_0}{32}}$.
                    \label{lemma:bounding_infinity_norm}
                \end{lemma}
    
                By Lemma~\ref{lemma:bounding_infinity_norm} and \eqref{eq:sampling_rate_p_i}, w.h.p. 
                \begin{equation}
                    \begin{aligned}
                        \Vert \mathit{T}^{\text{H}}(\mathcal{M}_{i-1}) \Vert_{\infty} &= \Vert \mathit{T}^{\text{H}}\mathcal{P}_{\mathbb{S}}\mathit{\tilde{T}}(\mathcal{I} - \frac{1}{p_i}\mathcal{S}_{\Omega})\mathit{T}^{\text{H}}\mathcal{M}_{i-2} \Vert_{\infty}\\
                        & \leq \frac{1}{2} \Vert \mathit{T}^{\text{H}}(\mathcal{M}_{i-2}) \Vert_{\infty} \leq (\frac{1}{2})^{i-1} \Vert \mathit{T}^{\text{H}}(\mathcal{U} \star \mathcal{V}^{\text{H}}) \Vert_{\infty}. 
                    \end{aligned}
                    \label{eq:bounding_infinity_norm}
                \end{equation}
        
                Also a lemma that controls $\Vert\cdot\Vert_{\infty, 2}$ norm is needed and the proof is given in Appendix~\ref{proof:bounding_infinity_2_norm}.
                \begin{lemma}
                    Assuming that the Bernoulli sampling scheme is employed, if $p \geq c_0 (\kappa^2(\mathbf{T}) + \rho^2(\mathbf{T}))\frac{\lambda r (n_1 + n_2)}{n_1 n_2}\log((n_1 + n_2)N_3)$ with $c_0$ large enough, then $\forall~c > 0$, 
                    \begin{align}
                        \Vert \mathit{T}^{\text{H}} \mathcal{P}_{\mathbb{S}} \mathit{\tilde{T}}(\mathcal{I} - \frac{1}{p}\mathcal{S}_{\Omega}) \mathcal{Z} \Vert_{\infty, 2} \leq \frac{1}{2} \Vert \mathcal{Z} \Vert_{\infty, 2} + \frac{1}{2} \sqrt{\frac{n_1 n_2}{\mu r (n_1 + n_2)}}\Vert \mathcal{Z} \Vert_{\infty} \nonumber
                    \end{align}
                    holds with probability at least $1 - 2((n_1 + n_2)n_3)^{1-c}$.
                    \label{lemma:bounding_infinity_2_norm}
                \end{lemma}
            
                By Lemma~\ref{lemma:bounding_infinity_2_norm}, w.h.p. 
                \begin{equation}
                    \begin{aligned}
                        & \Vert \mathit{T}^{\text{H}}(\mathcal{M}_{i-1}) \Vert_{\infty, 2} \\
                        = & \Vert \mathit{T}^{\text{H}}\mathcal{P}_{\mathbb{S}}\mathit{\tilde{T}}(\mathcal{I} - \frac{1}{p}\mathcal{S}_{\Omega})\mathit{T}^{\text{H}}(\mathcal{M}_{i-2}) \Vert_{\infty, 2} \\
                        \leq & \frac{1}{2} \sqrt{\frac{n_1 n_2}{\mu r (n_1 + n_2)}} \Vert \mathit{T}^{\text{H}}(\mathcal{M}_{i-2}) \Vert_{\infty} + \frac{1}{2} \Vert \mathit{T}^{\text{H}}(\mathcal{M}_{i-2}) \Vert_{\infty, 2} \\
                        \leq & (\frac{1}{2})^2 \sqrt{\frac{n_1 n_2}{\mu r (n_1 + n_2)}} \Vert \mathit{T}^{\text{H}}(\mathcal{M}_{i-3}) \Vert_{\infty} \\ + & \frac{1}{2} (\frac{1}{2} \sqrt{\frac{n_1 n_2}{\mu r (n_1 + n_2)}} \Vert \mathit{T}^{\text{H}}(\mathcal{M}_{i-3}) \Vert_{\infty} + \frac{1}{2} \Vert \mathit{T}^{\text{H}}(\mathcal{M}_{i-3}) \Vert_{\infty, 2}) \\
                        \leq & (i-1)(\frac{1}{2})^{i-1} \sqrt{\frac{n_1 n_2}{\mu r (n_1 + n_2)}} \Vert \mathit{T}^{\text{H}}\mathcal{M}_{0} \Vert_{\infty} + (\frac{1}{2})^{i-1} \Vert \mathit{T}^{\text{H}}\mathcal{M}_{0} \Vert_{\infty, 2}
                    \end{aligned}
                \label{eq:bounding_infinity_2_norm}
                \end{equation}
        
                Plug \eqref{eq:bounding_infinity_norm} and \eqref{eq:bounding_infinity_2_norm} into \eqref{eq:bounding_spectral_norm}, yielding
                \begin{equation}
                    \begin{aligned}
                        & \Vert \mathcal{P}_{\mathbb{S}^{\perp}}(\mathcal{Y}) \Vert \\
                        \leq & c_5\Vert \tilde{\mathbf{T}} \Vert_{\infty} \sum_{i=1}^{l} \frac{\log((n_1 + n_2)N_3)}{p_i}  \Vert \mathit{T}^{\text{H}}(\mathcal{M}_{i-1}) \Vert_{\infty} + 2\sqrt{\frac{\log((n_1 + n_2)N_3)}{p_i} } \Vert \mathit{T}^{\text{H}}(\mathcal{M}_{i-1}) \Vert_{\infty, 2} \\
                        \leq & c_5 \Vert \tilde{\mathbf{T}} \Vert_{\infty} \sum_{i=1}^l \frac{\log((n_1 + n_2)N_3)}{p_i}  \Vert \mathit{T}^{\text{H}}(\mathcal{M}_{i-1}) \Vert_{\infty} + 2\sqrt{\frac{\log((n_1 + n_2)N_3)}{p_i} } \Vert \mathit{T}^{\text{H}}(\mathcal{M}_{i-1}) \Vert_{\infty, 2} \\
                        \leq & c_5 \Vert \tilde{\mathbf{T}} \Vert_{\infty} \frac{\log((n_1 + n_2)N_3)}{p_i}  \sum_{i=1}^l (\frac{1}{2})^{i-1} \Vert \mathit{T}^{\text{H}}(\mathcal{U} \star \mathcal{V}^{\text{H}}) \Vert_{\infty} + 2 c_5 \Vert \tilde{\mathbf{T}} \Vert_{\infty}
                        \sqrt{\frac{\log((n_1 + n_2)N_3)}{p_i} } \\ & \sum_{i=1}^l (i-1)(\frac{1}{2})^{i-1} \sqrt{\frac{n_1 n_2}{\mu r (n_1 + n_2)}} \Vert \mathit{T}^{\text{H}}(\mathcal{U} \star \mathcal{V}^{\text{H}}) \Vert_{\infty} + (\frac{1}{2})^{i-1} \Vert \mathit{T}^{\text{H}}(\mathcal{U} \star \mathcal{V}^{\text{H}}) \Vert_{\infty, 2} \\
                        \leq & c_5 \Vert \tilde{\mathbf{T}} \Vert_{\infty} \frac{\log((n_1 + n_2)N_3)}{p_i}  \Vert \mathit{T}^{\text{H}}(\mathcal{U} \star \mathcal{V}^{\text{H}}) \Vert_{\infty} + 8 c_5 \Vert \tilde{\mathbf{T}} \Vert_{\infty} \sqrt{\frac{n_1 n_2 \log((n_1 + n_2)N_3)}{\mu r (n_1 + n_2) p_i}} \\ & \Vert \mathit{T}^{\text{H}}(\mathcal{U} \star \mathcal{V}^{\text{H}}) \Vert_{\infty} + 4 c_5 \Vert \tilde{\mathbf{T}} \Vert_{\infty} \sqrt{\frac{\log((n_1 + n_2)N_3)}{p_i} } \Vert \mathit{T}^{\text{H}}(\mathcal{U} \star \mathcal{V}^{\text{H}}) \Vert_{\infty, 2}
                    \end{aligned}
                    \label{eq:P_S_perp_Y}
                \end{equation}
        
                Now it remains to bound $\Vert \mathit{T}^{\text{H}}(\mathcal{U} \star \mathcal{V}^{\text{H}}) \Vert_{\infty}$ and $\Vert \mathit{T}^{\text{H}}(\mathcal{U} \star \mathcal{V}^{\text{H}}) \Vert_{\infty, 2}$.
        
                Here, in order to bound $\Vert \mathit{T}^{\text{H}}(\mathcal{U} \star \mathcal{V}^{\text{H}}) \Vert_{\infty}$, a domain transform strategy is employed and we revisit the definition of generalized convolution $\star_{\mathbf{A}, \mathbf{B}}$ induced by two transforms $\mathbf{A}, \mathbf{B}$ \cite{Generalized_convolution}, which gives the form induced by $\mathit{T}(\cdot)$ of each entry in the original domain: 
                \begin{theorem}[Generalized convolution]\cite{Generalized_convolution}
                    Suppose $\mathbf{x}, \mathbf{y} \in \mathbb{C}^{m \times 1}, \mathbf{A} \in \mathbb{C}^{M \times m}, \mathbf{B} \in \mathbb{C}^{m \times M}$
                    \begin{align}
                        \mathbf{z} = \mathbf{x} \star_{\mathbf{A}, \mathbf{B}} \mathbf{y} = \mathbf{B}(\mathbf{A}(\mathbf{x}) \star \mathbf{A}(\mathbf{y})). \nonumber
                    \end{align}
                    
                    Each entry of $\mathbf{z}$ satisfies that
                    \begin{equation}
                        \begin{aligned}
                            \mathbf{z}(m) &= \mathbf{x}^{\text{T}} \mathbf{W}_{\mathbf{A}, \mathbf{B}}^m \mathbf{y} \\
                            \mathbf{W}_{\mathbf{A}, \mathbf{B}}^m(i, j) &= \sum_{k} \mathbf{B}(m, k) \cdot \mathbf{A}(k, i) \cdot \mathbf{A}(k, j). 
                        \end{aligned}
                        \label{eq:W_A_B}
                    \end{equation}
                    \label{theorem:generalized_convolution}
                \end{theorem}
        
                Let $\mathbf{A} = \mathbf{I}, \mathbf{B} = \mathbf{T}^{\text{H}}$ and define $\mathbf{u}_t^i = \mathcal{U}(i, t, :), {\mathbf{v}_t^j}^{\text{H}} = \mathcal{V}^{\text{H}}(t, j, :)$, 
                \begin{equation}
                    \begin{aligned}
                        & \Vert \mathit{T}^{\text{H}}(\mathcal{U} \star \mathcal{V}^{\text{H}}) \Vert_{\infty} 
                        = \Vert \mathcal{U} \star_{\mathit{T}} \mathcal{V}^{\text{H}} \Vert_{\infty} \\
                        = & \max_{ij} \Vert \mathcal{U}(i, :, :) \star_{\mathit{T}} \mathcal{V}^{\text{H}}(:, j, :) \Vert_{\infty} = \max_{ij} \Vert \sum_{t=1}^r \mathbf{u}_t^i \star_{\mathit{T}} {\mathbf{v}_t^j}^{\text{H}}\Vert_{\infty} \\
                        \leq & \sum_{t=1}^r \max_{ij} \Vert \mathbf{u}_t^i \star_{\mathit{T}} {\mathbf{v}_t^j}^{\text{H}}\Vert_{\infty} \leq \max_{ij} \{\sum_{t=1}^r \max_{k} \Vert \mathbf{W}_{\mathbf{T}}^k \Vert \cdot \Vert \mathbf{u}_t^i \Vert_2 \cdot \Vert \mathbf{v}_t^j \Vert_2\} \\
                        = & \max_{k} \Vert \mathbf{W}_{\mathbf{T}}^k \Vert\cdot \max_{ij}\{ \sum_{t=1}^r \frac{1}{2} \Vert \mathbf{u}_t^i \Vert_2^2 + \frac{1}{2} \Vert \mathbf{v}_t^j \Vert_2^2 \}\\
                        = & \max_{k} \Vert\mathbf{W}_{\mathbf{T}}^k\Vert \cdot \max_{ij} \{\frac{1}{2} \Vert \mathcal{U}^{\text{H}} \star \xi_i \Vert_{\text{F}}^2 + \frac{1}{2} \Vert \mathcal{V}^{\text{H}} \star \xi_j \Vert_{\text{F}}^2\} 
                    \end{aligned}
                \end{equation}
            
                Now we analyze $\Vert \mathbf{W}_{\mathbf{T}}^k\Vert$. We write \eqref{eq:W_A_B} into the matrix form:
                \begin{align}
                    \mathbf{W}_{\mathbf{T}}^k &= \text{diag}(\mathbf{T}^{\text{H}}(k, :)),
                \end{align}
                which means that if we cast $\{\mathbf{W}_{\mathbf{T}}^k\}_{k=1}^{N_3}$ into a three-dimensional tensor $\mathcal{W}$, it is a diagonal tensor and its diagonal tubes are formed by matrix $\mathbf{T}^{\text{H}}$. Thus $\max_k \Vert \mathbf{W}_{\mathbf{T}}^k\Vert = \Vert \mathcal{W} \Vert = \Vert \mathbf{T} \Vert_{\infty}$ and
                \begin{equation}
                    \begin{aligned}
                        \Vert \mathit{T}^{\text{H}}(\mathcal{U} \star \mathcal{V}^{\text{H}}) \Vert_{\infty} \leq \Vert \mathbf{T} \Vert_{\infty} \max_{ij} \{\frac{1}{2} \Vert \mathcal{U}^{\text{H}} \star \xi_i \Vert_{\text{F}}^2 + \frac{1}{2} \Vert \mathcal{V}^{\text{H}} \star \xi_j \Vert_{\text{F}}^2\} \leq \Vert \mathbf{T} \Vert_{\infty} \frac{\mu r (n_1 + n_2) N_3}{2 n_1 n_2}.
                    \end{aligned}
                    \label{eq:T_H_U_V_H_infty}
                \end{equation}
                Another reason why we introduce Theorem \ref{theorem:generalized_convolution} here is that when $\mathbf{T}$ is a square matrix (not necessarily orthogonal), by letting $\mathbf{B} = \tilde{\mathbf{T}}$ the bound can be traced back to the singular tensors in the original domain and possibly some tighter bound can be obtained. 
        
                For $\Vert \mathit{T}^{\text{H}}(\mathcal{U} \star \mathcal{V}^{\text{H}}) \Vert_{\infty, 2}$,
                \begin{equation}
                    \begin{aligned}
                        \Vert \mathit{T}^{\text{H}}(\mathcal{U} \star \mathcal{V}^{\text{H}}) \Vert_{\infty, 2} & \leq \max_{ij} \{\Vert \mathit{T}^{\text{H}}(\mathcal{U} \star \mathcal{V}^{\text{H}}) \star \mathdutchcal{e}_j \Vert_{\text{F}}, \Vert \mathdutchcal{e}_i^{\text{H}} \star \mathit{T}^{\text{H}}(\mathcal{U} \star \mathcal{V}^{\text{H}}) \Vert_{\text{F}} \} \\
                        & = \max_{ij} \{\Vert \mathit{T}^{\text{H}}(\mathcal{U} \star \mathcal{V}^{\text{H}} \star \xi_j) \Vert_{\text{F}}, \Vert \mathit{T}^{\text{H}}(\xi_i^{\text{H}} \star \mathcal{U} \star \mathcal{V}^{\text{H}}) \Vert_{\text{F}} \} \\
                        & \leq \Vert\mathbf{T}^{\text{H}}\Vert\sqrt{\frac{\mu r (n_1 + n_2) N_3}{n_1 n_2}}
                    \end{aligned}
                    \label{eq:T_H_U_V_H_infty_2}
                \end{equation}
        
                Thus, plug $p_i \geq \frac{c_0}{10} (\kappa^2(\mathbf{T}) + \rho^2(\mathbf{T}))\frac{\lambda r (n_1 + n_2)}{n_1 n_2}\log((n_1 + n_2)N_3)$ and \eqref{eq:T_H_U_V_H_infty}, \eqref{eq:T_H_U_V_H_infty_2} into \eqref{eq:P_S_perp_Y} and we finally obtain w.h.p.
                \begin{equation}
                    \begin{aligned}
                        & \Vert \mathcal{P}_{\mathbb{S}^{\perp}}(\mathcal{Y}) \Vert \\
                        \leq & \frac{c_5}{p_i}\cdot\frac{\rho(\mathbf{T})\mu r (n_1 + n_2) \log((n_1 + n_2)N_3)}{2 n_1 n_2} + 8 c_5 \rho(\mathbf{T}) \sqrt{\frac{\mu r (n_1 + n_2) \log((n_1 + n_2)N_3)}{2n_1n_2p_i}} \\
                        & + 4 c_5 \sqrt{\frac{(\kappa^2(\mathbf{T}) + \rho^2(\mathbf{T}))\mu r (n_1 + n_2)}{n_1 n_2 p_i}} \leq \frac{10c_5}{c_0} + \frac{32c_5}{\sqrt{c_0}} + \frac{16c_5}{\sqrt{c_0}} \leq \frac{1}{2}
                    \end{aligned}
                \end{equation}
                with $c_0$ large enough.
            \end{proof}

            \section{Proofs of the Norm-bounding Lemmas}
                \label{section:proofs_of_lemmas}
                \subsection{Proof of Lemma~\ref{lemma:bounding_spectral_norm}}
                \label{proof:bounding_spectral_norm}
                \begin{proof}
                    First, we decompose the operator, 
                    \begin{align}
                        & \mathit{\tilde{T}}(\frac{1}{p} \mathcal{S}_{\Omega} - \mathcal{I})\mathcal{Z} 
                        = \sum_{ijk} (\frac{\delta_{ijk}}{p} - 1) \mathcal{Z}_{ijk} \mathit{\tilde{T}}(\mathdutchcal{e}_{ijk})
                        := \sum_{ijk} \mathcal{L}_{ijk}.
                    \end{align}
    
                    With $\mathbb{E}\left[\texttt{bdiag}(\mathcal{L}_{ijk})\right] = \mathbf{0}$, 
                    \begin{align}
                        \Vert \texttt{bdiag}(\mathcal{L}_{ijk}) \Vert & \leq \frac{1}{p} \Vert \mathcal{Z} \Vert_{\infty} \max_{ijk}\Vert\texttt{bdiag}(\mathit{\tilde{T}}(\mathdutchcal{e}_{ijk}))\Vert \leq \frac{1}{p} \Vert \mathcal{Z} \Vert_{\infty}\Vert\tilde{\mathbf{T}}\Vert_{\infty}
                    \end{align}
                    \begin{align}
                        & \Vert \sum_{ijk} \mathbb{E}[\texttt{bdiag}^{\text{H}}(\mathcal{L}_{ijk})\texttt{bdiag}(\mathcal{L}_{ijk})]\Vert \nonumber \\ = & \Vert \sum_{ijk} \mathbb{E}\left[ (\frac{\delta_{ijk}}{p} - 1)^2 \right] \cdot \vert \mathcal{Z}_{ijk} \vert^2 \cdot \texttt{bdiag}(\left[ \mathit{\tilde{T}}(\mathdutchcal{e}_{ijk}) \right]^{\text{H}} \star \mathit{\tilde{T}}(\mathdutchcal{e}_{ijk})) \Vert \nonumber \\
                        = & \frac{1-p}{p} \Vert \sum_{ijk} \vert \mathcal{Z}_{ijk} \vert^2 \cdot \left[ \mathit{\tilde{T}}(\mathdutchcal{e}_{ijk}) \right]^{\text{H}} \mathit{\tilde{T}}(\mathdutchcal{e}_{ijk}) \Vert \\
                        \leq & \frac{1-p}{p} \Vert \tilde{\mathbf{T}} \Vert_{\infty}^2 \max_{j} \sum_{ik}\vert \mathcal{Z}_{ijk} \vert^2 \nonumber \\
                        = & \frac{1-p}{p} \Vert \tilde{\mathbf{T}} \Vert_{\infty}^2 \Vert \mathcal{Z}_{ijk} \Vert_{1\mapsto 2}^2 \nonumber
                    \end{align}
                    \begin{align}
                        \Vert \sum_{ijk} \mathbb{E}[\texttt{bdiag}(\mathcal{L}_{ijk})\texttt{bdiag}^{\text{H}}(\mathcal{L}_{ijk})]\Vert & \leq \frac{1-p}{p} \Vert \tilde{\mathbf{T}} \Vert_{\infty}^2 \Vert \mathcal{Z}_{ijk} \Vert_{2\mapsto \infty}^2
                    \end{align}
                    
                    Use the extension of theorem \eqref{theorem:Bernstein}:
                    \begin{equation}
                        \begin{aligned}
                            & \Vert \mathit{\tilde{T}}(\frac{1}{p} \mathcal{S}_{\Omega} - \mathcal{I})\mathcal{Z} \Vert = \Vert \sum_{ijk} \mathcal{L}_{ijk} \Vert = \Vert \sum_{ijk} \texttt{bdiag}(\mathcal{L}_{ijk}) \Vert \\
                            & \leq 2\sqrt{c\frac{1}{p} \log((n_1 + n_2)N_3)} \Vert \tilde{\mathbf{T}} \Vert_{\infty} \Vert \mathcal{Z} \Vert_{\infty, 2} + c \frac{1}{p} \log((n_1 + n_2)N_3) \Vert \tilde{\mathbf{T}} \Vert_{\infty} \Vert \mathcal{Z} \Vert_{\infty}
                        \end{aligned}
                    \end{equation}
                    holds with probability at least $1 - (n_1 + n_2)^{1 - c}$, $\forall c > 0$.
                \end{proof}

                \subsection{Proof of Lemma~\ref{lemma:bounding_infinity_norm}}
                \label{proof:bounding_infinity_norm}
                \begin{proof}
                    First we decompose the operator:
                    \begin{equation}
                        \begin{aligned}
                            \mathit{T}^{\text{H}}\mathcal{P}_{\mathbb{S}}\mathit{\tilde{T}}(\mathcal{I} - \frac{1}{p}\mathcal{S}_{\Omega})\mathcal{Z} & = \mathit{T}^{\text{H}}\mathcal{P}_{\mathbb{S}}\mathit{\tilde{T}}\sum_{ijk}(1 - \frac{\delta_{ijk}}{p})\mathcal{Z}_{ijk}\mathdutchcal{e}_{ijk} \\
                            & = \sum_{ijk}(1 - \frac{\delta_{ijk}}{p})\mathcal{Z}_{ijk}\mathit{T}^{\text{H}}\mathcal{P}_{\mathbb{S}}\mathit{\tilde{T}}(\mathdutchcal{e}_{ijk}) := \sum_{ijk} \mathcal{A}_{ijk}.
                        \end{aligned}
                    \end{equation}
        
                    Each entry of $\mathcal{A}_{ijk}$ can be extracted using the inner-product with $\mathdutchcal{e}_{abc}$:
                    \begin{align}
                        \mathcal{A}_{ijk, abc} & = \langle (1 - \frac{\delta_{ijk}}{p})\mathcal{Z}_{ijk}\mathit{T}^{\text{H}}\mathcal{P}_{\mathbb{S}}\mathit{\tilde{T}}(\mathdutchcal{e}_{ijk}), \mathdutchcal{e}_{abc} \rangle \nonumber \\
                        & = (1 - \frac{\delta_{ijk}}{p})\mathcal{Z}_{ijk} \langle \mathcal{P}_{\mathbb{S}}\mathit{\tilde{T}}(\mathdutchcal{e}_{ijk}), \mathcal{P}_{\mathbb{S}}\mathit{T}(\mathdutchcal{e}_{abc}) \rangle.\nonumber
                    \end{align}
    
                    \begin{equation}
                        \begin{aligned}
                            \vert \mathcal{A}_{ijk, abc} \vert & = \vert (1 - \frac{\delta_{ijk}}{p})\mathcal{Z}_{ijk} \langle \mathcal{P}_{\mathbb{S}}\mathit{\tilde{T}}(\mathdutchcal{e}_{ijk}), \mathcal{P}_{\mathbb{S}}\mathit{T}(\mathdutchcal{e}_{abc}) \vert \\
                            & \leq \frac{1}{p} \Vert \mathcal{Z} \Vert_{\infty} \Vert \mathcal{P}_{\mathbb{S}}\mathit{\tilde{T}}(\mathdutchcal{e}_{ijk}) \Vert_{\text{F}} \Vert \mathcal{P}_{\mathbb{S}}\mathit{T}(\mathdutchcal{e}_{abc}) \Vert_{\text{F}}.
                        \end{aligned}
                    \end{equation}
    
                    \begin{equation}
                        \begin{aligned}
                            \vert \sum_{ijk} \mathbb{E}\left[ \vert\mathcal{A}_{ijk, abc}\vert^2 \right] & = \vert \sum_{ijk} \mathbb{E}\left[ (1 - \frac{\delta_{ijk}}{p})^2 \vert\mathcal{Z}_{ijk}\vert^2 \vert\langle \mathcal{P}_{\mathbb{S}}\mathit{\tilde{T}}(\mathdutchcal{e}_{ijk}), \mathcal{P}_{\mathbb{S}}\mathit{T}(\mathdutchcal{e}_{abc})\rangle\vert^2 \right] \\
                            & = \sum_{ijk} \frac{1-p}{p} \vert\mathcal{Z}_{ijk}\vert^2 \vert\langle \mathcal{P}_{\mathbb{S}}\mathit{\tilde{T}}(\mathdutchcal{e}_{ijk}), \mathcal{P}_{\mathbb{S}}\mathit{T}(\mathdutchcal{e}_{abc})\rangle\vert^2 \\
                            & \leq \frac{1-p}{p} \Vert \mathcal{Z} \Vert_{\infty}^2 \Vert\mathit{T}^{\dag}\mathcal{P}_{\mathbb{S}}\mathit{T}(\mathdutchcal{e}_{abc})\Vert_{\text{F}}^2 \\
                            & \leq \frac{1-p}{p} \Vert \mathcal{Z} \Vert_{\infty}^2\Vert \mathbf{T}^{\dag}\Vert^2\Vert \mathcal{P}_{\mathbb{S}}\mathit{T}(\mathdutchcal{e}_{abc})\Vert_{\text{F}}^2
                        \end{aligned}
                    \end{equation}
                    
                    Using Bernstein inequality:
                    \begin{equation}
                        \begin{aligned}
                            & \mathbb{P}\left[ \vert \left( \mathit{T}^{\text{H}}\mathcal{P}_{\mathbb{S}}\mathit{\tilde{T}}(\mathcal{I} - \frac{1}{p}\mathcal{S}_{\Omega})\mathcal{Z} \right)_{abc} \vert \geq \frac{1}{2} \Vert \mathcal{Z} \Vert_{\infty} \right] = \mathbb{P}\left[ \vert \sum_{ijk} \mathcal{A}_{ijk, abc} \vert \geq \frac{1}{2} \Vert \mathcal{Z} \Vert_{\infty} \right] \\
                            & \leq 2\exp\left(- \frac{\frac{1}{4}\Vert \mathcal{Z} \Vert_{\infty}^2}{\frac{2}{p} \Vert \mathcal{Z} \Vert_{\infty}^2 \Vert \mathit{T}^{\dag}\mathcal{P}_{\mathbb{S}}\mathit{T}(\mathdutchcal{e}_{abc})\Vert_{\text{F}}^2 + \frac{2}{3p}\Vert \mathcal{Z} \Vert_{\infty}^2 \Vert\mathcal{P}_{\mathbb{S}}\mathit{T}(\mathdutchcal{e}_{abc})\Vert_{\text{F}}\cdot\Vert\mathcal{P}_{\mathbb{S}}\mathit{\tilde{T}}(\mathdutchcal{e}_{abc})\Vert_{\text{F}}}\right) \\
                            & = 2\exp\left(- \frac{p}{8 \Vert \mathit{T}^{\dag}\mathcal{P}_{\mathbb{S}}\mathit{T}(\mathdutchcal{e}_{abc})\Vert_{\text{F}}^2 + \frac{8}{3}\Vert\mathcal{P}_{\mathbb{S}}\mathit{T}(\mathdutchcal{e}_{abc})\Vert_{\text{F}}\cdot\Vert\mathcal{P}_{\mathbb{S}}\mathit{\tilde{T}}(\mathdutchcal{e}_{abc})\Vert_{\text{F}}}\right) \\
                            & \leq 2\exp\left(- \frac{c_0 \kappa^2(\mathbf{T})\frac{\nu r (n_1 + n_2)}{n_1 n_2}\log((n_1 + n_2)N_3)}{8\kappa^2(\mathbf{T})\frac{\nu r (n_1 + n_2)}{n_1 n_2} + \frac{8}{3}\kappa^2(\mathbf{T})\frac{\nu r (n_1 + n_2)}{n_1 n_2}}\right) \\
                            & = 2((n_1 + n_2)N_3)^{-\frac{3c_0}{32}}.
                        \end{aligned}
                        \label{eq:prob_bounding_infinity_norm}
                    \end{equation}
                    By union bound, $\Vert \mathit{T}^{\text{H}}\mathcal{P}_{\mathbb{S}}\mathit{\tilde{T}}(\mathcal{I} - \frac{1}{p}\mathcal{S}_{\Omega})\mathcal{Z} \Vert_{\infty} \leq \Vert \mathcal{Z} \Vert_{\infty}$ holds with probability at least $1 - 2(n_1 + n_2)^{2-\frac{3c_0}{32}}N_3^{1-\frac{3c_0}{32}}$.
                \end{proof}
    
                \subsection{Proof of Lemma~\ref{lemma:bounding_infinity_2_norm}}
                \label{proof:bounding_infinity_2_norm}
                \begin{proof}
                    We decompose the operator as 
                    \begin{equation}
                        \begin{aligned}
                            &(\mathit{T}^{\text{H}} \mathcal{P}_{\mathbb{S}} \mathit{\tilde{T}}(\mathcal{I} - \frac{1}{p}\mathcal{S}_{\Omega}) \mathcal{Z}) \star \hat{\xi_b} \\
                            =& \sum_{ijk} (1 - \frac{\delta_{ijk}}{p}) \mathcal{Z}_{ijk} (\mathit{T}^{\text{H}} \mathcal{P}_{\mathbb{S}} \mathit{\tilde{T}}(\mathdutchcal{e}_{ijk})) \star \hat{\xi_b} := \sum_{ijk} \mathcal{C}_{ijk} \in \mathbb{C}^{n_1 \times 1 \times n_3}, 
                        \end{aligned}
                    \end{equation}
                    where $\hat{\xi_b} \in \mathbb{C}^{n_1 \times 1 \times n_3}$ is the tensor column basis in the original domain with $\hat{\xi_b}(1, b, :) = 1$ and equaling $0$ otherwise. Since $\Vert \mathcal{C}_{ijk} \Vert_{\text{F}} = \Vert \texttt{vec}(\mathcal{C}_{ijk}) \Vert_{2} = \Vert \texttt{vec}(\mathcal{C}_{ijk}) \Vert$, it is equipped with the form to apply Theorem \ref{theorem:Bernstein}.
                    \begin{equation}
                        \begin{aligned}
                            & \Vert \texttt{vec}(\mathcal{C}_{ijk}) \Vert_2 = \Vert \mathcal{C}_{ijk} \Vert_{\text{F}} \\
                            & \leq \frac{1}{p} \Vert \mathcal{Z} \Vert_{\infty} \Vert (\mathit{T}^{\text{H}} \mathcal{P}_{\mathbb{S}} \mathit{\tilde{T}}(\mathdutchcal{e}_{ijk})) \star \hat{\xi_b} \Vert_{\text{F}} \leq \frac{1}{p} \Vert \mathcal{Z} \Vert_{\infty} \Vert \mathit{T}^{\text{H}} \mathcal{P}_{\mathbb{S}} \mathit{\tilde{T}}(\mathdutchcal{e}_{ijk})\Vert_{\text{F}}.
                        \end{aligned}
                    \end{equation}
                
                    \begin{equation}
                        \begin{aligned}
                            & \vert \mathbb{E}[ \sum_{ijk} \texttt{vec}(\mathcal{C}_{ijk})^{\text{H}} \texttt{vec}(\mathcal{C}_{ijk})] \vert = \vert \mathbb{E}[ \sum_{ijk} \Vert \mathcal{C}_{ijk} \Vert_{\text{F}}^2] \vert \\
                            = & \frac{1 - p}{p} \sum_{ijk} \vert \mathcal{Z}_{ijk} \vert^2 \Vert (\mathit{T}^{\text{H}} \mathcal{P}_{\mathbb{S}} \mathit{\tilde{T}}(\mathdutchcal{e}_{ijk})) \star \hat{\xi_b} \Vert_{\text{F}}^2 \label{eq:using_Bernstein_bounding_infinity_2_norm}
                        \end{aligned}
                    \end{equation}
    
                    Looking into the term in \eqref{eq:using_Bernstein_bounding_infinity_2_norm}:
                    \begin{align}
                        \Vert (\mathit{T}^{\text{H}} \mathcal{P}_{\mathbb{S}} \mathit{\tilde{T}}(\mathdutchcal{e}_{ijk})) \star \hat{\xi_b}) \Vert_{\text{F}} \leq \Vert(\mathit{T}^{\text{H}} (\mathcal{P}_{\mathcal{U}}\star \mathit{\tilde{T}}(\mathdutchcal{e}_{ijk}))) \star \hat{\xi_b} \Vert_{\text{F}} + \Vert(\mathit{T}^{\text{H}} (\mathcal{P}_{\mathcal{U}^{\perp}}\star \mathit{\tilde{T}}(\mathdutchcal{e}_{ijk}) \star \mathcal{P}_{\mathcal{V}}) \star \hat{\xi_b} \Vert_{\text{F}}, 
                    \end{align}
    
                    yields
                    \begin{align}
                        \Vert(\mathit{T}^{\text{H}} (\mathcal{P}_{\mathcal{U}}\star \mathit{\tilde{T}}(\mathdutchcal{e}_{ijk}))) \star \hat{\xi_b} \Vert_{\text{F}} = \left\{
                            \begin{aligned}
                                & \Vert\mathit{T}^{\text{H}} (\mathcal{P}_{\mathcal{U}}\star \mathit{\tilde{T}}(e_{ibk})) \Vert_{\text{F}} &,~j = b \\
                                & 0 &,~j \neq b
                            \end{aligned}
                        \right.
                    \end{align}
        
                    \begin{align}
                        \Vert(\mathit{T}^{\text{H}} (\mathcal{P}_{\mathcal{U}^{\perp}}\star \mathit{\tilde{T}}(\mathdutchcal{e}_{ijk}) \star \mathcal{P}_{\mathcal{V}}) \star \hat{\xi_b} \Vert_{\text{F}} \leq \Vert \mathbf{T}^{\text{H}} \Vert \cdot \Vert \mathbf{\tilde{T}}\Vert_{\infty} \Vert \xi_j^{\text{H}} \star \mathcal{P}_{\mathcal{V}} \star \xi_b \Vert_{\text{F}}.
                    \end{align}
        
                    Therefore
                    \begin{equation}
                        \begin{aligned}
                            & \vert \mathbb{E}[ \sum_{ijk} \texttt{vec}(\mathcal{C}_{ijk})^{\text{H}} \texttt{vec}(\mathcal{C}_{ijk})] \vert = \vert \mathbb{E}[ \sum_{ijk} \Vert \mathcal{C}_{ijk} \Vert_{\text{F}}^2] \vert \\
                            \leq & \frac{2}{p} \sum_{ik} \vert \mathcal{Z}_{ibk} \vert^2 \Vert\mathit{T}^{\text{H}} (\mathcal{P}_{\mathcal{U}}\star \mathit{\tilde{T}}(e_{ibk})) \Vert_{\text{F}}^2 + \frac{2}{p} \Vert \mathbf{T}^{\text{H}} \Vert^2\cdot\Vert \tilde{\mathbf{T}}\Vert_{\infty}^2 \sum_{ijk} \vert \mathcal{Z}_{ijk} \vert^2 \cdot \Vert \xi_j^{\text{H}} \star \mathcal{P}_{\mathcal{V}} \star \xi_b \Vert_{\text{F}}^2 \\
                            \leq & \frac{2}{p} \max_{ijk} \Vert\mathit{T}^{\text{H}}\mathcal{P}_{\mathcal{U}}\mathit{\tilde{T}}(\mathdutchcal{e}_{ijk}) \Vert_{\text{F}}^2\sum_{ik} \vert \mathcal{Z}_{ibk} \vert^2 + \frac{2}{p} \Vert \mathbf{T}^{\text{H}} \Vert^2 \cdot\Vert \tilde{\mathbf{T}}\Vert_{\infty}^2 \sum_{ijk} \vert \mathcal{Z}_{ijk} \vert^2 \cdot \Vert \xi_j^{\text{H}} \star \mathcal{P}_{\mathcal{V}} \star \xi_b \Vert_{\text{F}}^2 \\
                            = & \frac{2}{p} \Vert \mathcal{Z} \Vert_{\infty, 2}^2 \max_{ijk} \Vert\mathit{T}^{\text{H}}\mathcal{P}_{\mathcal{U}}\mathit{\tilde{T}}(\mathdutchcal{e}_{ijk}) \Vert_{\text{F}}^2 + \frac{2}{p} \Vert \mathbf{T}^{\text{H}} \Vert^2\cdot\Vert \tilde{\mathbf{T}}\Vert_{\infty}^2 \sum_{j} \Vert \xi_j^{\text{H}} \star \mathcal{P}_{\mathcal{V}} \star \xi_b \Vert_{\text{F}}^2 \sum_{ik} \vert \mathcal{Z}_{ijk} \vert^2 \\
                            \leq & \frac{2}{p} (\max_{ijk} \Vert\mathit{T}^{\text{H}}\mathcal{P}_{\mathcal{U}}\mathit{\tilde{T}}(\mathdutchcal{e}_{ijk}) \Vert_{\text{F}}^2 + \Vert \mathbf{T}^{\text{H}} \Vert^2\cdot\Vert \tilde{\mathbf{T}}\Vert_{\infty}^2 \Vert \mathcal{P}_{\mathcal{V}} \star \xi_b \Vert_{\text{F}}^2)\cdot\Vert \mathcal{Z} \Vert_{\infty, 2}^2 \\
                            \leq & \frac{2}{p} (\kappa^2(\mathbf{T})\frac{\nu r}{n_1} + \Vert \mathbf{T} \Vert^2 \rho(\mathbf{T})\frac{\mu r}{n_2})\Vert \mathcal{Z} \Vert_{\infty, 2}^2.
                        \end{aligned}
                    \end{equation}
        
                    Since $\Vert \mathbb{E}[ \sum_{ijk} \texttt{vec}(\mathcal{C}_{ijk}) \texttt{vec}(\mathcal{C}_{ijk})^{\text{H}}] \Vert$ is the spectral norm of a matrix. When $\{\texttt{vec}(\mathcal{C}_{ijk})\}_{ijk}$ share the same basis the spectral norm takes the maximum, thus $\Vert \mathbb{E}[ \sum_{ijk} \texttt{vec}(\mathcal{C}_{ijk}) \texttt{vec}(\mathcal{C}_{ijk})^{\text{H}}] \Vert \leq \vert \mathbb{E}[ \sum_{ijk} \texttt{vec}(\mathcal{C}_{ijk})^{\text{H}} \texttt{vec}(\mathcal{C}_{ijk})]$.

                    Using the extension of Theorem \ref{theorem:Bernstein}:
                    \begin{equation}
                        \begin{aligned}
                            & \Vert \mathit{T}^{\text{H}} \mathcal{P}_{\mathbb{S}} \mathit{\tilde{T}}(\mathcal{I} - \frac{1}{p}\mathcal{S}_{\Omega}) \mathcal{Z} \Vert_{\infty, 2} \\ 
                            = & 2c\sqrt{\frac{2(\kappa^2(\mathbf{T})\frac{\nu r}{n_1} + \Vert \mathbf{T} \Vert^2 \rho(\mathbf{T})\frac{\mu r}{n_2})\cdot\log(2\max(n_1, n_2)n_3)}{p}}\Vert \mathcal{Z} \Vert_{\infty, 2} \\
                            + & c \frac{1}{p} \kappa(\mathbf{T})\sqrt{\frac{\nu r (n_1 + n_2)}{n_1 n_2}}\log(2\max(n_1, n_2)n_3)\Vert \mathcal{Z} \Vert_{\infty}
                        \end{aligned}
                    \end{equation}
                    holds with probability at least $1 - (2(n_1 + n_2)n_3)^{1-c}, \forall c > 0$.
        
                    The norm of tensor row can be bounded in the same fashion. Thus if $p \geq c_0 (\kappa^2(\mathbf{T}) + \rho^2(\mathbf{T}))\frac{\lambda r (n_1 + n_2)}{n_1 n_2}\log((n_1 + n_2)N_3)$ with $c$ large enough,
                    \begin{equation}
                        \begin{aligned}
                            \Vert \mathit{T}^{\text{H}} \mathcal{P}_{\mathbb{S}} \mathit{\tilde{T}}(\mathcal{I} - \frac{1}{p}\mathcal{S}_{\Omega}) \mathcal{Z} \Vert_{\infty, 2} & \leq 2c \sqrt{\frac{4}{c_0}}\Vert \mathcal{Z} \Vert_{\infty, 2} + \frac{c}{c_0}\sqrt{\frac{n_1 n_2}{\mu r (n_1 + n_2)}}\Vert \mathcal{Z} \Vert_{\infty} \\ & \leq \frac{1}{2} \Vert \mathcal{Z} \Vert_{\infty, 2} + \frac{1}{2} \sqrt{\frac{n_1 n_2}{\mu r (n_1 + n_2)}}\Vert \mathcal{Z} \Vert_{\infty}
                        \end{aligned}
                    \end{equation}
                    holds w.h.p..
                \end{proof}
        
            \section{Bound Frobenius Norm by Incoherence Condition}
                \label{section:bound_F_norm}
                Now let us bound $\max_{ijk} \Vert \mathcal{P}_\mathbb{S}\mathit{T}(\mathdutchcal{e}_{ijk}) \Vert_{\text{F}}^2$ and $\max_{ijk} \Vert \mathcal{P}_\mathbb{S}\mathit{\tilde{T}}(\mathdutchcal{e}_{ijk}) \Vert_{\text{F}}^2$
                \begin{align}
                    \mathcal{P}_\mathbb{S}\mathit{T}(\mathdutchcal{e}_{ijk}) = \mathcal{P}_{\mathcal{U}} \star \mathit{T}(\xi_i \star \zeta_k \star \xi_j^{\text{H}}) + \mathit{T}(\xi_i \star \zeta_k \star \xi_j^{\text{H}}) \star \mathcal{P}_{\mathcal{V}} - \mathcal{P}_{\mathcal{U}} \star \mathit{T}(\xi_i \star \zeta_k \star \xi_j^{\text{H}}) \star \mathcal{P}_{\mathcal{V}} 
                \end{align}
    
                \begin{equation}
                    \begin{aligned}
                        & \Vert \mathcal{P}_\mathbb{S}\mathit{T}(\mathdutchcal{e}_{ijk}) \Vert_{\text{F}}^2 \\
                        = & \langle \mathcal{P}_{\mathcal{U}} \star \mathit{T}(\xi_i \star \zeta_k \star \xi_j^{\text{H}}), \mathit{T}(\xi_i \star \zeta_k \star \xi_j^{\text{H}}) \rangle + \langle \mathit{T}(\xi_i \star \zeta_k \star \xi_j^{\text{H}}) \star \mathcal{P}_{\mathcal{V}}, \mathit{T}(\xi_i \star \zeta_k \star \xi_j^{\text{H}}) \rangle \\
                        - & \langle \mathcal{P}_{\mathcal{U}} \star \mathit{T}(\xi_i \star \zeta_k \star \xi_j^{\text{H}}) \star \mathcal{P}_{\mathcal{V}}, \mathit{T}(\xi_i \star \zeta_k \star \xi_j^{\text{H}}) \rangle
                    \end{aligned}
                \end{equation}
    
                \begin{equation}
                    \begin{aligned}
                        \langle \mathcal{P}_{\mathcal{U}} \star \mathit{T}(\xi_i \star \zeta_k \star \xi_j^{\text{H}}), \mathit{T}(\xi_i \star \zeta_k \star \xi_j^{\text{H}}) \rangle
                        = & \langle \mathcal{U}^{\text{H}} \star \mathit{T}(\xi_i \star \zeta_k \star \xi_j^{\text{H}}), \mathcal{U}^{\text{H}} \star \mathit{T}(\xi_i \star \zeta_k \star \xi_j^{\text{H}}) \rangle \\
                        = & \Vert \mathcal{U}^{\text{H}} \star \xi_i \star \mathit{T}(\zeta_k) \Vert_{\text{F}}^2
                    \end{aligned}
                \end{equation}
        
                In the same way, 
                \begin{equation}
                    \begin{aligned}
                        & \langle \mathit{T}(\xi_i \star \zeta_k \star \xi_j^{\text{H}}) \star \mathcal{P}_{\mathcal{V}}, \mathit{T}(\xi_i \star \zeta_k \star \xi_j^{\text{H}}) \rangle \\
                        = & \Vert \mathcal{V}^{\text{H}} \star \xi_j \star \mathit{T}(\zeta_k) \Vert_{\text{F}}^2~(\text{here the conjugate does not change the value}).
                    \end{aligned}
                \end{equation}

                And we have
                \begin{align}
                    \Vert \mathcal{P}_\mathbb{S}\mathit{T}(\mathdutchcal{e}_{ijk}) \Vert_{\text{F}}^2 \leq \Vert \mathcal{U}^{\text{H}} \star \xi_i \star \mathit{T}(\zeta_k) \Vert_{\text{F}}^2 + \Vert \mathcal{V}^{\text{H}} \star \xi_j \star \mathit{T}(\zeta_k) \Vert_{\text{F}}^2.
                \end{align}
                When the energy of $\mathcal{U}(:, :, k)$ is uniformly distributed among its entries, $\max_{ijk} \Vert \mathcal{U}^{\text{H}} \star \xi_i \star \mathit{T}(\zeta_k) \Vert_{\text{F}}^2$ takes the minimum as $\frac{r}{n_1} \Vert \mathbf{T} \Vert_{1\mapsto2}^2$; When $\mathcal{U}(:, :, k)$ contains an identity tensor, i.e., the most non-uniformly distributed, $\max_{ijk} \Vert \mathcal{U}^{\text{H}} \star \xi_i \star \mathit{T}(\zeta_k) \Vert_{\text{F}}^2$ takes the maximum as $\Vert \mathbf{T} \Vert_{1\mapsto2}^2$. Applying tensor incoherence conditions: 
                \begin{align}
                    & \Vert \mathcal{U}^{\text{H}} \star \xi_i \star \mathit{T}(\zeta_k) \Vert_{\text{F}}^2 \leq \frac{\nu r}{n_1} \Vert \mathbf{T} \Vert_{1\mapsto2}^2 \nonumber \\
                    & \Vert \mathcal{V}^{\text{H}} \star \xi_j \star \mathit{T}(\zeta_k) \Vert_{\text{F}}^2 \leq \frac{\nu r}{n_2} \Vert \mathbf{T} \Vert_{1\mapsto2}^2 \nonumber
                \end{align}
    
                And we have:
                \begin{align}
                    \Vert \mathcal{P}_\mathbb{S}\mathit{T}(\mathdutchcal{e}_{ijk}) \Vert_{\text{F}}^2 & \leq \frac{\nu r}{n_1}\Vert \mathbf{T} \Vert_{1\mapsto2}^2 + \frac{\nu r}{n_2}\Vert \mathbf{T} \Vert_{1\mapsto2}^2 = \frac{\nu r(n_1 + n_2)}{n_1 n_2} \Vert \mathbf{T} \Vert_{1\mapsto2}^2\\
                    \Vert \mathcal{P}_\mathbb{S}\mathit{\tilde{T}}(\mathdutchcal{e}_{ijk}) \Vert_{\text{F}}^2 & \leq \frac{\nu r (n_1 + n_2)}{n_1 n_2} \Vert \tilde{\mathbf{T}} \Vert_{1\mapsto2}^2
                \end{align}

        \section{Additional Experiment Results on Video Data}
            \label{section:exp_vid}
            \begin{figure}[!h]
                \begin{subfigure}{0.45\textwidth}
                    \centering
                    \includegraphics[width=0.96\textwidth,height=0.6\textwidth]{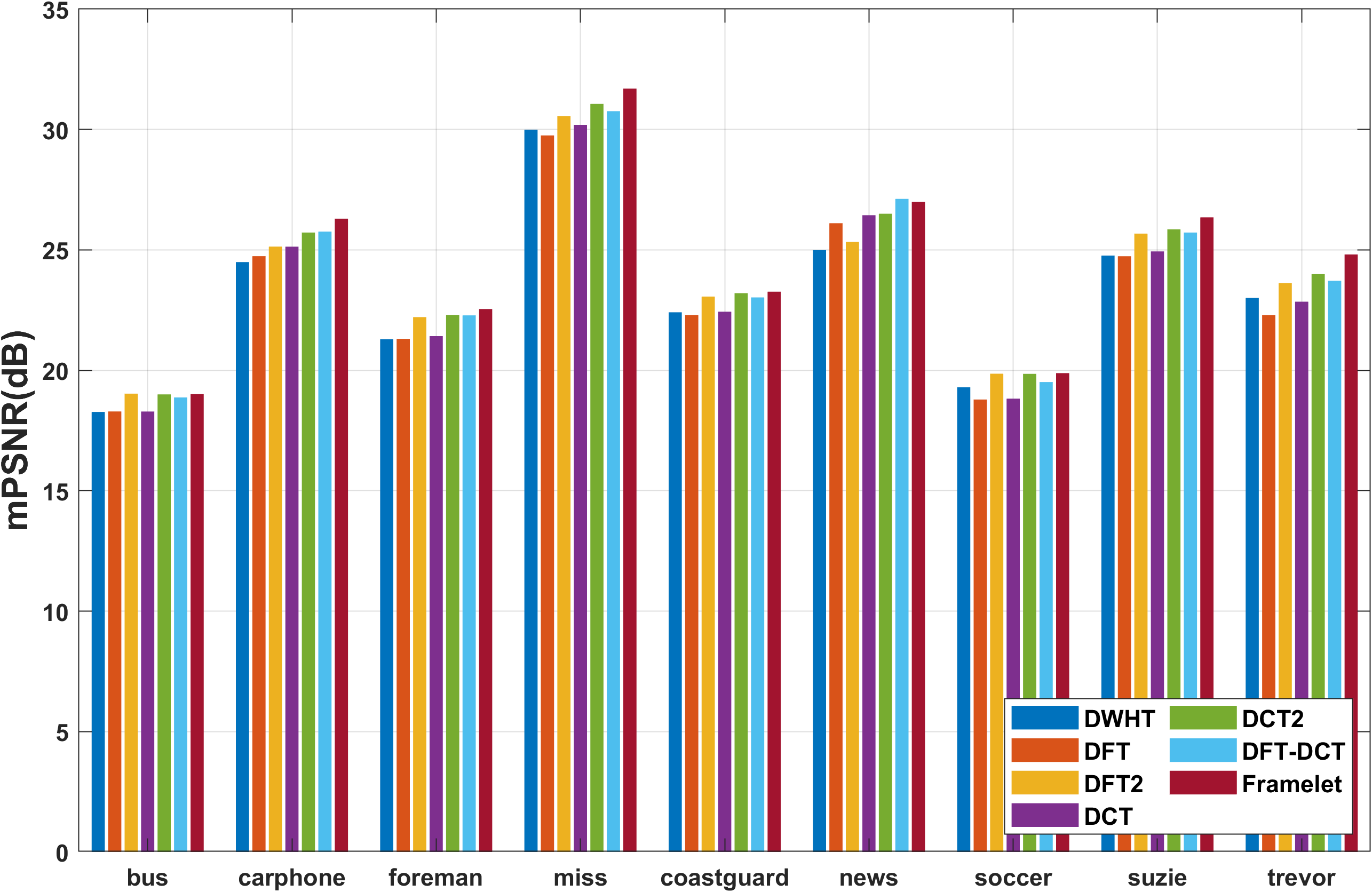}
                    \label{fig:yuv_PSNR}
                \end{subfigure}
                \hfill
                \begin{subfigure}{0.45\textwidth}
                    \centering
                    \includegraphics[width=0.96\textwidth,height=0.6\textwidth]{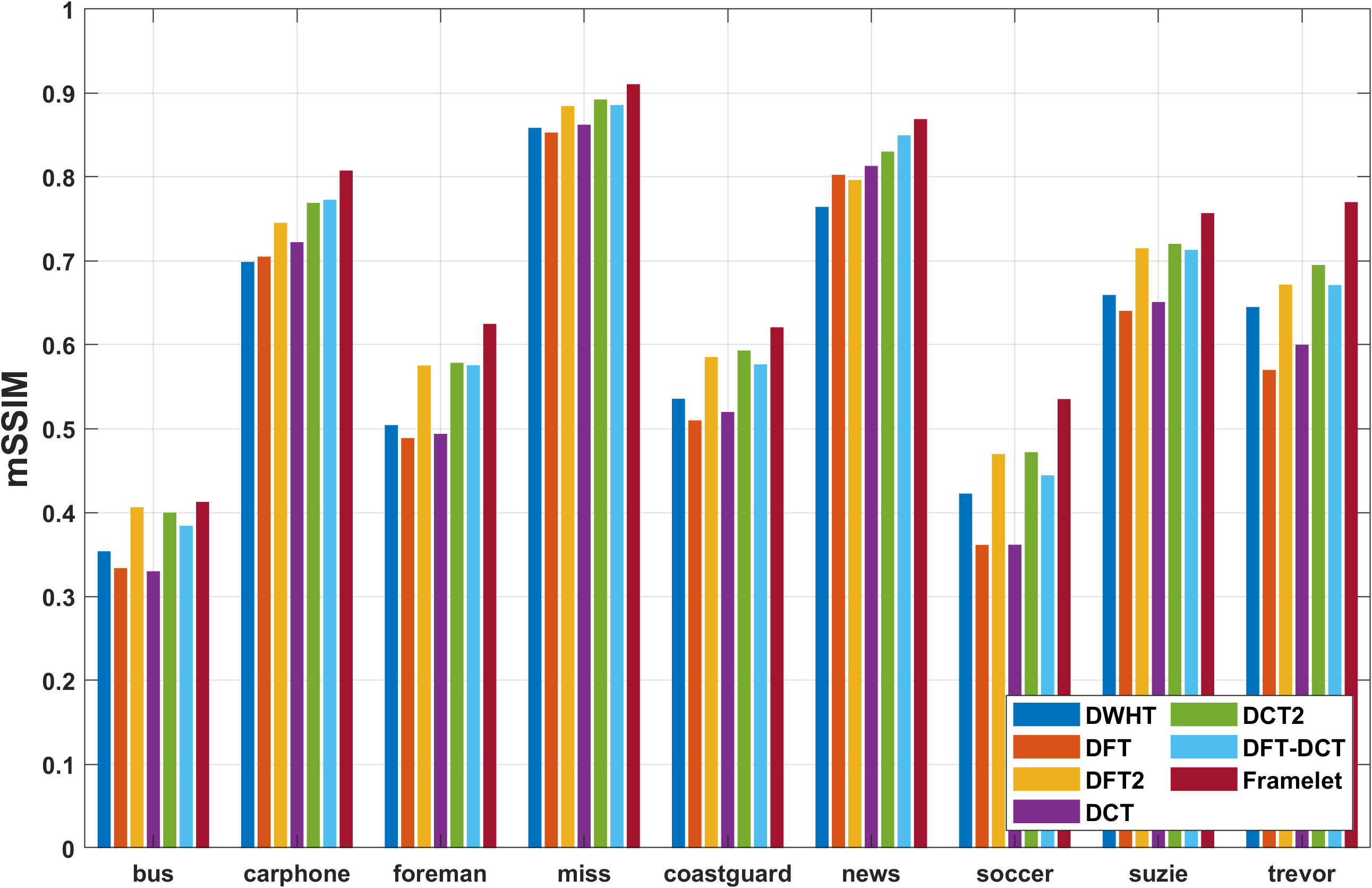}
                    \label{fig:yuv_SSIM}
                \end{subfigure}
                \caption{\scriptsize mPSNR (left) and mSSIM (right) results on video data with SR=0.05.}
                \label{fig:yuv_ind_p=0.05}
            \end{figure}

        \section{Analyzing the Overall Sampling Rate}
            \label{section:ananlyzing_sampling_rate}
            To arrive at a relatively concise form of Theorem \ref{theorem:exact_completion}, several Scaling techniques have been used (e.g. the introduction of incoherence parameters), which partially hinders our insights to see which terms are affecting the sampling rate $p$. In this section, removing redundant (while formally useful) scalings, we directly study the functional terms through the construction of the dual certificate \ref{section:constructing_dual_certificate}.

            From the proof of Proposition \ref{proposition:constructing_dual_certificate}, one can observe that the satisfaction of the conditions of dual certificate \eqref{eq:cond:approximate_isometry}, \eqref{eq:cond:inner_product_condition}, \eqref{eq:cond:low_distortion} and \eqref{eq:cond:spectral_norm_cond} put requirements on the sampling rate $p$. Among them, \eqref{eq:cond:inner_product_condition} is satisfied by the structure of the dual certificate. \eqref{eq:cond:approximate_isometry} and \eqref{eq:cond:low_distortion} are essentially the same, only that \eqref{eq:cond:low_distortion} is harsher since it is upon the step sampling rate $p_i$ ($p = l\cdot p_i$). Thus, we only need to study the requirements posed by \eqref{eq:cond:low_distortion} and \eqref{eq:cond:spectral_norm_cond}.
            \begin{itemize}
                \item To satisfy \eqref{eq:cond:low_distortion}: by \eqref{eq:prob_approximate_isometry}, one can see that $p_i$ is propotional to the energy terms:
                \begin{align*}
                    & p_i \sim \kappa(\mathbf{T}) \cdot\max_{ijk}\Vert\mathcal{P}_\mathbb{S}\mathit{T}(\mathdutchcal{e}_{ijk})\Vert_{\text{F}}^2 \vee \Vert\mathcal{P}_\mathbb{S}\tilde{\mathit{T}}(\mathdutchcal{e}_{ijk})\Vert_{\text{F}}^2, \\
                    & p_i \sim \max_{ijk}\Vert\mathcal{P}_\mathbb{S}\mathit{\tilde{T}}(\mathdutchcal{e}_{ijk})\Vert_{\text{F}}\cdot\max_{ijk}\Vert\mathcal{P}_\mathbb{S}\mathit{T}(\mathdutchcal{e}_{ijk})\Vert_{\text{F}}.
                \end{align*}
                Since $\max_{ijk}\Vert\mathcal{P}_\mathbb{S}\mathit{\tilde{T}}(\mathdutchcal{e}_{ijk})\Vert_{\text{F}}\cdot\max_{ijk}\Vert\mathcal{P}_\mathbb{S}\mathit{T}(\mathdutchcal{e}_{ijk})\Vert_{\text{F}} \leq \frac{1}{2} (\max_{ijk}\Vert\mathcal{P}_\mathbb{S}\mathit{\tilde{T}}(\mathdutchcal{e}_{ijk})\Vert_{\text{F}}^2 + \max_{ijk}\Vert\mathcal{P}_\mathbb{S}\mathit{T}(\mathdutchcal{e}_{ijk})\Vert_{\text{F}}^2)$, thus the satisfaction of \eqref{eq:cond:low_distortion} only requires
                \begin{align}
                    p_i \sim \kappa(\mathbf{T}) \cdot\max_{ijk}\Vert\mathcal{P}_\mathbb{S}\mathit{T}(\mathdutchcal{e}_{ijk})\Vert_{\text{F}}^2 \vee \Vert\mathcal{P}_\mathbb{S}\tilde{\mathit{T}}(\mathdutchcal{e}_{ijk})\Vert_{\text{F}}^2.
                \end{align}

                \item To satisfy \eqref{eq:cond:spectral_norm_cond}: there are three lemmas used when proving \eqref{eq:cond:spectral_norm_cond}. We need to study the requirements posed by the assumptions of Lemma \eqref{lemma:bounding_infinity_norm} and \eqref{lemma:bounding_infinity_2_norm} (Lemma \eqref{lemma:bounding_spectral_norm} already holds).
                \begin{itemize}
                    \item To satisfy the assumption of Lemma \eqref{lemma:bounding_infinity_norm}: by \eqref{eq:prob_bounding_infinity_norm}, we can observe that:
                    \begin{align}
                        & p_i \sim \max_{ijk}\Vert\mathit{T}^{\dag}\mathcal{P}_\mathbb{S}\mathit{T}(\mathdutchcal{e}_{ijk})\Vert_{\text{F}}^2, \\
                        & p_i \sim \max_{ijk}\Vert\mathcal{P}_\mathbb{S}\mathit{\tilde{T}}(\mathdutchcal{e}_{ijk})\Vert_{\text{F}}\cdot\max_{ijk}\Vert\mathcal{P}_\mathbb{S}\mathit{T}(\mathdutchcal{e}_{ijk})\Vert_{\text{F}}.
                    \end{align}

                    \item To satisfy the assumption of Lemma \eqref{lemma:bounding_infinity_2_norm}: there are two terms affecting $p_i$ in this lemma: $\max_{ijk}\Vert\mathit{T}^{\text{H}}\mathcal{P}_\mathcal{U}\tilde{\mathit{T}}(\mathdutchcal{e}_{ijk})\Vert_{\text{F}}^2$ and $\Vert\mathbf{T}\Vert^2\Vert\tilde{\mathbf{T}}\Vert_{\infty}^2\Vert\mathcal{P}_{\mathcal{V}}\star\xi_b\Vert_{\text{F}}^2$, both coming from $\max_{ijk}\Vert\mathit{T}^{\text{H}}\mathcal{P}_\mathcal{U}\tilde{\mathit{T}}(\mathdutchcal{e}_{ijk})\Vert_{\text{F}}^2$. The reason why for slim transforms the first term is smaller is the same as illustrated in Section \ref{section:brief_analysis}. For the second term, which has been scaled to introduce incoherence parameters, since we require no structures on $\mathbf{T}$, it can not be simplified further to offer more insights. However, we can still directly analyze it: for a combinational transform $\mathbf{T}_{\text{c}} = \frac{1}{\sqrt{n}}[\mathbf{T}_1^{\text{T}}, \mathbf{T}_2^{\text{T}}, \hdots, \mathbf{T}_M^{\text{T}}]^{\text{T}}$, $\Vert\mathbf{T}_{\text{c}}\Vert^2\Vert\tilde{\mathbf{T}}_{\text{c}}\Vert_{\infty}^2 = \max\left\{\Vert\mathbf{T}_m\Vert^2\Vert\tilde{\mathbf{T}}_m\Vert_{\infty}^2\right\}$; also $\Vert\mathcal{P}_{\mathbb{S}_{\text{c}}}\star\xi_b\Vert_{\text{F}}^2 = \text{mean}\left\{\Vert\mathcal{P}_{\mathbb{S}_m}\star\xi_b\Vert_{\text{F}}^2\right\}$. Thus, when $\mathbf{T}_m$ contains not too less entries, $\Vert\mathbf{T}\Vert^2\Vert\tilde{\mathbf{T}}\Vert_{\infty}^2\Vert\mathcal{P}_{\mathbb{S}}\star\xi_b\Vert_{\text{F}}^2$ is approximately the same for slim and square transforms. Therefore, the differentiating requirements would be:
                    \begin{align}
                        & p_i \sim \max_{ijk}\Vert\mathit{T}^{\text{H}}\mathcal{P}_\mathcal{U}\tilde{\mathit{T}}(\mathdutchcal{e}_{ijk})\Vert_{\text{F}}^2 \\
                        & p_i \sim \max_{ijk}\Vert\mathit{T}^{\text{H}}\mathcal{P}_\mathcal{V}\tilde{\mathit{T}}(\mathdutchcal{e}_{ijk})\Vert_{\text{F}}^2
                    \end{align}

                    \item To satisfy the requirements posed by $\Vert \mathit{T}^{\text{H}}(\mathcal{U}\star\mathcal{V}^{\text{H}})\Vert_{\infty}$ and $\Vert \mathit{T}^{\text{H}}(\mathcal{U}\star\mathcal{V}^{\text{H}})\Vert_{\infty, 2}^2$: to finally satisfy \eqref{eq:cond:spectral_norm_cond}, the requirements are:  
                    \begin{align}
                        & p_i \sim \Vert\tilde{\mathbf{T}}\Vert_{\infty}\cdot\Vert \mathit{T}^{\text{H}}(\mathcal{U}\star\mathcal{V}^{\text{H}})\Vert_{\infty}, \\
                        & p_i \sim \Vert\tilde{\mathbf{T}}\Vert_{\infty}^2\cdot\Vert \mathit{T}^{\text{H}}(\mathcal{U}\star\mathcal{V}^{\text{H}})\Vert_{\infty, 2}^2.
                    \end{align}
                \end{itemize}
            \end{itemize}
            Thus, the sampling rate is proportional to the following terms:
            \begin{equation}
                \begin{aligned}
                    p \sim 
                    \begin{cases}
                        & \kappa(\mathbf{T}) \cdot \max_{ijk}(\Vert\mathcal{P}_\mathbb{S}\mathit{T}(\mathdutchcal{e}_{ijk})\Vert_{\text{F}}^2 \vee \Vert\mathcal{P}_\mathbb{S}\tilde{\mathit{T}}(\mathdutchcal{e}_{ijk})\Vert_{\text{F}}^2) \\
                        & \max_{ijk}\Vert\mathit{T}^{\dag}\mathcal{P}_\mathbb{S}\mathit{T}(\mathdutchcal{e}_{ijk})\Vert_{\text{F}}^2 \\
                        & \max_{ijk}\Vert\mathit{T}^{\text{H}}\mathcal{P}_\mathcal{U}\tilde{\mathit{T}}(\mathdutchcal{e}_{ijk})\Vert_{\text{F}}^2 \\
                        & \max_{ijk}\Vert\mathit{T}^{\text{H}}\mathcal{P}_\mathcal{V}\tilde{\mathit{T}}(\mathdutchcal{e}_{ijk})\Vert_{\text{F}}^2 \\
                        & \Vert\tilde{\mathbf{T}}\Vert_{\infty}\cdot\Vert \mathit{T}^{\text{H}}(\mathcal{U}\star\mathcal{V}^{\text{H}})\Vert_{\infty}, \\
                        & \Vert\tilde{\mathbf{T}}\Vert_{\infty}^2\cdot\Vert \mathit{T}^{\text{H}}(\mathcal{U}\star\mathcal{V}^{\text{H}})\Vert_{\infty, 2}^2.
                    \end{cases}
                \end{aligned}
                \label{eq:sampling_rate-energy}
            \end{equation}
            We use the same settings in Section \ref{section:brief_analysis} and conducted the Monte-Carlo experiment for 100 times to plot the empirical cumulative distribution of the terms in \eqref{eq:sampling_rate-energy} for two random unitary transforms RUT1 and RUT2, and the combined transform. 

            The distributions of these terms 
            have been shown in Figure \ref{fig:sampling_rate_energy}. One can observe that, except that in Figure \ref{fig:v1} and \ref{fig:v69} the factors of the slim transform are marginally larger than those of the square transforms, the rest terms of the slim transform are all distributed in a region of smaller values, which results in a smaller sampling rate that we need to achieve exact completion. 

            \begin{figure}[ht]
                \centering
                \begin{subfigure}{0.45\textwidth}
                    \centering
                    \includegraphics[width=\textwidth]{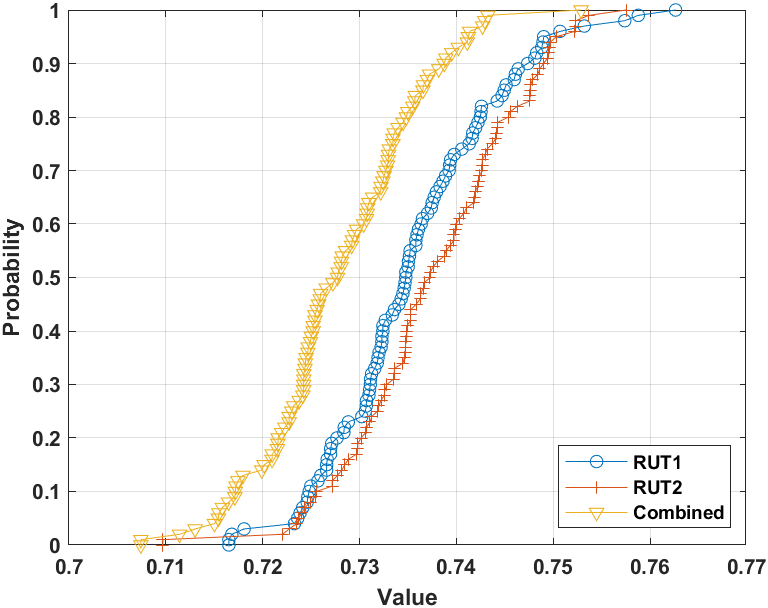}
                    \subcaption{\scriptsize \scriptsize$\kappa(\mathbf{T}) \cdot \max_{ijk}\{\Vert\mathcal{P}_\mathbb{S}\mathit{T}(\mathdutchcal{e}_{ijk})\Vert_{\text{F}}^2 \vee \Vert\mathcal{P}_\mathbb{S}\tilde{\mathit{T}}(\mathdutchcal{e}_{ijk})\Vert_{\text{F}}^2\}$}
                    \label{fig:arr_v_1_1}
                \end{subfigure}
                \hfill
                \begin{subfigure}{0.45\textwidth}
                    \centering
                    \includegraphics[width=\textwidth]{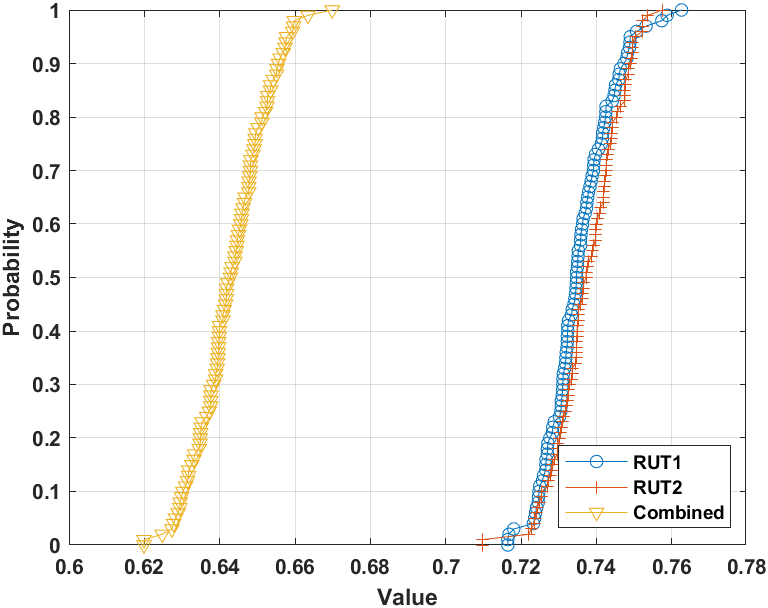}
                    \subcaption{\scriptsize \scriptsize$\max_{ijk}\Vert\mathit{T}^{\dag}\mathcal{P}_\mathbb{S}\mathit{T}(\mathdutchcal{e}_{ijk})\Vert_{\text{F}}^2$}
                    \label{fig:arr_v_1_2}
                \end{subfigure}
                \vfill
                \begin{subfigure}{0.45\textwidth}
                    \centering
                    \includegraphics[width=\textwidth]{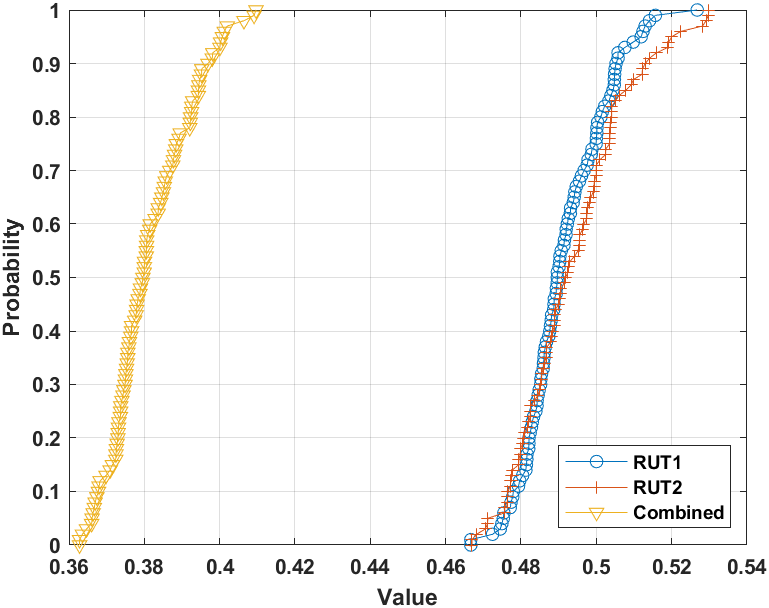}
                    \subcaption{\scriptsize \scriptsize$\max_{ijk}\Vert\mathit{T}^{\text{H}}\mathcal{P}_\mathcal{U}\tilde{\mathit{T}}(\mathdutchcal{e}_{ijk})\Vert_{\text{F}}^2$}
                    \label{fig:arr_v_1_4}
                \end{subfigure}
                \hfill
                \begin{subfigure}{0.45\textwidth}
                    \centering
                    \includegraphics[width=\textwidth]{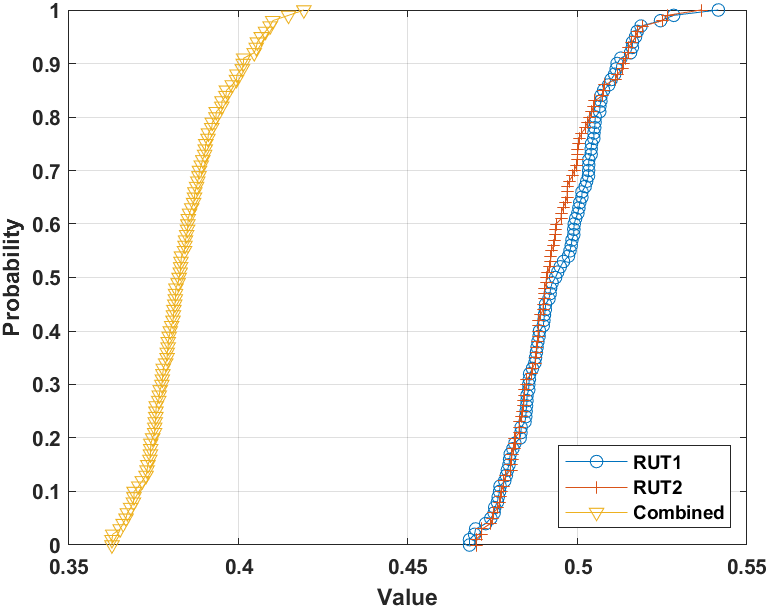}
                    \subcaption{\scriptsize \scriptsize$\max_{ijk}\Vert\mathit{T}^{\text{H}}\mathcal{P}_\mathcal{V}\tilde{\mathit{T}}(\mathdutchcal{e}_{ijk})\Vert_{\text{F}}^2$}
                    \label{fig:arr_v_1_4}
                \end{subfigure}
                \vfill
                \begin{subfigure}{0.45\textwidth}
                    \centering
                    \includegraphics[width=\textwidth]{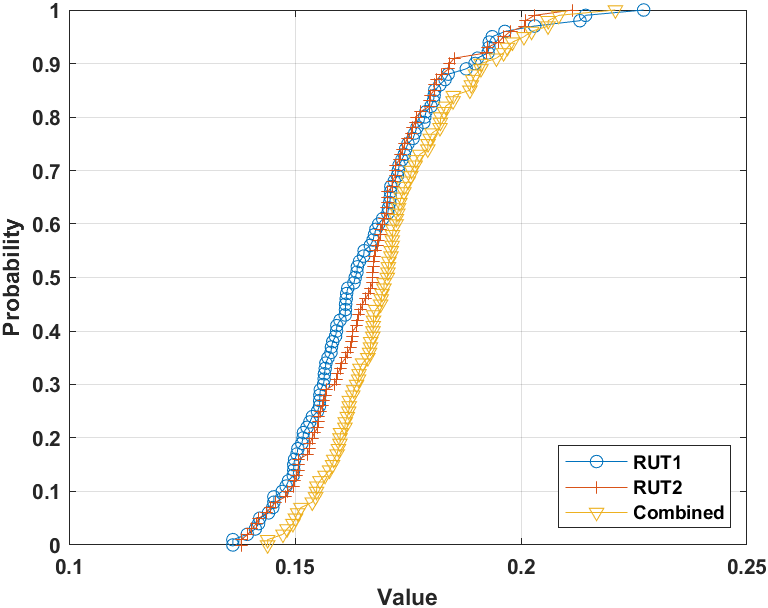}
                    \subcaption{\scriptsize $\Vert\tilde{\mathbf{T}}\Vert_{\infty}\cdot\Vert \mathit{T}^{\text{H}}(\mathcal{U}\star\mathcal{V}^{\text{H}})\Vert_{\infty}$}
                    \label{fig:v1}
                \end{subfigure}
                \hfill
                \begin{subfigure}{0.45\textwidth}
                    \centering
                    \includegraphics[width=\textwidth]{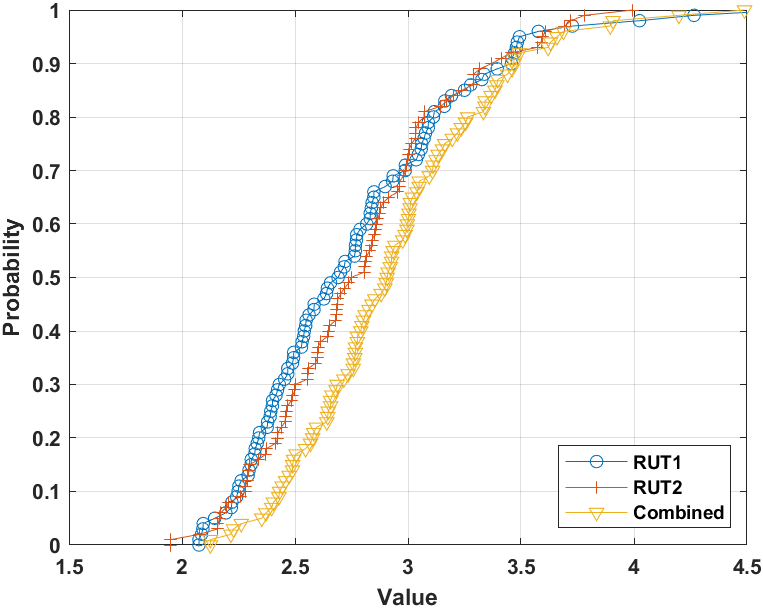}
                    \subcaption{\scriptsize $\Vert\tilde{\mathbf{T}}\Vert_{\infty}^2\cdot\Vert \mathit{T}^{\text{H}}(\mathcal{U}\star\mathcal{V}^{\text{H}})\Vert_{\infty, 2}^2$}
                    \label{fig:v69}
                \end{subfigure}
                \caption{\scriptsize Emperical cumulative distribution of the factors in \eqref{eq:sampling_rate-energy}.}
                \label{fig:sampling_rate_energy}
            \end{figure}
            
        \section{Algorithm to solve Program \ref{eq:program:min_transformed_TNN}}
            \label{section:solving_main_program}
            Program \ref{eq:program:min_transformed_TNN} can be solved via standard ADMM. By introducing the auxiliary variables $\mathcal{Y}, \mathcal{Z} \in \mathbb{C}^{n_1 \times n_2 \times N_3}$, the augmented Lagrangian can be written as:
            \begin{equation}
                \begin{aligned}
                    \mathcal{L}(\mathcal{X}, \mathcal{Y}, \mathcal{Z}) & = \min_{\mathcal{X}, \mathcal{Y}, \mathcal{Z}} \Vert \mathcal{Y} \Vert_{*} + \langle \mathit{T}(\mathcal{X}) - \mathcal{Y}, \mathcal{Z} \rangle + \frac{\alpha}{2} \Vert \mathit{T}(\mathcal{X}) - \mathcal{Y} \Vert_{\text{F}}^2 + \mathcal{I}_{\Omega}(\mathcal{X} - \mathcal{M}) 
                \end{aligned}
            \end{equation}
            where $\mathcal{I}_{\Omega}(\mathcal{A})$ equals to 0 if $\mathcal{P}_{\Omega}(\mathcal{A}) = \mathbf{0}$ and equals to $+\infty$ elsewhere.
            
            \begin{itemize}
                \item Solving $\mathcal{Y}$ sub-problem:
                \begin{align}
                    \mathcal{Y} = \arg\min_{\mathcal{Y}} \Vert \mathcal{Y} \Vert_* + \frac{\alpha}{2} \Vert \mathit{T}(\mathcal{X}) - \mathcal{Y} + \frac{\mathcal{Z}_1}{\alpha} \Vert_{\text{F}}^2 \nonumber
                \end{align}
    
                Similar to \cite{Lu_CVPR} while being more intuitive in form since the tensor nuclear norm is directly defined on the transform domain, define $\mathcal{D}_{\tau}(\cdot)$ as the singular value thresholding operator and it is the proximity operator of \eqref{eq:program_mixed_nuclear_Fro_norm}:
                \begin{align}
                    \mathcal{D}_{\tau}(\mathcal{A}) & = \arg\min_{\mathcal{X}} \tau\Vert\mathcal{X}\Vert_{*} + \frac{1}{2}\Vert\mathcal{X} - \mathcal{A}\Vert_{\text{F}}^2 \label{eq:program_mixed_nuclear_Fro_norm} = \mathcal{U} \star (\mathcal{S} - \tau)_{+} \star \mathcal{V}^{\text{H}}, 
                \end{align}
                where $\mathcal{A} = \mathcal{U} \star \mathcal{S} \star \mathcal{V}^{\text{H}}$.
    
                Using $\mathcal{D}_{\tau}(\cdot)$, $\mathcal{Y}$ can be computed as
                \begin{equation}
                    \mathcal{Y}^{t+1} = \mathcal{D}_{1/ \alpha^t}(\mathit{T}(\mathcal{X}^t) + \frac{\mathcal{Z}^t}{\alpha^t}) \label{eq:computing_Y}
                \end{equation}
    
                \item Solving $\mathcal{X}$ sub-problem:
                \begin{align}
                    \mathcal{X}^{t+1} & = \arg\min_{\mathcal{X}} \frac{\alpha}{2} \Vert \mathit{T}(\mathcal{X}) - \mathcal{Y} + \frac{\mathcal{Z}_1}{\alpha} \Vert_{\text{F}}^2 + \mathcal{I}_{\Omega}(\mathcal{X} - \mathcal{M})\\ & = \mathcal{S}_{\Omega^\text{C}}(\mathit{T}^{\dag}(\mathcal{Y}^{t} - \frac{\mathcal{Z}^t}{\alpha^t})) + \mathcal{S}_{\Omega}(\mathcal{M}) \label{eq:computing_X}
                \end{align}
    
                \item Computing Lagrangian multiplier
                \begin{align}
                    \mathcal{Z}^{t+1} = \mathcal{Z}^{t} + \alpha^t (\mathit{T}(\mathcal{X}^{t+1}) - \mathcal{Y}^{t+1}) \label{eq:computing_Z}
                \end{align}
            \end{itemize}
    
            The whole process is given in Algorithm \ref{alg:solving_by_ADMM}.
            \begin{algorithm}[ht]
                \caption{\scriptsize Solving \eqref{eq:program:min_transformed_TNN} by ADMM}
                \label{alg:solving_by_ADMM}
                \begin{algorithmic}
                \STATE {\bfseries Input:} $\mathcal{X}_0 \in \mathbb{C}^{n_1 \times n_2 \times n_3}, \mathbf{T} \in \mathbb{C}^{N_3 \times n_3}, \Omega, \rho, \alpha_0, \alpha_{\max}$
                \STATE {\bfseries Output:} $\mathcal{X} \in \mathbb{C}^{n_1 \times n_2 \times n_3}$
                \STATE {\bfseries Initialize: $\mathcal{X} = \mathcal{X}_0, \mathcal{Y} = \mathit{T}(\mathcal{X}), \mathcal{Z} = \mathbf{0}, \alpha = \alpha_0$}
                \WHILE{no convergence}
                    \STATE 1. Compute $\mathcal{Y}^{t+1}$ by eq. \eqref{eq:computing_Y};
                    \STATE 2. Compute $\mathcal{X}^{t+1}$ by eq. \eqref{eq:computing_X};
                    \STATE 3. Compute $\mathcal{Z}^{t+1}$ by eq. \eqref{eq:computing_Z};
                    \STATE 4. $\alpha^{t+1} = \min \{\rho \alpha^t, \alpha_{\max}\}$;
                \ENDWHILE
                \end{algorithmic}
            \end{algorithm}

        \section{Generate $\mathcal{M}$ by Alternating Projection}
            \label{section:generate_M}
            In this section, we introduce the alternating projection algorithm to generate $\mathcal{M}$ with a specified $\text{rank}(\mathit{T}(\mathcal{M}))$. First, provided with a randomly initiated tensor $\mathcal{M}_0$ and a preassigned transform $\mathit{T}(\cdot)$, we project $\mathit{T}(\mathcal{M})$ onto a low tubal rank manifold using truncated t-SVD. In matrix case, the truncated SVD gives the projection onto nearest low rank matrix manifold in the sense of the metric induced by the Frobenius norm. The truncated t-SVD also enjoys such property following the same reason. After the projection onto low tubal rank manifold, we project $\mathit{T}(\mathcal{M})$ onto the column space spanned by $\mathbf{T}$ to ensure the preimage $\mathcal{M}$ exists via least square, which also minimizes the difference of the energy before and after the projection (measured by the Frobenius norm). Also we keep the energy of $\mathit{T}(\mathcal{M})$ to avoid iterating to zero. By repeating these three steps we can generated random tensors $\mathcal{M}$ with a specified $\text{rank}(\mathit{T}(\mathcal{M}))$ and the procedure is given in Algorithm \ref{alg:gen_TX_with_rank_r}.
    
            \begin{algorithm}[h]
                \caption{\scriptsize Generate $\mathcal{M}$ with a specified $\text{rank}(\mathit{T}(\mathcal{M}))$}
                \label{alg:gen_TX_with_rank_r}
                \begin{algorithmic}
                \STATE {\bfseries Input:} $\mathcal{M}_0 \in \mathbb{C}^{n_1 \times n_2 \times n_3}, \mathit{T}: \mathbb{C}^{n_3}\mapsto\mathbb{C}^{N_3}, r$
                \STATE {\bfseries Output:} $\mathcal{M} \in \mathbb{C}^{n_1 \times n_2 \times n_3}$
                \STATE {\bfseries Initialize: $\mathcal{M} = \mathcal{M}_0$}
                \WHILE{no convergence}
                    \STATE 1. Project $\mathit{T}(\mathcal{M})$ onto a tubal rank-$r$ manifold using truncated t-SVD and obtains $\mathcal{N}$;
                    \STATE 2. Project $\mathcal{N}$ back to the column space spanned by $\mathit{T}$:
                    $\mathit{T}(\mathcal{M}) = \mathit{T}(\mathit{T}^{\dag}(\mathcal{N}))$;
                    \STATE 3. $\mathit{T}(\mathcal{M}) = \mathit{T}(\mathcal{M}) / \Vert \mathit{T}(\mathcal{M}) \Vert_{\text{F}}$;
                \ENDWHILE
                \end{algorithmic}
            \end{algorithm}
    
            In the experiments, for performance comparison, generating $\mathcal{M}$ with $\text{rank}(\mathit{T}_1(\mathcal{M})) = r_1$ and $\text{rank}(\mathit{T}_2(\mathcal{M})) = r_2$ is needed. Proceeding from Algorithm \ref{alg:gen_TX_with_rank_r}, an algorithm with two sets of projections is designed in Algorithm \ref{alg:gen_2_TX_with_rank_r1_r2}. Note that in Algorithm \ref{alg:gen_2_TX_with_rank_r1_r2}, since it is $\mathcal{M}$ being operated thus we need to project $\mathcal{N}_1$ and $\mathcal{N}_2$ onto the preimage space.
            
            \begin{algorithm}[h]
                \caption{\scriptsize Generate $\mathcal{M}$ with specified $\text{rank}(\mathit{T}_1(\mathcal{M}))$ and $\text{rank}(\mathit{T}_2(\mathcal{M}))$}
                \label{alg:gen_2_TX_with_rank_r1_r2}
                \begin{algorithmic}
                \STATE {\bfseries Input:} $\mathcal{M}_0 \in \mathbb{C}^{n_1 \times n_2 \times n_3}, \mathit{T}_1: \mathbb{C}^{n_3}\mapsto\mathbb{C}^{M_3}, r_1, \mathit{T}_2: \mathbb{C}^{n_3}\mapsto\mathbb{C}^{N_3}, r_2$
                \STATE {\bfseries Output:} $\mathcal{M} \in \mathbb{C}^{n_1 \times n_2 \times n_3}$
                \STATE {\bfseries Initialize: $\mathcal{M} = \mathcal{M}_0$}
                \WHILE{no convergence}
                    \STATE 1. Project $\mathit{T}_1(\mathcal{M})$ onto a tubal rank-$r$ manifold using truncated t-SVD and obtains $\mathcal{N}_1$;
                    \STATE 2. Project $\mathcal{N}_1$ back to the preimage space of the column space spanned by $\mathit{T}_1$:
                    $\mathcal{M} = \mathit{T}_1^{\dag}(\mathcal{N}_1)$;
                    \STATE 3. $\mathcal{M} = \mathcal{M} / \Vert \mathcal{M} \Vert_{\text{F}}$;
                    \STATE 4. Project $\mathit{T}_2(\mathcal{M})$ onto a tubal rank-$r$ manifold using truncated t-SVD and obtains $\mathcal{N}_2$;
                    \STATE 5. Project $\mathcal{N}_2$ back to the preimage space of the column space spanned by $\mathit{T}_2$:
                    $\mathcal{M} = \mathit{T}_2^{\dag}(\mathcal{N}_2)$;
                    \STATE 6. $\mathcal{M} = \mathcal{M} / \Vert \mathcal{M} \Vert_{\text{F}}$;
                \ENDWHILE
                \end{algorithmic}
            \end{algorithm}

    \section{Limitations}
        \label{section:limitations}
        The limitations of this work mainly consist of three parts:
        \begin{itemize}
            \item As mentioned in Remark \ref{remark:design_T}, the algorithm searching for an optimal transform $\mathbf{T}$ is not contained in this work, since the formulation of the optimization problem is coupled complicatedly and requires non-trivial design. Although it is out of the scope of this paper, we will consider it in future work.

            \item In this paper, the tensors that we consider are 3D and we have not considered tensors of higher dimension. In fact, our model and proof can be generalized to higher dimensionality. Such extension has been proposed as ''high order t-SVD'' \cite{high_order_t-SVD}. It generalizes $3$-order t-SVD to $d$-order while preserving the basic properties of t-SVD. In literature, $3$-order t-SVD is most frequently discussed since such extension to higher order is considered to be not too difficult.

            \item Though enjoying many satisfying properties, one frequently mentioned issue for the approach using nuclear norm minimization is the scaling problem since it usually requires time-consuming SVD operation. In this paper, the problem can be alleviated with the technique of parallelization.
            
            The computational complexity of the nuclear norm minimization algorithm in the paper is $\mathcal{O}(\max(n_1, n_2)\cdot\min^2(n_1, n_2)N_3 + n_1n_2n_3N_3)$ per iteration for all transforms. The first part corresponds to the truncated t-SVD operation, which involves computing SVD with complexity $\mathcal{O}(\max(n_1, n_2)\cdot\min^2(n_1, n_2))$ for all $N_3$ slices. It can be parallelized such that SVD of all $N_3$ slices can be calculated simultaneously, which mitigates the external running time burden brought by slim transforms. The second part refers to the domain transform operation.

        \end{itemize}
\clearpage

\end{document}